\theoremstyle{plain}
\newtheorem{theorem}{Theorem}[section]
\newtheorem{lemma}[theorem]{Lemma}
\theoremstyle{definition}
\newtheorem{definition}[theorem]{Definition}
\theoremstyle{remark}
\newtheorem{conjecture}[theorem]{Conjecture}
\newtheorem{example}{Example}
\newtheorem{case}{Case}
\newenvironment{proofsketch}{%
  \proof}{\endproof}
\newcommand*{\argmax}{arg\,max}
\newcommand*{\dprime}{^{\prime\prime}\mkern-1.2mu}
\newcommand*{\baseline}{$^\S$}
\newcommand*{\fairInduce}{$^\dagger$}
\newcommand*{\fairGuarantee}{$^{\ddag}$}
\newcommand*{\fairAgnostic}{$^\star$}
\newcommand{\uuu}{u}
\definecolor{cadetBlue}{HTML}{002366}
\definecolor{RoyalBlue}{HTML}{002366}
  \providecommand\BibTeX{{%
    \normalfont B\kern-0.5em{\scshape i\kern-0.25em b}\kern-0.8em\TeX}}}
\gdef\@copyrightpermission{
  \begin{minipage}{0.3\columnwidth}
   \href{https://creativecommons.org/licenses/by/4.0/}{\includegraphics[width=0.90\textwidth]{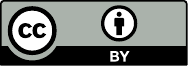}}
  \end{minipage}\hfill
  \begin{minipage}{0.7\columnwidth}
   \href{https://creativecommons.org/licenses/by/4.0/}{This work is licensed under a Creative Commons Attribution International 4.0 License.}
  \end{minipage}
  \vspace{5pt}
}
\begin{document}

\title{Planning to Fairly Allocate: Probabilistic Fairness in the Restless Bandit Setting}

\author{Christine Herlihy}
\authornote{Denotes equal contribution to this research.}
\email{cherlihy@cs.umd.edu}
\orcid{0000-0001-7405-603X}
\author{Aviva Prins}
\authornotemark[1]
\email{aviva@cs.umd.edu}
\orcid{0009-0006-4561-0314}
\affiliation{%
  \institution{University of Maryland}
  \streetaddress{8125 Paint Branch Drive}
  \city{College Park}
  \state{Maryland}
  \country{USA}
  \postcode{20742}
}

\author{Aravind Srinivasan}
\email{asriniv1@umd.edu}
\orcid{0000-0002-0062-3684}
\affiliation{%
  \institution{University of Maryland}
  \streetaddress{8125 Paint Branch Drive}
  \city{College Park}
  \state{Maryland}
  \country{USA}
  \postcode{20742}
}

\author{John P. Dickerson}
\email{johnd@umd.edu}
\orcid{0000-0003-2231-680X}
\affiliation{%
  \institution{University of Maryland}
  \streetaddress{8125 Paint Branch Drive}
  \city{College Park}
  \state{Maryland}
  \country{USA}
  \postcode{20742}
}

\renewcommand{\shortauthors}{Christine Herlihy, Aviva Prins, Aravind Srinivasan, \& John P. Dickerson}

\begin{abstract}
Restless and collapsing bandits are often used to model budget-constrained resource allocation in settings where arms have action-dependent transition probabilities, such as the allocation of health interventions among patients. However,  SOTA Whittle-index-based approaches to this planning problem either do not consider fairness among arms, or incentivize fairness without guaranteeing it. We thus introduce \textsc{ProbFair}, a probabilistically fair policy that maximizes total expected reward and satisfies the budget constraint while ensuring a strictly positive lower bound on the probability of being pulled at each timestep. 
We evaluate our algorithm on a real-world application, where interventions support continuous positive airway pressure (CPAP) therapy adherence among  patients, as well as on a broader class of synthetic transition matrices. We find that \textsc{ProbFair} preserves utility while providing fairness guarantees.
\end{abstract}

\begin{CCSXML}
<ccs2012>
   <concept>
       <concept_id>10010147.10010257.10010293.10010317</concept_id>
       <concept_desc>Computing methodologies~Partially-observable Markov decision processes</concept_desc>
       <concept_significance>500</concept_significance>
       </concept>
   <concept>
       <concept_id>10010147.10010178.10010199.10010202</concept_id>
       <concept_desc>Computing methodologies~Multi-agent planning</concept_desc>
       <concept_significance>300</concept_significance>
       </concept>
   <concept>
       <concept_id>10010405.10010444.10010447</concept_id>
       <concept_desc>Applied computing~Health care information systems</concept_desc>
       <concept_significance>100</concept_significance>
       </concept>
 </ccs2012>
\end{CCSXML}

\ccsdesc[500]{Computing methodologies~Partially-observable Markov decision processes}
\ccsdesc[300]{Computing methodologies~Multi-agent planning}
\ccsdesc[100]{Applied computing~Health care information systems}

\keywords{probabilistic fairness; resource allocation; 
restless multi-armed bandits; collapsing multi-armed bandits; Whittle index; 
intervention planning; sequential decision making; POMDP; healthcare}

\maketitle

\section{Introduction}
\label{sec:intro} 

Restless multi-armed bandits (RMABs) are used to model budget-constrained resource allocation tasks in which a decision-maker must select a subset of arms (e.g., projects, patients, assets) to receive a beneficial intervention at each timestep, while the state of each arm evolves over time in an action-dependent, Markovian fashion. Such problems are common in healthcare, where clinicians may be tasked with monitoring large, distributed patient populations and determining which individuals to expend scarce resources on so as to maximize total welfare. RMABs have been proposed to determine which inmates should be prioritized to receive hepatitis C treatment in U.S. prisons~\citep{ayer2019prioritizing}, and which tuberculosis patients should receive medication adherence support in India~\citep{mate2020collapsing}.

Current state-of-the-art approaches to solving RMABs rely on the indexing work introduced by \citet{whittle1988restless}. While the Whittle index solves an otherwise PSPACE-complete problem in an asymptotically optimal fashion by decoupling arms~\citep{weber1990index}, it fails to provide any guarantees about how pulls will be distributed \emph{among arms}. 

Though the intervention is canonically assumed to be beneficial for \emph{every} arm, the marginal benefit (i.e., relative increase in the probability of a favorable state transition) varies in accordance with each arm's underlying state transition function. Consequently, Whittle index-based maximization of total expected reward \emph{without regard for distributive fairness} empirically allocates all available interventions to a small subset of arms, ignoring the rest~\citep{prins2020incorporating}.

There are many application domains where a bimodal distributive outcome may be perceived as unfair or undesirable by beneficiaries and decision-makers, thus motivating efforts to incentivize or guarantee distributive fairness. In the aforementioned healthcare examples, resource constraints and variation in transition dynamics interact. A practical consequence is that a majority of patients will \emph{never} receive the beneficial intervention(s) in question. This, in turn, means that their clinical outcomes will be strictly worse in expectation than they would be under a policy that guaranteed a non-zero probability of receiving the intervention at each timestep. 

To improve distributive fairness, we explore whether it is possible to modify the Whittle index to guarantee each arm at least one pull per user-defined time interval, but find this to be intractable. We then introduce \textsc{ProbFair}, a state-agnostic policy that maps each arm to a fairness-constraint satisfying, stationary probability distribution over actions that takes the arm's\ transition matrix into account. At each timestep, we then use a dependent rounding algorithm~\citep{srinivasan2001distributions} to sample from this probabilistic policy to produce a budget-constraint satisfying 
discrete action vector. 

We evaluate \textsc{ProbFair} on a randomly generated dataset and a realistic dataset derived from obstructive sleep apnea patients tasked with nightly self-administration of continuous positive airway pressure (CPAP) therapy~\citep{kang2013markov, kang2016modelling}. 

Our core contributions include:
\begin{enumerate}[label=(\textbf{\roman*})]
    \item A novel approach that is both efficiently computable and reward maximizing, subject to the guaranteed satisfaction of budget \emph{and} probabilistic fairness constraints. 
    \item Empirical results demonstrating that \textsc{ProbFair} is competitive vis-\`{a}-vis other fairness-inducing policies, and stable over a range of cohort composition scenarios.
\end{enumerate}

\section{Restless Multi-Armed Bandit Model}
\label{sec:rmabModel}
Here, we give an overview of the restless multi-armed bandit (RMAB) framework, along with our proposed extension, which takes the form of a fairness-motivated constraint. A restless multi-armed bandit consists of \(N\!\in\! \mathbb{N}\) independent arms, each of which evolves over a finite time horizon \(T\!\in\! \mathbb{N}\), according to an associated Markov Decision Process (MDP). Each arm's MDP is characterized by a 4-tuple \((\mathcal{S}, \mathcal{A}, P, r)\) where \(\mathcal{S}\) represents the state space, \(\mathcal{A}\) represents the action space, \(P\) represents an \(\lvert\mathcal{S}\rvert\!\times\! \lvert\mathcal{A}\rvert \!\times\! \lvert\mathcal{S}\rvert\) transition matrix, and \(r: \mathcal{S} \rightarrow \mathbb{R}\) represents a local reward function that maps states to real-valued rewards. Appendix~\ref{sec:notation} summarizes notation; note that $[N]$ denotes the set $\{1, 2, \ldots, N\}$. 

\par \textbf{States, actions, and observability:} We specifically consider a discrete two-state system \(\mathcal{S}\coloneqq\{0,1\}\) where 1 (0) represents being in the ``good'' (``bad'') state, and a set of two possible actions \(\mathcal{A}\coloneqq\{0,1\}\) where 1 represents the decision to select (``pull'') arm $i \in [N]$ at time $t \in [T]$, and 0 represents the choice to be passive (not pull). In the general RMAB setting, each arm's state \(s_t^i\) is observable. We consider the partially-observable extension introduced by \citet{mate2020collapsing}, where arms' states are only observable when they are pulled. Otherwise, an arms' state is replaced with the probabilistic \textit{belief} \(b_t^i\in [0,1]\) that it is in state \(1\). Such partial observability captures uncertainty regarding patient status and treatment efficacy associated with outpatient or 
remotely-administered interventions.

\par \textbf{Transition matrices}: Each arm \(i\) is characterized by a set of transition matrices \(P\), where \(P_{s,s'}^{a} \) represents the probability of transitioning from state \(s\) to state \(s'\) when action \(a\) is taken. We assume \(P\) to be (a) static and (b) known by the agent at planning time. Assumptions (a) and (b) are likely to be violated in practice; however, they provide a useful modeling foundation, and can be modified to incorporate additional uncertainty, such as the requirement that transition matrices must be learned~\citep{jung2019regret}. 
Clinical researchers often use longitudinal data to construct risk-adjusted transition matrices that encode cohort-specific transition probabilities. These can guide patient-level decision-making~\citep{steimle2017markov}. 

Consistent with previous literature, we assume strictly positive transition matrix entries, and impose four \emph{structural constraints}\label{eq:sc}: (a)~$P_{0,1}^0 <  P_{1,1}^0$; (b)~$P_{0,1}^1 <  P_{1,1}^1$; (c)~$P_{0,1}^0 < P_{0,1}^1$; (d)~$P_{1,1}^0 <  P_{1,1}^1$ \citep{mate2020collapsing}. These constraints are application-motivated, and imply that arms are more likely to remain in a ``good'' state than change from a bad state to a good one, and that a pull is helpful when received. In the absence of such constraints, the effect of the intervention may be superfluous or harmful, rather than desirable.

\par \textbf{Objective and constraints}: In the canonical RMAB setting, the agent's goal is to find a policy $\pi^*$ that maximizes total expected reward $\arg \max_\pi \mathbb{E}_{\pi}[R(r(s))]$ while satisfying a \emph{budget constraint}, \(k \ll N \in \mathbb{N}\), which allows the agent to select at most $k$ arms at each timestep. We consider a cumulative reward function, \(R(\cdot)\coloneqq~\sum_{i\in[N]} \sum_{t\in[T]} \beta^{t-1} r(s^i_t)\), for some discount rate $\beta \in [0,1]$, and non-decreasing \(r(s)\). 

We extend this model by introducing a Boolean-valued, distributive fairness-motivated constraint, which may take one of two general forms: 
\begin{enumerate} 
    \item \emph{Time-indexed}: A function $g\left(\cup_{t \in [T]} \{\vec{a}_t\}\right)$ which is satisfied if each arm is pulled at least once within each user-defined time interval $\nu \leq T$ (e.g., at least once every seven days), or a minimum fraction $\psi \in (0,1)$ of times over the entire time horizon~\citep{li2019combinatorial}.
    \item \emph{Probabilistic}: A function $g^\prime( \vec{p}^i | \vec{a}_t  \sim \vec{p}^{i} ~\forall t)$ which operates on the stationary probability vector $\vec{p}^i$, from which discrete actions are drawn, by requiring the probability that each arm receives a pull at any given $t$ to fall within an interval $[\ell, \uuu]$ where $0~<~\ell \leq \frac{k}{N} \leq \uuu \leq 1$. 
\end{enumerate}

\section{Context, Motivation \& Related Work}
In this section, we motivate our ultimate focus on probabilistic fairness by revisiting the distribution of pulls under Whittle-index based policies. We begin by providing background information on the Whittle index, and then proceed to ask: (1) Which arms are ignored, and why does it matter? (2) Is it possible to modify the Whittle index so as to provide a \emph{time-indexed fairness guarantee} for each arm? In response to the latter, we demonstrate that time-indexed fairness guarantees necessitate the coupling of arms, which undermines the indexability of the problem. We then identify prior work at the intersection of algorithmic fairness, constrained resource allocation, and multi-armed bandits, and identify desiderata that characterize our own approach. 

\label{sec:relatedWorks}
\subsection{Background: Whittle Index-based Policies} 
Pre-computing the optimal policy for a given set of restless or collapsing arms is PSPACE-hard in the general case~\citep{papadimitriou1994complexity}. However, as established by \citet{whittle1988restless} and formalized by \citet{weber1990index}, if the set of arms associated with a problem are \emph{indexable}, we can decouple the arms and efficiently solve the problem using an asymptotically-optimal heuristic index policy. 

\par \textbf{Mechanics}: At each timestep $t \in~[T]$, the value of a pull, in terms of both immediate and expected discounted future reward, is computed for each decoupled arm, $i \in  [N]$. This value-computation step relies on the notion of a subsidy, $m$, which can be thought of as the opportunity cost of passivity. Formally, the Whittle index is the subsidy required to make the agent indifferent between \emph{pulling} and \emph{not pulling} arm $i$ at time $t$. (Per Section~\ref{sec:rmabModel}, \(b\) denotes the probabilistic belief that an arm is in state \(s=1\); for restless arms, $b_t^i = s_t^i \in \{0,1\}$).
\begin{equation}
\label{eqn:whittleIndex}
    W(b_t^i) = \inf_m \left\{m \mid V_m(b_t^i, a_t^i = 0) \geq V_m(b_t^i, a_t^i = 1)\right\}
\end{equation}
The value function \(V_m(b)\) represents the maximum expected discounted reward under passive subsidy \(m\) and discount rate $\beta$ for arm \(i\) with belief state $b_t^i \in [0,1]$ at time \(t\):
\begin{equation*} \label{eqn:valueFunc}
    V_m(b_t^i) = \max \begin{cases}
    m + r(b_t^i) + \beta V_m \left(b_{t+1}^i\right) \ \textit{passive} \\
    r(b_t^i)+\beta \left[b_t^i V_m\left(P^1_{1,1}\right) + (1-b_t^i) V_m\left(P^1_{0,1}\right)\right] \\ \hspace{150pt}\textit{active}
    \end{cases}
\end{equation*}
Once the Whittle index has been computed for each arm, the agent sorts the indices, and the $k$ arms with the greatest index values receive a pull at time $t$, while the remaining $N-k$ arms are passive. \citet{weber1990index} give sufficient conditions for \emph{indexability}:
\begin{definition} 
\label{def:indexability}
An arm is indexable if the set of beliefs for which it is optimal to be passive for a given $m$, \(\mathcal{B}^*(m) = \{b \mid \forall \pi \in \Pi^*_m, \pi(b)=0\}\), monotonically increases from \(\emptyset\) to the entire belief space as \(m\) increases from \(-\infty\) to \(+\infty\). An RMAB is indexable if every arm is indexable.
\end{definition}

\par \textbf{Indexability} is often difficult to establish, and computing the Whittle index can be complex~\citep{liu2008indexability}. Prevailing approaches rely on proving the optimality of a \emph{threshold policy} for a subset of transition matrices~\citep{nino2020verification}. A \emph{forward} threshold policy pulls an arm when its state is at or below a given threshold, and makes the arm passive otherwise; the converse is true for a \emph{reverse} threshold policy. \citet{mate2020collapsing} give such conditions for this RMAB setting, when \(r(b)=b\), and provide an algorithm, \textsc{Threshold Whittle}, that is asymptotically optimal for forward threshold-optimal arms. \citet{mate2021risk-aware} expand on this work for any non-decreasing \(r(b)\) and present the \textsc{Risk-Aware Whittle} algorithm. 

\subsection{Motivation: Individual Welfare \& Whittle}
\label{sec:motivation}
\par \textbf{Bimodal allocation}: Existing theory does not offer any guarantees about how the sequence of actions will be distributed over arms under Whittle index-based policies, nor about the probability with which a given arm can expect to be pulled at any particular timestep. \citet{prins2020incorporating} demonstrate that Whittle-based policies tend to allocate all pulls to a small number of arms, neglecting most of the population. We present similar findings in Appendix~\ref{sec:AppWhittleUnFair}.

This bimodal distribution is a consequence of how the Whittle index prioritizes arms. Whittle favors arms for whom a pull is most beneficial to achieving sustained occupancy in the ``good'' state, regardless of whether this results in the same subset of arms repeatedly receiving pulls. While the structural constraints in Sec.~\ref{sec:rmabModel} ensure that a pull is beneficial for every arm, marginal benefit varies. Since reward is a function of each arm's underlying state, arms whose trajectories are characterized by a relative---\emph{but not absolute}---indifference to the intervention are likely to be ignored.

\par \textbf{Ethical implications}: This zero-valued lower bound on the number of pulls an arm can receive aligns with a \emph{utilitarian} approach to distributive justice, in which the decision-maker seeks to allocate resources so as to maximize total expected utility~\citep{bentham1781introduction, marseille2019utilitarianism}. This may be incompatible with competing pragmatic and ethical desiderata, including \emph{egalitarian} and \emph{prioritarian} notions of distributive fairness, in which the decision-maker seeks
to allocate resources equally among arms (e.g., \textsc{Round-Robin}), or prioritize arms considered to be worst-off under the status quo, for some quantifiable notion of \emph{worst-off} that induces a partial ordering over arms~\citep{rawls1971theory, scheunemann2011ethics}. We consider the \emph{worst off} to be arms who would be deprived of algorithmic attention (e.g., not receive any pulls), or, from a probabilistic perspective, would have a \emph{zero-valued lower bound} on the probability of receiving a pull at any given timestep. 

\par \textbf{Why algorithmic attention?} This choice is motivated by our desire to improve \emph{equality of opportunity} (i.e., access to the beneficial intervention) rather than \emph{equality of outcomes} (i.e., observed adherence). The agent directly controls who receives the intervention, but has only indirect control (via actions) over the sequence of state transitions an arm experiences. Additionally, proclivity towards adherence may vary widely in the absence of restrictive assumptions about cohort homogeneity, and focusing on equality of outcomes could thus entail a significant loss of total welfare. 

\par \textbf{Distributive fairness and algorithmic acceptability}: To realize the benefits associated with an algorithmically-derived resource allocation policy, practitioners tasked with implementation must find the policy to be acceptable (i.e., in keeping with their professional and ethical standards), and potential beneficiaries must find participation to be rational. 

With respect to \emph{practitioners}, many clinicians report experiencing mental anguish when resource constraints force them to categorically deny a patient access to a beneficial treatment, and may resort to providing improvised and/or sub-optimal care~\citep{butler2020us}. Providing fairness-aware decision support can improve acceptability~\citep{rajkomar2018ensuring, kelly2019key} and minimize the loss of reward associated with ethically-motivated deviation to a sub-optimal but equitable approach such as \textsc{Round-Robin}~\citep{de2020case, dietvorst2015algorithm}. For \emph{beneficiaries}, we posit that an arm may consider participation rational when it results in an increase in expected time spent in the adherent state relative to non-participation (e.g., due to receiving a strictly positive number of pulls in expectation). 

\subsection{Time-indexed Fairness and Indexability}
\label{sec:fairnessIndexability}
We now consider whether it is possible to modify the Whittle index to guarantee time-indexed fairness while preserving our ability to decouple arms. Unfortunately, the answer is no---we provide an overview here and a detailed discussion in Appendix~\ref{sec:altModifyWhittle}. Recall that structural constraints ensure that when an arm is considered in isolation, the optimal action will \emph{always} be to pull, and that a Whittle-index approach computes the infimum subsidy, $m$, an arm requires to accept passivity at time $t$. Whether or not arm $i$ is \emph{actually} pulled at time $t$ depends on how the subsidy of one arm \emph{compares} to the infimum subsidies required by other arms. Thus, any modification intended to \emph{guarantee} time-indexed fairness must be able to alter the ordering \emph{among} arms, such that any arm $i$ which would otherwise have a subsidy with rank $>k$ when sorted in descending order will now be in the top-$k$ arms. Even if we could construct such a modification for a single arm without requiring time-stamped system information, if \emph{every} arm had this same capability, then a new challenge would arise: we would be unable to distinguish among arms, and arbitrary tie-breaking could again jeopardize fairness constraint satisfaction. 

\subsection{Additional Related Work}

While multi-armed bandit problems are canonically framed from the perspective of the decision-maker, interest in individual and group fairness in this setting has grown in recent years~\citep{joseph2016fairness, chenFairContext, li2019combinatorial}.

In the \emph{stochastic} multi-armed bandit setting, each arm is characterized by a fixed but unknown average reward rather than by an MDP. The decision-maker thus faces uncertainty about the true utility of each arm and must balance exploration (i.e., pulling arms to gain information about their reward distributions) with exploitation (i.e., pulling the optimal arm(s)) to maximize expected reward. \citet{joseph2016fairness} examine fairness among arms in this setting, and introduce a definition that requires the decision-maker to favor (i.e., select) arms with higher average reward over arms with lower average reward, even in the face of uncertainty. As the authors note, this definition \emph{is} consistent with reward maximization, but imposes a cost in terms of per-round regret when \emph{learning} the optimal policy, due to the fact that arms with overlapping confidence intervals are chained until they can be separated with high confidence. 

Prior work in other non-restless bandit settings demonstrates that alternative definitions---i.e., those which center \emph{distributive fairness among arms} as opposed to the principle that arms with similar average rewards should be treated similarly~\citep{dwork2011fairness}, generally entail deviation from optimal behavior. \citet{li2019combinatorial} study the combinatorial \emph{sleeping} bandit setting, in which arms are stochastic but may be unavailable at any given timestep. They introduce the minimum selection fraction constraint, which we adapt and refer to as time-indexed fairness (see Section \ref{sec:rmabModel}). \citet{chenFairContext} consider the \textit{contextual} bandit setting, and propose an algorithm that guarantees each arm a minimum probability of selection at each timestep.

In the \emph{restless} setting that we consider, prior works have tended toward opposite ends of the reward-fairness spectrum by either: (1) redistributing pulls without providing arm-level guarantees~\citep{mate2021risk-aware, li2022towards}; or (2) guaranteeing time-indexed fairness without providing optimality guarantees~\citep{prins2020incorporating}.  Recent work has also considered the adjacent problem of fairness among intervention \emph{providers} (i.e., workers)~\citep{biswasfairness}. In contrast to prior work, we aim to \emph{guarantee} rather than
incentivize fairness, without incurring an exponential dependency on the time horizon or sacrificing optimality guarantees. We thus seek an efficient policy that is reward maximizing, subject to the satisfaction of both budget and
probabilistic fairness constraints.

\section{Methodological Approach}
\label{sec:stationary-policy}
Here we introduce \textsc{ProbFair}, an approximately optimal solution to a relaxed version of the allocation task in which we guarantee the satisfaction of \emph{probabilistic} rather than \emph{time-indexed} fairness, along with the budget constraint. This relaxation is necessary for tractability, as it allows us to precompute a stationary, \emph{state-agnostic} probability vector, $\vec{p}^i$, from which constraint-satisfying discrete actions are drawn. 

\textsc{ProbFair} maps each arm $i$ to an arm-specific, stationary probability distribution over atomic actions, such that for each timestep $t$, $P[a^i_t = 1]~=~p_i$ and $P[a^i_t=0]~=~1-p_i$, where $p_i \in [\ell, \uuu]$ for all $i \in [N]$ and $\sum_i p_i = k$. Here, $\ell$ and $\uuu$ are user-defined fairness parameters satisfying $0 < \ell \leq \frac{k}{N} \leq \uuu \leq 1$, per Section~\ref{sec:rmabModel}. Note that $\ell T$ and $\uuu T$ can be interpreted as lower and upper bounds on the expected number of pulls an arm will receive over the time horizon. 

In Section~\ref{sec:algApproach}, we describe how to construct the $p_i$'s so as to efficiently approximate our constrained reward-maximization
objective within a multiplicative factor of $(1 - \epsilon)$, for any given constant $\epsilon > 0$. We use a dependent rounding approach detailed in Section~\ref{sec:sampling} to sample from this distribution at each timestep \emph{independently}, to produce a discrete action vector, $\vec{a}_t \in \{0,1\}^N$, which is guaranteed to satisfy the budget constraint, $k$~\citep{srinivasan2001distributions}.

To motivate our approach, note that when we take the union of each arm's stationary probability vector, we obtain a system-level policy, $\pi_{PF}: \{i \mid i \in N\}~\rightarrow~\left[1-p_i, p_i\right]^N$. Regardless of the system's initial state, repeated application of this policy will result in convergence to a steady-state distribution in which (WLOG) arm $i$ is in the adherent state (i.e., state 1) with probability $x_i \in [\ell, \uuu]$, and the non-adherent state (i.e., state 0) with probability $1-x_i \in [0,1]$. 

By definition, for any arm $i$, $x_i$ will satisfy the equation:
\begin{equation}
    x_i\left[(1-p_i)P_{1,1}^{0} + p_iP_{1,1}^{1}\right] + (1-x_i)[(1-p_i)P_{0,1}^{0} + p_iP_{0,1}^{1}] = x_i.
\end{equation}
Thus, $x_i = f_i(p_i)$, where
\begin{equation}
\label{eq:fi}
    f_i(p_i) = \frac{(1-p_i)P_{0,1}^{0} + p_iP_{0,1}^{1}}{1 - (1-p_i)P_{1,1}^{0} - p_iP_{1,1}^{1} + (1-p_i)P_{0,1}^{0} + p_iP_{0,1}^{1}}
\end{equation}

We seek the policy which maximizes total expected reward, where reward is non-decreasing in $s$ (i.e., with time spent in the adherent state). Thus, \textsc{ProbFair} is defined as:
\begin{align}
    \pi_{\textsc{PF}} &= \argmax_{p_i\in [\ell, \uuu]} \sum_i f_i(p_i) \textrm{ s.t. } \sum_i p_i \leq k
\label{eqn:probfair_optimization}
\end{align}
Solving this constrained maximization problem is thus consistent with maximizing the expected number of timesteps each arm will spend in the adherent state, subject to satisfying the budget \emph{and} probabilistic fairness constraints. We emphasize that our construction process takes the transition matrices of each arm $i$ into account via $f_i$ (Equation~\ref{eq:fi}).

\subsection{Computing the $p_i$'s: Algorithmic Approach}
\label{sec:algApproach}
\par \textbf{Overview}: To construct $\pi_{PF}$, we: (1) partition the arms based on the shapes of their respective $f_i$ functions (Eq.~\ref{eq:fi}); (2) perform a grid search over possible ways to allocate the budget, $k$, between the two subsets of arms; (2a) solve each sub-problem to produce a probabilistic policy for the arms in that subset; (2b) compute the total expected reward of the policy; (3) take the argmax over this set of grid search values to determine the approximately optimal budget allocation; and (4) form $\pi_{PF}$ by taking the union over the policies produced by evaluating each sub-problem at its approximately optimal share of the budget. Figure~\ref{fig:probFairVisual} visualizes; the remainder of this section provides technical details.
\begin{figure}[!h]
\centering
   \includegraphics[width=0.95\linewidth]{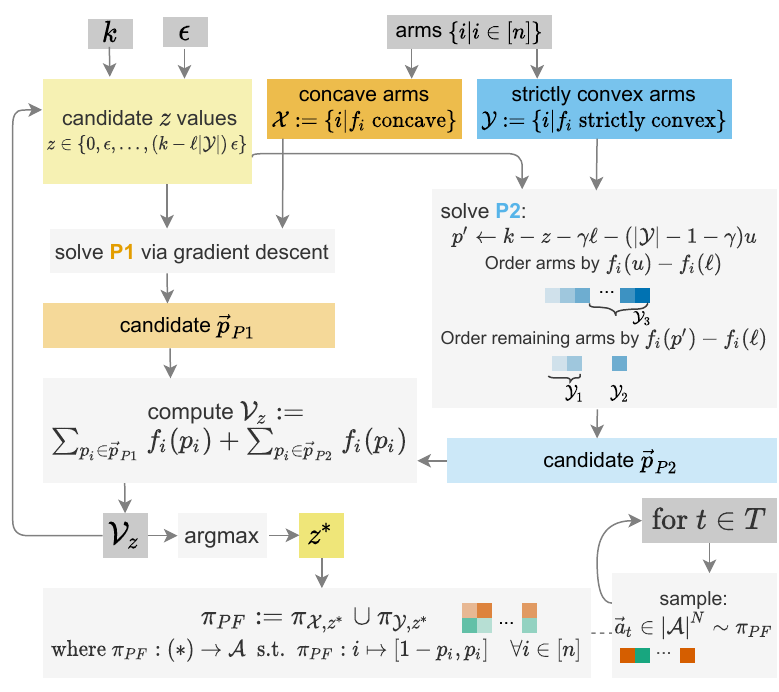}
\caption[TEST]{\textsc{ProbFair}: constructing and sampling from $\pi_{PF}$}
\label{fig:probFairVisual} 
\end{figure}

To begin, we introduce two theorems (see App.~\ref{app:appProbFair} for full proofs):

\begin{restatable}[]{theorem}{fcurvature}\label{thm:fcurvature} 
For every arm $i \in [N]$, $f_i(p_i)$ is either \textbf{concave} or \textbf{strictly convex} in all of $p_i \in [0,1]$.
\end{restatable}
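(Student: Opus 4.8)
The plan is to observe that $f_i$ is a \emph{linear-fractional} (Möbius) function of $p_i$, since both the numerator and the denominator in Eq.~\eqref{eq:fi} are affine in $p_i$. Write the numerator as $n(p)=ap+b$ and the denominator as $\delta(p)=cp+d$, where
\[
a = P^1_{0,1}-P^0_{0,1},\quad b = P^0_{0,1},\quad c = (P^0_{1,1}-P^1_{1,1})+(P^1_{0,1}-P^0_{0,1}),\quad d = 1-P^0_{1,1}+P^0_{0,1}.
\]
Wherever $\delta$ does not vanish, $f_i=n/\delta$ is smooth, and a short computation gives
\[
f_i'(p)=\frac{ad-bc}{\delta(p)^2},\qquad f_i''(p)=\frac{-2c\,(ad-bc)}{\delta(p)^3}.
\]
The whole proof then rests on showing that $\delta(p)$ has a constant (positive) sign on $[0,1]$, i.e., that $f_i$ has no pole in $[0,1]$: once this is established, $\delta(p)^3>0$ throughout, so $\operatorname{sign} f_i''(p)$ equals the fixed sign of the constant $-c\,(ad-bc)$ for every $p\in[0,1]$, and the curvature of $f_i$ cannot change sign on the interval.

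For the positivity of $\delta$, I would use the structural constraints and the strict positivity of the transition entries. The term $(1-p)P^0_{1,1}+pP^1_{1,1}$ is a convex combination of $P^0_{1,1}$ and $P^1_{1,1}$, each strictly less than $1$ (since $P^a_{1,0}=1-P^a_{1,1}>0$), hence it is $<1$; and $(1-p)P^0_{0,1}+pP^1_{0,1}\ge 0$. Therefore $\delta(p)=1-\big[(1-p)P^0_{1,1}+pP^1_{1,1}\big]+\big[(1-p)P^0_{0,1}+pP^1_{0,1}\big]>0$ for all $p\in[0,1]$. (Equivalently: $\delta$ is affine with $\delta(0)=d>0$ and $\delta(1)=1-P^1_{1,1}+P^1_{0,1}>0$, so it is positive on all of $[0,1]$.)

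Given this, the dichotomy is immediate. If $-c\,(ad-bc)\le 0$ then $f_i''\le 0$ on $[0,1]$ and $f_i$ is concave; if $-c\,(ad-bc)>0$ then $f_i''>0$ on $[0,1]$ and $f_i$ is strictly convex. Since $-c\,(ad-bc)$ is a constant, no other case can occur, which proves the claim. Moreover, one can push the bookkeeping a little further to recover the partition used in Section~\ref{sec:algApproach}: substituting the structural constraints into $ad-bc$ collapses it to $(P^1_{0,1}-P^0_{0,1})(1-P^0_{1,1})+P^0_{0,1}(P^1_{1,1}-P^0_{1,1})>0$, so $f_i'>0$ (more pulling means more time in the good state, as expected) and the curvature is governed entirely by $\operatorname{sign}(c)$ with $c=(P^1_{0,1}-P^0_{0,1})-(P^1_{1,1}-P^0_{1,1})$; thus $f_i$ is concave iff the marginal benefit of a pull in state $0$ dominates that in state $1$, and strictly convex otherwise.

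The argument is conceptually short; the only places requiring care are (i) verifying that the denominator $\delta(p_i)$ stays strictly positive on all of $[0,1]$ (equivalently, that $f_i$ has no pole there), which is exactly where the strict positivity of the transition matrix enters, and (ii) the sign computations for $c$ and $ad-bc$ under the structural constraints. I expect (i) to be the main difficulty only in the sense that it is the sole non-routine hypothesis being invoked: once the pole is excluded, the Möbius form forces $f_i$ to have a single, unchanging curvature on $[0,1]$, so there is no \emph{curvature-flipping} case analysis to carry out.
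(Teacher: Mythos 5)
Your proof is correct and follows essentially the same route as the paper's: both treat $f_i$ as a linear-fractional function with an affine numerator and a strictly positive affine denominator on $[0,1]$, so that the sign of $f_i''(p) = -2c\,(ad-bc)/\delta(p)^3$ is a constant independent of $p$. The only difference is cosmetic --- the paper splits into cases on whether $c$ (its $c_4$) and $a$ (its $c_2$) vanish, whereas your single formula subsumes those degenerate cases; your closing observations on $ad-bc>0$ and the sign of $c$ also match the paper's monotonicity theorem and its concavity/convexity criterion.
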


\begin{proofsketch}
WLOG, fix an arm \(i \in [N]\). For notational convenience, let us define the following constants derived from the arm's transition matrix \(P^i\): \(c_1 \coloneqq P_{0,1}^0\), \(c_2 \coloneqq P_{0,1}^1 - P_{0,1}^0\), \(c_3 \coloneqq 1 - P_{1,1}^0 + P_{0,1}^0\), and \(c_4 \coloneqq P_{1,1}^0 - P_{1,1}^1 - P_{0,1}^0 + P_{0,1}^1\). Then
\begin{equation}
    f_i^{\dprime}(p_i) = \frac{2c_4^2 \left(c_1 - \frac{c_2c_3}{c_4}\right)}{(c_3 + c_4p_i)^3}.
\end{equation}
\(c_3 + c_4p_i \in (0,1)\) for all \(p_i \in [0,1]\). Thus, the sign of \(f_i^{\dprime}(p_i)\) is determined by \(c_1 - \frac{c_2c_3}{c_4}\), which does not depend on \(p_i\).
\end{proofsketch}

\begin{restatable}[]{theorem}{fnondecr}\label{thm:fnondecr}
For each arm $i \in [N]$, the structural constraints introduced in Section~\ref{sec:rmabModel} ensure that $f_i(p_i)$ is monotonically non-decreasing in $p_i$ over the interval $[0,1]$.
\end{restatable}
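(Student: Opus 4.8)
The plan is to show directly that $f_i'(p_i) \geq 0$ for every $p_i \in [0,1]$; in fact the argument will give $f_i'(p_i) > 0$, which is strictly stronger than the claimed monotonic non-decrease. First I would put $f_i$ into the Möbius form already used implicitly in the proof sketch of Theorem~\ref{thm:fcurvature}: collecting terms in Eq.~(\ref{eq:fi}) gives $f_i(p_i) = \frac{c_1 + c_2 p_i}{c_3 + c_4 p_i}$ with $c_1 = P_{0,1}^0$, $c_2 = P_{0,1}^1 - P_{0,1}^0$, $c_3 = 1 + P_{0,1}^0 - P_{1,1}^0$, and $c_4 = (P_{0,1}^1 - P_{0,1}^0) - (P_{1,1}^1 - P_{1,1}^0)$. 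It helps to record the probabilistic meaning: writing $\alpha(p_i) = (1-p_i)P_{0,1}^0 + p_i P_{0,1}^1$ for the effective $0\to1$ transition probability and $\gamma(p_i) = 1 - (1-p_i)P_{1,1}^0 - p_i P_{1,1}^1$ for the effective $1\to0$ transition probability, we have $f_i(p_i) = \frac{\alpha(p_i)}{\alpha(p_i) + \gamma(p_i)}$ and $c_3 + c_4 p_i = \alpha(p_i) + \gamma(p_i)$. Structural constraint~(c) makes $\alpha$ strictly increasing in $p_i$ and constraint~(d) makes $\gamma$ strictly decreasing in $p_i$, so increasing $p_i$ simultaneously raises the ``birth rate'' into state $1$ and lowers the ``death rate'' out of it; the theorem is the quantitative form of the statement that this must push stationary mass toward state~$1$.

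Since $f_i$ is a ratio of affine functions of $p_i$, differentiating gives $f_i'(p_i) = \frac{c_2 c_3 - c_1 c_4}{(c_3 + c_4 p_i)^2}$. The denominator $(c_3 + c_4 p_i)^2$ is strictly positive because $c_3 + c_4 p_i = \alpha(p_i) + \gamma(p_i)$ is a sum of two strictly positive quantities (the proof sketch of Theorem~\ref{thm:fcurvature} in fact notes it lies in $(0,1)$, but positivity alone is all we need here). Hence the sign of $f_i'(p_i)$ equals the sign of the numerator $c_2 c_3 - c_1 c_4$, which does not depend on $p_i$, so it suffices to prove $c_2 c_3 - c_1 c_4 > 0$.

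Expanding $c_2 c_3 - c_1 c_4$ and cancelling the common $P_{0,1}^0(P_{0,1}^1 - P_{0,1}^0)$ terms yields the identity $c_2 c_3 - c_1 c_4 = (P_{0,1}^1 - P_{0,1}^0)(1 - P_{1,1}^0) + P_{0,1}^0(P_{1,1}^1 - P_{1,1}^0)$. Each of the four factors on the right is strictly positive: $P_{0,1}^1 - P_{0,1}^0 > 0$ by structural constraint~(c); $1 - P_{1,1}^0 = P_{1,0}^0 > 0$ because the transition matrix has strictly positive entries with rows summing to~$1$; $P_{0,1}^0 > 0$ for the same reason; and $P_{1,1}^1 - P_{1,1}^0 > 0$ by structural constraint~(d). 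Therefore $c_2 c_3 - c_1 c_4 > 0$, so $f_i'(p_i) > 0$ on all of $[0,1]$, i.e.\ $f_i$ is strictly increasing and, a fortiori, monotonically non-decreasing. (Note that only constraints~(c) and~(d) and strict positivity of the transition entries enter; constraints~(a) and~(b) are not needed for monotonicity.)

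There is essentially no conceptual obstacle here once $f_i$ is in Möbius form; the only thing to handle with care is the algebraic bookkeeping, namely checking that $c_1,\dots,c_4$ are exactly the constants used in the Theorem~\ref{thm:fcurvature} proof sketch, performing the cancellation in $c_2 c_3 - c_1 c_4$ correctly, and confirming that $c_3 + c_4 p_i = \alpha(p_i)+\gamma(p_i) > 0$ so that the sign analysis is legitimate. If one prefers to avoid the constants altogether, the same conclusion follows from $f_i'(p_i) = \frac{\alpha'(p_i)\gamma(p_i) - \alpha(p_i)\gamma'(p_i)}{(\alpha(p_i)+\gamma(p_i))^2}$ together with $\alpha'(p_i) = P_{0,1}^1 - P_{0,1}^0 > 0$, $\gamma'(p_i) = -(P_{1,1}^1 - P_{1,1}^0) < 0$, and $\alpha(p_i),\gamma(p_i) > 0$, which makes the numerator a sum of two strictly positive terms.
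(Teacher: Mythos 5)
Your proof is correct and follows essentially the same route as the paper's: write $f_i$ in the Möbius form $\frac{c_1+c_2p_i}{c_3+c_4p_i}$ with the same constants, compute $f_i'(p_i) = \frac{c_2c_3-c_1c_4}{(c_3+c_4p_i)^2}$, and show the numerator is positive using structural constraints (c) and (d) together with strict positivity of the transition entries. Your rearrangement of $c_2c_3-c_1c_4$ as $(P_{0,1}^1-P_{0,1}^0)(1-P_{1,1}^0)+P_{0,1}^0(P_{1,1}^1-P_{1,1}^0)$ is algebraically equivalent to the paper's reduced inequality $(1-P_{1,1}^0)P_{0,1}^1 \geq (1-P_{1,1}^1)P_{0,1}^0$, and your additional observations (strict increase, the $\alpha/\gamma$ interpretation) are correct but not a different method.
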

\begin{proofsketch}
WLOG, fix an arm \(i \in [N]\). Theorem~\ref{thm:fnondecr} follows directly from the second derivative. \(c_1\), \(c_2\), \(c_3\), and \(c_4\) are constants. 
\begin{align}
    \frac{df_i}{dp_i} &= \frac{c_2c_3 - c_1c_4}{(c_3 + c_4p_i)^2}
\end{align}
By the structural constraints $P_{1,1}^0 < P_{1,1}^1$ and $P_{0,1}^0 < P_{0,1}^1$, the numerator is positive; the denominator is always positive.
\end{proofsketch}

By Theorem~\ref{thm:fcurvature}, the arms can be partitioned into two disjoint sets: \(\mathcal{X}=\{i \mid f_i \text{ is concave}\}\) and
\(\mathcal{Y}=\{i \mid f_i \text{ is strictly convex}\}\).
\begin{enumerate}[start=1,label={(\bfseries P\arabic*)}]
    \item maximize $\sum_{i \in \mathcal{X}} f_i(p_i)$ subject to: $p_i \in [\ell, \uuu]$ for all $i \in \mathcal{X}$, and $\sum_{i \in \mathcal{X}} p_i = z$
    \item maximize $\sum_{i \in \mathcal{Y}} f_i(p_i)$ subject to: $p_i \in [\ell, \uuu]$ for all $i \in \mathcal{Y}$, and $\sum_{i \in \mathcal{Y}} p_i = k - z$
\end{enumerate}
Then, \(\pi_{PF}\) is the union of the solutions to \textbf{P1} and \textbf{P2} at the optimal grid search value \(z^* = \argmax_z \sum_{i \in \mathcal{X}} f_i(p_i) + \sum_{i \in \mathcal{Y}} f_i(p_i)\). Algorithm~\ref{alg:probfair} provides pseudocode.

\begin{algorithm}[!h]
\caption{\textsc{ProbFair}}
\label{alg:probfair}
{
\begin{algorithmic}[1]
\Procedure{\textsc{ProbFair}}{$[N], k, \epsilon, \ell, \uuu$}
\State $\mathcal{X} \gets \{i \ | f_i \text{ is \textbf{concave} } \text{in all of } p_i \in [0,1]\}$
\State $\mathcal{Y} \gets \{i \ | f_i \text{ is \textbf{strictly convex} } \text{ in all of } p_i \in [0,1]\}$ 
\State $\texttt{grid\_search\_vals} \gets \{\epsilon j \ | \ j \in [0, k - \ell|\mathcal{Y}|] \}$
\For{$z \in \texttt{grid\_search\_vals}$}
\State $\pi_{\mathcal{X},z} \gets \Call{SolveP1}{\mathcal{X}, k, z, \ell, \uuu}$
\State $\pi_{\mathcal{Y},z} \gets \Call{SolveP2}{\mathcal{Y}, k, z, \ell, \uuu}$
\State $\mathcal{V}_z \gets \sum_{p_i \in \pi_{\mathcal{X},z}} f_i(p_i) + \sum_{p_i \in \pi_{\mathcal{Y},z}} f_i(p_i)$
\EndFor{}
\State $z^* \gets \argmax_z \mathcal{V}_z$
\State $\pi_{PF} \gets \pi_{\mathcal{X},z^*} \cup \pi_{\mathcal{Y},z^*}$ 
\State\Return $\pi_{PF}$
\EndProcedure
\end{algorithmic}}
\end{algorithm}

\textbf{P1} is a concave-maximization problem that can be solved efficiently via gradient descent. The computational complexity is \(O\!\left(\frac{\lvert \mathcal{X}\rvert}{\varepsilon^2}\right)\)~\citep{nesterov2018lectures}. To solve \textbf{P2}, we begin by introducing a lemma that we prove in Appendix~\ref{app:appProbFair}:
\begin{restatable}[]{lem}{Yoptonboundary}\label{lem:P2_opt_on_boundary}
\textbf{P2} has an optimal solution in which $p_i \in (\ell, \uuu)$ for at most one $i \in \mathcal{Y}$.
\end{restatable}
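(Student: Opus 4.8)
The statement is a classic "convex maximization pushes to extreme points" argument, specialized to the polytope cut out by the box constraints $p_i \in [\ell, \uuu]$ and the single linear equality $\sum_{i \in \mathcal{Y}} p_i = k - z$. The plan is to argue that, since each $f_i$ is \emph{strictly convex} on $[0,1]$ (Theorem~\ref{thm:fcurvature}, as $i \in \mathcal{Y}$), at any optimal solution at most one coordinate can lie strictly inside its box $(\ell, \uuu)$; all others must be pinned at $\ell$ or at $\uuu$. Equivalently: suppose for contradiction that an optimal solution $\vec{p}^* = (p_i^*)_{i \in \mathcal{Y}}$ has two indices $i \neq j$ with $p_i^*, p_j^* \in (\ell, \uuu)$, and derive a strictly better feasible point, contradicting optimality.

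\textbf{Key steps.} First I would set up the perturbation: because $p_i^*$ and $p_j^*$ are both in the open interval, there is some $\delta > 0$ such that moving $p_i^* \mapsto p_i^* + t$ and $p_j^* \mapsto p_j^* - t$ keeps both coordinates in $[\ell, \uuu]$ for all $t \in [-\delta, \delta]$, and this perturbation leaves $\sum_{i \in \mathcal{Y}} p_i$ unchanged, so feasibility (box + equality) is preserved; all other coordinates are untouched. Second, I would look at the objective restricted to this line: $\phi(t) = f_i(p_i^* + t) + f_j(p_j^* - t)$. Since $f_i$ and $f_j$ are each strictly convex, $t \mapsto f_i(p_i^* + t)$ and $t \mapsto f_j(p_j^* - t)$ are both strictly convex in $t$, hence $\phi$ is strictly convex on $[-\delta, \delta]$. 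Third, a strictly convex function on a closed interval attains its maximum only at an endpoint, and in particular $\max\{\phi(-\delta), \phi(\delta)\} > \phi(0)$ (strictness rules out the degenerate case where $\phi$ is constant). Thus either $t = \delta$ or $t = -\delta$ yields a feasible point with strictly larger objective value, contradicting the optimality of $\vec{p}^*$. Therefore at most one coordinate of an optimal solution lies in $(\ell, \uuu)$.

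\textbf{Main obstacle.} The argument above shows that among optimal solutions there is no one with two free coordinates; but the lemma asserts the existence of an optimal solution with at most one free coordinate, so I should be a touch careful about whether an optimum is attained at all. Attainment is not an issue here: the feasible set $\{\vec{p} \in [\ell,\uuu]^{|\mathcal{Y}|} : \sum_i p_i = k-z\}$ is compact (closed and bounded) and each $f_i$ is continuous, so by extreme value theorem an optimum exists; the contradiction argument then applies to any optimum, giving the claim for \emph{every} optimal solution (and a fortiori the existence asserted). The only genuinely delicate point is the strictness in step three — I want $\phi$ strictly convex, not merely convex, so that $\phi(0)$ cannot tie with the endpoints; this is exactly where membership of $i,j$ in $\mathcal{Y}$ (rather than $\mathcal{X}$) is essential, and it is worth flagging that if either index belonged to $\mathcal{X}$ (concave $f$) the perturbation could fail to strictly improve. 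One should also note the edge case $k - z$ such that the equality forces all coordinates to a boundary already (e.g.\ $k-z = \ell|\mathcal{Y}|$ or $\uuu|\mathcal{Y}|$), in which case the statement holds vacuously with zero free coordinates.
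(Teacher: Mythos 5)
Your proof is correct, and it rests on the same core idea as the paper's: take two coordinates $p_i, p_j$ strictly inside $(\ell,\uuu)$ and perturb them in opposite directions so that the budget equality and box constraints are preserved, then use strict convexity of $f_i, f_j$ (which holds precisely because $i,j\in\mathcal{Y}$) to show the objective can be strictly increased. Where you differ is in how the strict improvement is certified. The paper expands the objective change in a Taylor series in the perturbation $\epsilon$ and argues in two stages: if the first-order coefficient $f_i'(p_i)-f_j'(p_j)$ is nonzero, choose the sign of $\epsilon$ accordingly; otherwise fall back on the second-order term $(\epsilon^2/2)(f_i''(p_i)+f_j''(p_j))>0$. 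You instead observe that $\phi(t)=f_i(p_i^*+t)+f_j(p_j^*-t)$ is strictly convex on $[-\delta,\delta]$, so $\phi(0)<\tfrac12\phi(-\delta)+\tfrac12\phi(\delta)\le\max\{\phi(-\delta),\phi(\delta)\}$, which kills both cases at once and does not require twice-differentiability or any asymptotic bookkeeping. Your version is the cleaner and slightly more general argument; the paper's version makes the quantitative rate of improvement visible, which is occasionally useful but not needed here. Both proofs correctly invoke compactness to guarantee that a maximizer exists, and your remark that the conclusion actually holds for \emph{every} optimal solution (not just some) is accurate and strengthens the stated lemma.
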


\begin{proofsketch}
Suppose for contradiction there exists some optimal solution $\vec{p}$ with distinct indices $i, j \in \mathcal{Y}$ such that $p_i, p_j \in (\ell, u)$. Let us compare $\vec{p}$ with a perturbed solution $p_i \coloneqq p_i + \epsilon$ and $p_j \coloneqq p_j - \epsilon$. Using a Taylor series expansion, the change in objective must be $(\epsilon^2/2) \cdot (f_i\dprime(p_i) + f_j\dprime(p_j)) + O(\epsilon^3)$. Since $f_i$ and $f_j$ are strictly convex, $f_i\dprime(p_i) + f_j\dprime(p_j) > 0$. Thus, the objective increases regardless of the sign of (tiny) $\epsilon$, a contradiction.
\end{proofsketch}

Given this structure, an optimal solution \(\{p_i^*\mid i \in \mathcal{Y}\}\) will set some number of arms \(\gamma\in \mathbb{Z}^+\) to \(\ell\), at most one arm to \(p' \in (\ell, \uuu]\), and the remaining \(\lvert \mathcal{Y}\rvert - \gamma-1\) arms to \(\uuu\). We represent these subsets by $\mathcal{Y}_1, \mathcal{Y}_2$, and $\mathcal{Y}_3$, respectively. 
Let \(\gamma = \left\lfloor \frac{\lvert \mathcal{Y}\rvert u-(k-z) }{\uuu-\ell} \right\rfloor\), and $p'=k-z-|\mathcal{Y}_1|\ell - |\mathcal{Y}_3|\uuu~\in~(\ell, \uuu]$. Intuitively, when the remaining budget $k-z$ allows us to set all arms in \(\mathcal{Y}\) to $\uuu$, $\gamma=|\mathcal{Y}_1|=0$. Conversely, when there is only enough budget left to satisfy the fairness constraint for arms in $\mathcal{Y}$, $\gamma=|\mathcal{Y}_1|=|\mathcal{Y}|$. With the cardinality of each subset thus established, per Theorem~\ref{thm:solveP2} (see below), we use Algorithm~\ref{alg:solveP2} to optimally partition the arms in $\mathcal{Y}$.

\begin{algorithm}[H]
\caption{\textsc{SolveP2} \\ Note: all sorts are ascending; arrays are zero-indexed.}
\label{alg:solveP2}
\begin{algorithmic}[1] 
\Procedure{\textsc{SolveP2}}{$\mathcal{Y} \subseteq N, k, z, \ell, \uuu$}
\State $\gamma \gets \left\lfloor\frac{|\mathcal{Y}|\uuu -(k-z)}{\uuu - \ell}\right\rfloor$
\State $p^\prime \gets k - z - \gamma\ell - (|\mathcal{Y}|-1-\gamma)\uuu$
\If{$|\mathcal{Y}| - \gamma - 1 > 0$}
\State $\Delta_{\mathcal{Y}} = \texttt{sort}([f_i(\uuu) - f_i(\ell) \ \forall i \in \mathcal{Y}])$
\State $\mathcal{Y}_{3} \gets \{(\Delta_\mathcal{Y})\left[(|\mathcal{Y}|- \gamma-1): \right]\}$ 
\Else{ $\mathcal{Y}_{3} \gets \emptyset $}
\EndIf
\State $\Delta_{\mathcal{Y} \setminus \mathcal{Y}_3} = \texttt{sort}([f_i(p^\prime) - f_i(\ell) \ \forall i \in \mathcal{Y} \setminus \mathcal{Y}_3])$
\State $\mathcal{Y}_1 \gets \{(\Delta_{\mathcal{Y} \setminus \mathcal{Y}_3})\left[:\gamma \right]\}$
\State $\mathcal{Y}_{2} \gets  \{(\Delta_{\mathcal{Y} \setminus \mathcal{Y}_3})\left[\gamma \right]\}$
\State $\pi_{\mathcal{Y}} := i \mapsto
\left(\ell | i \in \mathcal{Y}_1 \right) \lor \left(p^\prime | i \in \mathcal{Y}_2 \right) \lor \left(\uuu | i \in \mathcal{Y}_3\right)$
\State\Return $\pi_\mathcal{Y}$
\EndProcedure
\end{algorithmic}
\end{algorithm}
\setlength{\textfloatsep}{0.2cm}
\setlength{\floatsep}{0.2cm}

\begin{restatable}[]{theorem}{solveconvex}\label{thm:solveP2} 
Alg.~\ref{alg:solveP2} yields the mapping from arms in $\mathcal{Y}$ to subsets in $\{\mathcal{Y}_1,\mathcal{Y}_2,\mathcal{Y}_3\}$ which maximizes $\sum_{i \in \mathcal{Y}} f_i(p_i)$ s.t. \(\sum_{i\in \mathcal{Y}} p_i = k-z\). (See Appendix~\ref{app:appProbFair} for the complete proof).
\end{restatable}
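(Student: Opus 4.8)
The plan is to prove Theorem~\ref{thm:solveP2} in two stages. Stage~I shows that \textbf{P2} admits an optimal solution with the ``bucket'' structure that lines~2--3 of Algorithm~\ref{alg:solveP2} presume: $\gamma$ arms set to $\ell$, exactly one arm set to the prescribed value $p' \in (\ell, u]$, and the remaining $|\mathcal{Y}| - \gamma - 1$ arms set to $u$. Stage~II shows that, conditioned on this structure, the particular \emph{assignment} of arms to buckets produced in lines~4--11 maximizes $\sum_{i \in \mathcal{Y}} f_i(p_i)$.

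Stage~I is a vertex argument. Every $f_i$ with $i \in \mathcal{Y}$ is strictly convex (Theorem~\ref{thm:fcurvature}), so \textbf{P2} maximizes a strictly convex function over the polytope $\{\,p : \ell \le p_i \le u \ \forall i \in \mathcal{Y},\ \sum_{i \in \mathcal{Y}} p_i = k - z\,\}$, whose optimum is attained at a vertex; by Lemma~\ref{lem:P2_opt_on_boundary} a vertex has at most one coordinate strictly inside $(\ell, u)$, all others being $\ell$ or $u$. Feasibility forces $|\mathcal{Y}|\ell \le k - z \le |\mathcal{Y}|u$. Writing $\gamma$ for the number of $\ell$-valued coordinates and $p'$ for the value of the lone interior coordinate, the equality $\gamma\ell + (|\mathcal{Y}| - \gamma - 1)u + p' = k - z$ together with $p' \in (\ell, u]$ determines $\gamma = \lfloor (|\mathcal{Y}|u - (k - z))/(u - \ell) \rfloor$ and the stated $p'$; the degenerate regimes $p' = u$ (no interior arm), $\gamma = 0$ (all at $u$), and $\gamma = |\mathcal{Y}|$ (all at $\ell$) are consistent with these formulas and handled directly.

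For Stage~II, subtract the constant $\sum_{i \in \mathcal{Y}} f_i(\ell)$ from the objective and set $\delta_i := f_i(u) - f_i(\ell)$ and $\eta_i := f_i(p') - f_i(\ell)$; by Theorem~\ref{thm:fnondecr} both are nonnegative and $\eta_i \le \delta_i$. The objective becomes $\sum_{i \in \mathcal{Y}_3} \delta_i + \eta_{i^*}$, where $i^*$ is the unique arm mapped to $p'$ and $\mathcal{Y}_1$ is everything else. It then suffices to prove: (a) some optimal assignment takes $\mathcal{Y}_3$ to be a set of $|\mathcal{Y}| - \gamma - 1$ arms with the largest $\delta_i$; and (b) given such a $\mathcal{Y}_3$, taking $i^* \in \argmax_{i \in \mathcal{Y} \setminus \mathcal{Y}_3} \eta_i$ is optimal. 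Part~(b) is immediate, since the remaining $\gamma$ arms contribute $0$ regardless. Part~(a) I would attack by an exchange argument: from any optimal assignment, if $a \in \mathcal{Y}_3$ and $b \notin \mathcal{Y}_3$ with $\delta_b > \delta_a$, produce a rearrangement that does not decrease the objective --- a plain swap of $a$ and $b$ when $b \in \mathcal{Y}_1$ (which strictly improves by $\delta_b - \delta_a$), and a three-way rotation through $\mathcal{Y}_1$, $\mathcal{Y}_2$, $\mathcal{Y}_3$ when $b = i^*$ --- and iterate until $\mathcal{Y}_3$ is a top-$(|\mathcal{Y}| - \gamma - 1)$ set by $\delta$. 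Lines~4--11 realize exactly the resulting canonical assignment.

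The main obstacle is part~(a) in the case $b = i^*$: a bare swap of $a$ and $i^*$ changes the objective by $(\delta_b - \delta_a) - (\eta_b - \eta_a) = \rho_b - \rho_a$, where $\rho_i := \delta_i - \eta_i = f_i(u) - f_i(p')$, and the sign of $\rho_b - \rho_a$ is not controlled by $\delta_b > \delta_a$ alone. Closing this gap requires leveraging more than monotonicity: strict convexity of each $f_i$ gives the common secant bound $\eta_i \le \frac{p' - \ell}{u - \ell}\,\delta_i$ for every arm, and the fact that $f_i$ is a steady-state probability with range in $[0,1]$ --- together with the structural constraints of Section~\ref{sec:rmabModel} --- limits how sharply $f_i$ can curve over $[\ell, u]$. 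The crux is to combine these so that either such a pair $a, b$ (with $\delta_b > \delta_a$ but $\rho_b \le \rho_a$) cannot occur in the family $\{f_i\}$, or the three-way rotation through $\mathcal{Y}_1$ compensates for it; verifying this, and discharging the tie cases and the boundary regimes above, is where the real work lies.
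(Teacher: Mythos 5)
Your two-stage plan is the same route the paper takes: Stage~I is exactly Lemma~\ref{lem:P2_opt_on_boundary} together with the auxiliary lemma that pins down $\gamma$ and $p'$ (the paper's Lemma~\ref{lem:P2_p_prime_unique}), and Stage~II is the paper's reformulation of \textbf{P2} as a set-assignment problem in the increments $\delta_i = f_i(u)-f_i(\ell)$ and $\eta_i = f_i(p')-f_i(\ell)$ after subtracting the constant $\sum_i f_i(\ell)$; your part~(b) and the $b\in\mathcal{Y}_1$ branch of part~(a) are argued the same way there. The problem is that your part~(a) in the case $b=i^*$ is left as an open ``crux,'' so the proposal as written does not prove the theorem: you correctly compute that the relevant swap changes the objective by $\rho_b-\rho_a$ with $\rho_i = f_i(u)-f_i(p')$, observe that $\delta_b>\delta_a$ does not control its sign, and then only gesture at two possible repairs without carrying either out. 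Note also that the secant bound $\eta_i \le \frac{p'-\ell}{u-\ell}\,\delta_i$ you propose to lean on gives $\rho_b \ge \frac{u-p'}{u-\ell}\,\delta_b$, which combined with only $\rho_a\le\delta_a$ still leaves $\rho_b-\rho_a$ uncontrolled whenever $\delta_b$ and $\delta_a$ are close; closing the gap this way would require a genuinely new inequality about the specific M\"obius family $f_i(p)=(c_1+c_2p)/(c_3+c_4p)$, e.g.\ the monotone coupling $\delta_b\ge\delta_a \Rightarrow \rho_b\ge\rho_a$, which you neither state nor verify.

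For comparison, the paper discharges this entire step with a single observation: by monotonicity, $f_j(p')-f_j(\ell)\le f_j(u)-f_j(\ell)\le f_i(u)-f_i(\ell)$, i.e.\ $\eta_j\le\delta_i$, for every $i\in\mathcal{Y}_3'$ and $j\notin\mathcal{Y}_3'$, and it concludes directly that taking $\mathcal{Y}_3^*$ to be the top $\lvert\mathcal{Y}\rvert-\gamma-1$ arms by $\delta$ is optimal; no convexity, secant, or range properties of the $f_i$ are used. You effectively already have this inequality ($\eta_j\le\delta_j\le\delta_i$). Be aware, however, of what it does and does not cover: it settles the configurations in which the competing solution's interior arm lies \emph{outside} the top-$\delta$ set (there every displaced top-$\delta$ arm sits in $\mathcal{Y}_1$, a plain swap gains $\delta_i-\delta_a\ge 0$, and the $\eta$-slot only improves), but it is silent on exactly the configuration you flag, where the competing solution places a top-$\delta$ arm $b$ in $\mathcal{Y}_2$ and the net change of the exchange is again $\rho_b-\rho_a$. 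So the difficulty you isolate is real, it is the point at which the paper's own argument is most compressed, and your proof cannot be considered complete until you either establish the monotone coupling above for the specific family of $f_i$'s or restructure the argument so that the $\mathcal{Y}_2$ slot is optimized jointly with $\mathcal{Y}_3$ rather than after it.
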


\begin{proofsketch}
By Lemma \ref{lem:P2_opt_on_boundary}, there exists \emph{at most one arm} with optimal value \(p_i^*\in (\ell, u)\). By Lemma \ref{lem:P2_p_prime_unique}, \(\gamma \coloneqq \lvert \mathcal{Y}_1\rvert=\left\lfloor \frac{|\mathcal{Y}|\uuu - (k-z)}{\uuu - \ell} \right\rfloor\) and $p'=k-z-|\mathcal{Y}_1|\ell - |\mathcal{Y}_3|\uuu~\in~(\ell, \uuu]$. Then, we can rewrite Equation~\ref{eqn:probfair_optimization} as an optimization problem over set assignment:
\begin{align}
    &\argmax_{\left\{\mathcal{Y}_1, \mathcal{Y}_2, \mathcal{Y}_3\right\}} ~ \sum_{i\in \mathcal{Y}_1} f_i(\ell) + f_{j}(p')+ \sum_{i'' \in \mathcal{Y}_3} f_{i''}(u) \nonumber \\
    &\textrm{s.t.}~\lvert \mathcal{Y}_1 \rvert = \gamma,~\mathcal{Y}_2 = \{j\}, \nonumber \quad
    \bigcap_{x=1}^{3} \mathcal{Y}_{x} = \emptyset  ,~\textrm{and}~\bigcup_{x=1}^3 \mathcal{Y}_{x} = \mathcal{Y} \label{eqn:p2_reorg}.
\end{align}
By algebraic manipulation, assigning the \(\lvert \mathcal{Y}\rvert -\gamma -1\) arms with maximal values of \(f_i(u) - f_i(\ell)\) to \(\mathcal{Y}_3\) produces a maximal solution. Similarly, we assign \(j \in \mathcal{Y}_2\) if \(f_j(p') - f_j(\ell)\) is maximal among the remaining arms. By definition, \(\mathcal{Y}_1^* = \mathcal{Y}\setminus \left(\mathcal{Y}_2 \bigcup\mathcal{Y}_3^*\right)\), which completes the proof.
\end{proofsketch}

\begin{restatable}[]{cor}{complexityconvex}\label{corr:P2_complexity}
Alg. 2 has time complexity $O(|\mathcal{Y}| \log |\mathcal{Y}|)$.
\end{restatable}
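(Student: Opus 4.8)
The plan is to proceed by a line-by-line accounting of the cost of Algorithm~\ref{alg:solveP2} under the standard unit-cost RAM model, and then observe that the two sorting steps dominate everything else.

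First I would establish that evaluating $f_i(p_i)$ for a fixed arm $i$ and a fixed argument $p_i \in \{\ell, p', \uuu\}$ takes $O(1)$ time. By Equation~\ref{eq:fi}, $f_i$ is an explicit rational function of $p_i$ whose numerator and denominator involve only the four entries of arm $i$'s transition matrix, so each evaluation is a constant number of arithmetic operations. Hence forming any list of the form $[\,f_i(\cdot) - f_i(\cdot) \mid i \in \mathcal{Y}\,]$ (lines 5 and 9) costs $O(|\mathcal{Y}|)$ time before the sort is applied.

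Next I would tally the remaining work. Lines 2 and 3 compute $\gamma$ and $p'$, each a constant-size arithmetic expression, in $O(1)$ time. The two calls to \texttt{sort} on lists of length at most $|\mathcal{Y}|$ (lines 5 and 9) each cost $O(|\mathcal{Y}| \log |\mathcal{Y}|)$ with any comparison-based sorting routine. The slicing/partition steps that build $\mathcal{Y}_3$ (line 6), $\mathcal{Y}_1$ (line 10), and $\mathcal{Y}_2$ (line 11), the conditional in lines 4--8, and the construction of the piecewise map $\pi_{\mathcal{Y}}$ (line 12) each touch $O(|\mathcal{Y}|)$ array entries and therefore cost $O(|\mathcal{Y}|)$; returning $\pi_{\mathcal{Y}}$ is $O(1)$ (or $O(|\mathcal{Y}|)$ if the map is materialized explicitly). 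Summing, the total is $O(1) + O(|\mathcal{Y}|) + 2\cdot O(|\mathcal{Y}| \log |\mathcal{Y}|) = O(|\mathcal{Y}| \log |\mathcal{Y}|)$.

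I do not anticipate a real obstacle: this is a routine runtime analysis, and the only two points that merit an explicit sentence are (i) justifying the $O(1)$ cost of each $f_i$ evaluation directly from the closed form in Equation~\ref{eq:fi}, and (ii) confirming that every set-construction/slicing step is at most linear in $|\mathcal{Y}|$, so that nothing hidden in the array-slicing notation inflates the bound beyond the two sorts.
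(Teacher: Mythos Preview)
Your proposal is correct and follows essentially the same approach as the paper: both identify that the two \texttt{sort} calls dominate, yielding $O(|\mathcal{Y}|\log|\mathcal{Y}|)$. The paper's justification is a single sentence to that effect, while you give a fuller line-by-line accounting, but the underlying argument is identical.
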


With our solutions to \textbf{P1} and \textbf{P2} so defined, the cost of finding our 
probabilistic policy in this way is 
\(O\!\left(\frac{k-\ell\lvert \mathcal{Y}\rvert}{\varepsilon}\left(\frac{\lvert \mathcal{X}\rvert}{\varepsilon^2}+\lvert \mathcal{Y}\rvert \log\lvert \mathcal{Y}\rvert\right)\right)\), which is at worst \(O\!\left(\frac{kN}{\epsilon^3}\right)\) when all \(N\) arms are in \(\mathcal{X}\). 

\subsection{Sampling Approach}
\label{sec:sampling}
For problem instances with feasible solutions, Algorithm~\ref{alg:probfair} returns $\pi_{PF}$, a mapping from the set of arms to a set of stationary probability distributions over actions, such that for each arm $i$, the probability of receiving a pull at any given timestep is in $[\ell, \uuu]$. By virtue of the fact that $\ell > 0$, this policy guarantees probabilistic fairness constraint satisfaction for all arms. We use a linear-time algorithm introduced by \citet{srinivasan2001distributions} and detailed in Appendix~\ref{sec:AppProbFairSampling} to sample from $\pi_{PF}$ at each timestep, such that the following properties hold: (1) with probability one, we satisfy the budget constraint by pulling exactly $k$ arms; and (2) any given arm $i$ is pulled with probability $p_i$. Formally, each time we draw a vector of binary random variables $(X_1, X_2 \dots X_N)$ from the distribution $\pi_{PF}$,  $\text{Pr}\left[|i: X_i =1| = k\right] = 1$ and $\forall i, \text{Pr}[X_i = 1] = p_i$.

\section{Experimental Evaluation}
\label{sec:experimentalEvaluation}
In this section, we empirically demonstrate that \textsc{ProbFair} enforces the probabilistic fairness constraint introduced in Section~\ref{sec:rmabModel} with minimal loss in total expected reward, relative to fairness-aware alternatives. We begin by identifying our comparison policies, evaluation metrics, and datasets. We then present results from three experiments: (1) \textsc{ProbFair} versus fairness-inducing alternative policies, holding the cohort fixed and considering fairness-aligned sets of hyperparameters; (2) \textsc{ProbFair} evaluated on a breadth of cohorts representing different types of patient populations; and (3) \textsc{ProbFair} when fairness is \emph{not} enforced (i.e., $\ell = 0$), to examine the cost of state agnosticism.\footnote{Code to reproduce our empirical results is provided at \url{https://github.com/crherlihy/prob_fair_rmab}.}  

\subsection{Experimental Setup}
\label{sec:experimentalDesign}
\par \textbf{Policies}: In our experiments, we compare \textsc{ProbFair} against a subset of the following baseline\baseline{} and fairness-\{inducing\fairInduce{}, guaranteeing\fairGuarantee{}, and agnostic\fairAgnostic{}\} policies:
\begin{table}[!h]
\resizebox{\columnwidth}{!}{%
\begin{tabular}{cl}
\hline
\multicolumn{1}{|c|}{\textsc{Random}\baseline{} }      & \multicolumn{1}{l|}{Select $k$ arms uniformly at random at each $t$.}           \\ \hline
\multicolumn{1}{|c|}{\textsc{Round-Robin}\baseline{},\fairGuarantee{}} & \multicolumn{1}{l|}{Select $k$ arms at each $t$ in fixed, sequential order.} \\ \hline
\multicolumn{1}{|c|}{\begin{tabular}[c]{@{}c@{}} \textsc{TW-based} \\ \textsc{heuristics}\fairGuarantee{} \end{tabular}} &
  \multicolumn{1}{l|}{\begin{tabular}[c]{@{}l@{}}Select top-$k$ arms based on Whittle index values.\\ Available arms vary based on time-indexed \\ fairness constraint  satisfaction~\citep{prins2020incorporating}. \end{tabular}} \\ \hline
\multicolumn{1}{|c|}{\begin{tabular}[c]{@{}c@{}} \textsc{Risk-Aware} \\ \textsc{TW (RA-TW)}\fairInduce{} \end{tabular}} &
  \multicolumn{1}{l|}{\begin{tabular}[c]{@{}l@{}}Select top-$k$ arms based on Whittle index values.\\ Incentivizes fairness via concave reward function\\ \citep{mate2021risk-aware}. \end{tabular}} \\ \hline
\multicolumn{1}{|c|}{\begin{tabular}[c]{@{}c@{}}\textsc{Threshold} \\ \textsc{Whittle (TW)}\fairAgnostic{ }\end{tabular}} &
  \multicolumn{1}{l|}{\begin{tabular}[c]{@{}l@{}}Select top-$k$ arms based on Whittle index values\\ \citep{whittle1988restless, mate2020collapsing}.\end{tabular}} \\ \hline
\end{tabular}
}
\end{table}

We specifically consider three \textbf{\textsc{Threshold Whittle}}-based heuristics: \textsc{H}$_{\textsc{First}}$, \textsc{H}$_{\textsc{Last}}$, and \textsc{H}$_{\textsc{Rand}}$. These heuristics partition the $k$ pulls available at each timestep into (un)constrained subsets, where a pull is \emph{constrained} if it is executed to satisfy a time-indexed fairness constraint. During constrained pulls, only arms that have not yet been pulled the required number of times within a $\nu$-length interval are available; other arms are excluded from consideration, unless \emph{all} arms have already satisfied their constraints. \textsc{H}$_{\textsc{First}}$, \textsc{H}$_{\textsc{Last}}$, and \textsc{H}$_{\textsc{Rand}}$ position constrained pulls at the beginning, end, or randomly within each interval of length \(\nu\), respectively. Appendix~\ref{sec:AppHeuristics} provides pseudocode.

\par \textbf{Objective}: 
In all experiments, we assign equal value to the adherence of a given arm over time. Thus, we set our objective to reward occupancy in the ``good'' state: a simple local reward \(r_t(s_t^i) \coloneqq s_t^i \in \{0,1\}\) and undiscounted cumulative reward function,  \({R}(r(s)) \coloneqq \sum_{i \in [N]} \sum_{t\in[T]} r(s^i_t)\). 

\par \textbf{Evaluation metrics}: We are interested in comparing policies along two dimensions: reward maximization and fairness (i.e., with respect to the distribution of algorithmic attention). To this end, we rely on two performance metrics: (a) intervention benefit and (b) earth mover's distance. 

\emph{Intervention benefit (IB)} is the total expected reward of an algorithm, normalized between the reward obtained with no interventions (0\% intervention benefit) and the asymptotically optimal but fairness-agnostic \textsc{Threshold Whittle} algorithm (100\%)~\citep{mate2020collapsing}. Formally,
\begin{equation}\label{eq:ib}
\textsc{IB}_{\text{NoAct}, \text{TW}}(\text{ALG}) \coloneqq \frac{\mathbb{E}_\text{ALG}[R_\text{ALG}(\cdot)] - \mathbb{E}_\text{NoAct}[R(\cdot)]}{\mathbb{E}_\text{TW}[R(\cdot)] - \mathbb{E}_\text{NoAct}[R(\cdot)]}
\end{equation}

Per Lemma~\ref{thm:IBcorrPoF} (App.~\ref{sec:AppExpMetrics}), the \textit{price of fairness (PoF)} metric~\citep{bertsimas2011price} is inversely proportional to intervention benefit. We thus report IB.

\emph{Earth mover's distance (EMD)} is a metric that allows us to compute the minimum cost required to transform one probability distribution into another~\citep{rubner2000earth}. We use it to compare algorithms with respect to fairness---i.e., how evenly a set of pulls are allocated among arms. (Other metrics that may measure individual distributive fairness are discussed in Appendix~\ref{sec:AppExpMetrics}.)

For each algorithm, we consider a discrete distribution \(F\) of observed pull counts, where each bucket, $j \in \{0 \dots T\}$, corresponds to a feasible number of total pulls that an arm could receive, and $F[j] \in \{0 \dots N\}$ corresponds to the number of arms whose observed pull count is equal to $j$. For example, $F[0]$ corresponds to the quantity of arms never pulled, and $F[T]$ corresponds to the quantity of arms pulled at every timestep. Each algorithm produces $kT$ total pulls, so the distributions have the same total mass.

We use \textsc{Round-Robin} as a fair reference algorithm since it distributes pulls evenly among arms. We then compute the minimum cost required to transform each algorithm's distribution, $F_{\text{ALG}}$, into that of \textsc{Round-Robin}'s, $F_{\text{RR}}$. 

For our application this is equivalent to:
\begin{equation}\label{eqn:EMD}
    \textsc{EMD}_{\text{RR}}(\text{ALG}) \coloneqq \left\lvert \sum_{h=0}^T \sum_{j=0}^h F_{\text{ALG}}[j] - F_{\text{RR}}[j]  \right\rvert
\end{equation}

Unless otherwise noted, we normalize EMD such that the maximum distance we encounter, that of TW, is one:
\begin{equation}
    \frac{\textsc{EMD}_{\text{RR}}(\text{ALG}) - \textsc{EMD}_{\text{RR}}(\text{RR})}{\textsc{EMD}_{\text{RR}}(\text{TW}) - \textsc{EMD}_{\text{RR}}(\text{RR})} = \frac{\textsc{EMD}_{\text{RR}}(\text{ALG})}{\textsc{EMD}_{\text{RR}}(\text{TW})}
\end{equation}

\par \textbf{Datasets}: We evaluate performance on two datasets: (a) a realistic patient adherence behavior model and (b) a general set of randomly generated synthetic transition matrices.

\emph{CPAP Adherence}. Obstructive sleep apnea (OSA) is a common condition that causes interrupted breathing during sleep~\citep{punjabi2008epidemiology}; when used throughout the entirety of sleep, continuous positive airway pressure therapy (CPAP) eliminates nearly 100\% of obstructive apneas for the majority of treated patients~\citep{sawyer2011systematic}. However, poor adherence behavior in using CPAP reduces its beneficial outcomes. CPAP non-adherence affects an estimated 30-40\% of patients~\citep{rotenberg2016trends}.

We derive the CPAP dataset that we use in our experiments from the work of \citet{kang2013markov, kang2016modelling}, who model the dynamics and patterns of patient adherence behavior as a basis for designing effective and economical interventions. In particular, we adapt their Markov model of CPAP adherence behavior (a three-state system based on hours of nightly CPAP usage) to a two-state system using the clinical standard for adherence--at least four hours of CPAP machine usage per night~\citep{sawyer2011systematic}. \citet{kang2013markov} find, via expectation-maximization on CPAP usage patterns, that patients can be divided into two groups based on this clinical standard. Though patients in the first cluster occasionally miss a night, these patients utilize a CPAP machine for more than four hours every night without assistance, while patients in the second cluster do not. We refer to the latter cluster as the \textit{non-adherent} cohort in our analysis.

\citet{kang2016modelling} consider many intervention effects. We specifically consider an intervention effect, \(\alpha_{\text{interv}}=1.1\), that broadly characterizes supportive interventions such as telemonitoring and phone support, which are associated with a moderate 0.70 hours (95\% CI \(\pm~ 0.35\)) increase in device usage per night~\citep{askland2020educational}. We add random \(\sigma=1\) logistic noise to the transition matrices so that there is some variance in individual arm dynamics. To prevent overlap with the general cohort we consider for contrast, added noise can only \textit{hinder} the probability of adherence in the non-adherent cohort.

\emph{Synthetic}. In addition, we construct a synthetic dataset of randomly generated arms such that the structural constraints outlined in Section~\ref{sec:rmabModel} are preserved. We conjecture that forward (reverse) threshold-optimal arms are a subset of concave (strictly convex) arms (see Appendix~\ref{sec:appSynthdata}).

\subsection{\textsc{ProbFair} vs. Fairness-aware Alternatives} 
\label{sec:pfVsFair}
Here we compare \textsc{ProbFair} to policies which either \emph{induce} or \emph{guarantee} fairness. The former includes \textsc{Risk-aware Whittle (RA-TW)}, which incentivizes fairness via concave reward \(r(b)\)~\citep{mate2021risk-aware}. We use the authors' suggested reward function \(r(b) = -e^{\lambda(1-b)}\), \(\lambda =20\). This imposes a large negative utility on lower belief values, which motivates preemptive intervention. However, \textsc{RA-TW} does not \emph{guarantee} time-indexed or probabilistic fairness for individual arms. The latter includes \textsc{Round-Robin} and the \textsc{First}, \textsc{Last}, and \textsc{Random} heuristics, which guarantee time-indexed fairness but do \emph{not} provide any optimality guarantees. 

\newcolumntype{V}{!{\vrule width 1pt}}
\begin{table}[!h]
\centering
\resizebox{\columnwidth}{!}{%
\begin{tabular}{|c|l V l|l|}
	\specialrule{1pt}{1pt}{1pt}
	$\min_i\mathbb{E}[\text{\# pulls}]$& Policy & $\mathbb{E}[\textsc{IB}]$ (\%)& $\mathbb{E}[\textsc{EMD}]$ (\%) \\
	\specialrule{2.5pt}{1pt}{1pt}
	\multirow{4}{*}{\makecell{\textsc{10}\\ $\ell= 0.056$ \\ $\nu = 18$}} &\textsc{PF} \hfill $\ell$  & 88.73 ~$\pm$ 0.26 & 81.78 ~$\pm$ 0.18 \\
	 &\textsc{H}$_{\textsc{First}}$  \hfill $\nu$ & 86.11 ~$\pm$ 0.26 & 71.53 ~$\pm$ 0.13 \\
	 &\textsc{H}$_{\textsc{Last}}$   \hfill $\nu$ & 87.37 ~$\pm$ 0.28  & \textbf{70.48 ~$\pm$ 0.12 } \\	 &\textsc{H}$_{\textsc{Rand}}$   \hfill $\nu$ &\textbf{90.79 ~$\pm$ 0.22} & 74.12 ~$\pm$ 0.15 \\
	\specialrule{1.5pt}{1pt}{1pt}	\multirow{4}{*}{\makecell{\textsc{18}\\ $\ell= 0.1$ \\ $\nu = 10$}} &\textsc{PF} \hfill $\ell$  & 80.80 ~$\pm$ 0.30 & 59.96 ~$\pm$ 0.19 \\
	 &\textsc{H}$_{\textsc{First}}$  \hfill $\nu$ & 76.62 ~$\pm$ 0.30 & 49.54 ~$\pm$ 0.09 \\
	 &\textsc{H}$_{\textsc{Last}}$  \hfill $\nu$ & 77.95 ~$\pm$ 0.30  & \textbf{49.26 ~$\pm$ 0.08 } \\	 &\textsc{H}$_{\textsc{Rand}}$ \hfill $\nu$ &\textbf{81.53 ~$\pm$ 0.30} & 52.73 ~$\pm$ 0.10 \\
	\specialrule{1.5pt}{1pt}{1pt}	\multirow{4}{*}{\makecell{\textsc{30}\\ $\ell= 0.167$ \\ $\nu = 6$}} &\textsc{PF} \hfill $\ell$  &\textbf{66.12 ~$\pm$ 0.35} & 23.61 ~$\pm$ 0.12 \\
	 &\textsc{H}$_{\textsc{First}}$  \hfill $\nu$ & 63.58 ~$\pm$ 0.31  & \textbf{18.98 ~$\pm$ 0.03 } \\	 &\textsc{H}$_{\textsc{Last}}$  \hfill $\nu$ & 64.63 ~$\pm$ 0.34 & 19.47 ~$\pm$ 0.04 \\
	 &\textsc{H}$_{\textsc{Rand}}$  \hfill $\nu$ & 65.21 ~$\pm$ 0.32 & 19.64 ~$\pm$ 0.04 \\
	\specialrule{1.5pt}{1pt}{1pt}	\multirow{2}{*}{comparison} &RA-TW & 85.12 ~$\pm$ 0.42  & \textbf{95.80 ~$\pm$ 0.42 } \\	 &TW &\textbf{100.00 ~$\pm$ 0.00} & 100.00 ~$\pm$ 0.00 \\
	\specialrule{1.5pt}{1pt}{1pt}	\multirow{3}{*}{baseline} &\textsc{Random}  & 50.02 ~$\pm$ 0.35 & 10.08 ~$\pm$ 0.10 \\
	 &\textsc{NoAct}  & 0.00 ~$\pm$ 0.00 & 73.48 ~$\pm$ 0.13 \\
	 &\textsc{RR}  &\textbf{56.96 ~$\pm$ 0.33}  & \textbf{0.00 ~$\pm$ 0.00 } \\	
	 \specialrule{1.5pt}{1pt}{1pt}
	\end{tabular}%
}
\caption{Expected intervention benefit and normalized earth mover's distance by policy and fairness bracket.}
\label{tab:exp1table}
\end{table}

In Table~\ref{tab:exp1table}, we report average results for each policy, along with margins of error for $95\%$ confidence intervals, computed over $100$ simulation seeds for a synthetic cohort of $100$ collapsing arms, with $k=20$ and $T=180$. To facilitate meaningful comparisons between \textsc{ProbFair} and the heuristics, we consider combinations of values for $\ell$ and $\nu$ that produce equivalent, integer-valued lower bounds on the number of pulls any arm can expect to receive---i.e., $\min_i \mathbb{E}[\sum_{t} \mathbbm{1}(a^i_t=1)] = \ell \times T = \frac{T}{\nu}$. 

\vspace{1em}
Key findings from this experiment include:
\begin{itemize}
    \item
    \emph{Fairer} hyperparameter values (i.e, $\ell \uparrow$, $\nu \downarrow$), correspond to decreases in $\mathbb{E}[\textsc{IB}]$ and $\mathbb{E}[\textsc{EMD}]$, reflecting improved individual fairness at the expense of total expected reward.
    \item \textsc{ProbFair} is competitive with respect to $\textsc{RA-TW}$, outperforming on both metrics when $\ell=0.056$, and incurring a slight loss in $\mathbb{E}[\textsc{IB}]$ but improvement in $\mathbb{E}[\textsc{EMD}]$ for $\ell=0.1$.
    \item For each $(\ell, \nu)$ combination, \textsc{ProbFair} performs competitively with respect to the best-performing heuristic (which, like \textsc{TW}, are state-aware, see Section~\ref{sec:expNoFair}).
\end{itemize}

\subsection{\textsc{ProbFair} on a Breadth of Cohorts}\label{sec:expBreadth}
In this section we conduct sensitivity analysis with respect to cohort composition. For each dataset, we identify a transition matrix characteristic that can be modified during the generation process to produce a subset of arms that will exhibit less favorable transition dynamics than their peers. For the synthetic dataset, this characteristic is \emph{strict convexity}. For the CPAP dataset, it is \emph{non-adherence}, a mnemonic coined by \citet{kang2013markov} to characterize a cluster of study participants, and contrast this to a model fit on the general patient population.

For each dataset, we generate ten different cohorts, each of which is characterized by the percentage of unfavorable arms that it contains. We use a seed to control the generation process such that each cohort contains 100 collapsing arms in total. A sliding window of the unfavorable 
arms we can generate with this seed are included 
as we increase the cardinality of the unfavorable subset. For ease of interpretation, we present unnormalized results over 100 simulation seeds with $k=20$ and $T=180$ in Figure~\ref{fig:exp2fig}, and then proceed to summarize normalized performance.
\begin{figure}[!h]
\begin{center}
\resizebox{\columnwidth}{!}{
\subfigure[Synthetic results]{\label{fig:exp2a}\includegraphics[width=\linewidth]{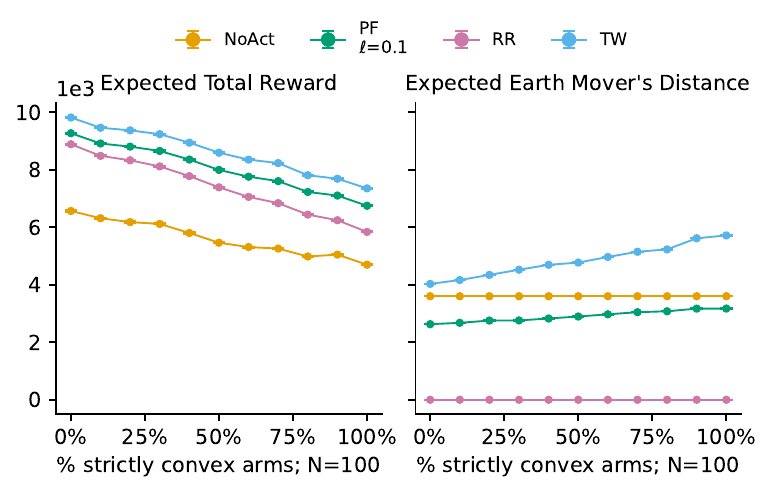}}}
\resizebox{\columnwidth}{!}{
\subfigure[CPAP results]{\label{fig:exp2b}\includegraphics[width=\linewidth]{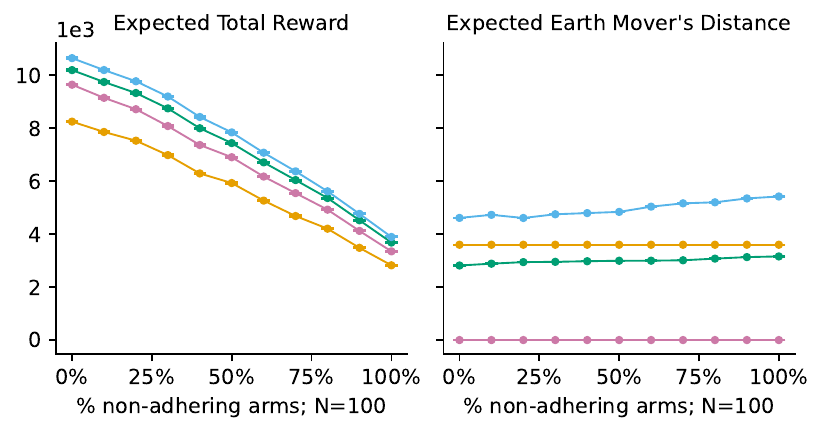}}}
\caption{Expected total reward (left) and unnormalized earth mover's distance (right) on a breadth of cohorts.}
\label{fig:exp2fig}
\end{center}
\end{figure}

\vspace{1em}
Key findings from this experiment include:
\begin{itemize}
    \item Per Figure~\ref{fig:exp2fig}, for each dataset, expected total reward predictably declines for all policies as the percentage of unfavorable arms increases, while unnormalized EMD increases for \textsc{TW} and \textsc{ProbFair}. 
    \begin{itemize}
        \item \emph{Synthetic}: As the proportion of strictly convex arms increases, \textsc{ProbFair}'s allocation of resources tends towards the bimodality of \textsc{TW}. 
        \item \emph{CPAP}: As the proportion of non-adherent arms increases, the level of intervention required to improve trajectories rises, but the budget constraint is static. 
    \end{itemize}
    \item For each dataset, \textsc{ProbFair}'s normalized performance remains stable even as cohort composition is varied: 
    \begin{itemize}
        \item \emph{Synthetic}: With respect to IB (EMD), \textsc{ProbFair} achieves an average (over all cohorts) of averages (over 100 simulations per cohort) of $80.69\%~\pm~1.42\%$ ($58.98\%~\pm~1.29\%$). 
        \item \emph{CPAP}: The values for IB (EMD) are: $79.84\%~\pm~0.68\%$ ($59.68\% ~\pm~1.08\%$).
    \end{itemize}
\end{itemize}

\subsection{\textsc{ProbFair}: Price of State Agnosticism} \label{sec:expNoFair}
Here, we investigate the cost associated with \textsc{ProbFair}'s state agnosticism, relative to state-aware \textsc{Threshold Whittle}. To ensure a fair comparison, we set $\ell=0$ and $\uuu=1$, effectively constructing a version of \textsc{ProbFair} in which probabilistic fairness is not enforced. (Recall that \textsc{TW} is fairness-agnostic; in the previous results, we do not expect \textsc{ProbFair} to obtain the same total reward as \textsc{TW}).

Although \textsc{ProbFair} incorporates each arm's structural information (i.e., transition matrices), it produces a set of \emph{stationary} probability distributions over actions from which all discrete actions are subsequently drawn. \textsc{TW}, in contrast, ingests each arm's current state at each timestep, and is thus able to exploit \emph{realized} sequences of state transitions. 

While we thus expect \textsc{ProbFair} to incur some loss in intervention benefit, our results (computed over 100 simulation seeds, with $k=20$, $N=100$, and $T=180$) indicate that this loss is acceptable rather than catastrophic. Relative to \textsc{TW}, $\textsc{ProbFair}_{\ell = 0}$ obtains $97.41\% \pm 0.26$ of $\mathbb{E}[\textsc{IB}]$ and incurs an increase of only $4.56\% \pm 0.19$ with respect to $\mathbb{E}[\textsc{EMD}]$. 

\section{Conclusion and Future Work}
\label{sec:conclusion}
In this paper, we introduce \textsc{ProbFair}, a novel, probabilistically fair algorithm for constrained resource allocation. Our theoretical results prove that this policy is reward-maximizing, subject to the guaranteed satisfaction of both budget and tunable probabilistic fairness constraints. Our empirical results demonstrate that \textsc{ProbFair} preserves utility while providing fairness guarantees.  Promising future directions include: (1) extending \textsc{ProbFair} to address larger state and/or action spaces; and (2) relaxing the requirement for stationarity in the construction of $\pi_{PF}$.

\begin{acks}
Christine Herlihy was supported by the National Institute of Standards and Technology’s (NIST) Professional Research Experience Program (PREP). John Dickerson and Aviva Prins were supported in part by NSF CAREER Award IIS-1846237, NSF D-ISN Award \#2039862, NSF Award CCF-1852352, NIH R01 Award NLM-013039-01, NIST MSE Award \#20126334, DARPA GARD \#HR00112020007, DoD WHS Award \#HQ003420F0035, and a Google Faculty Research award. Aravind Srinivasan was supported in part by NSF awards CCF-1749864 and CCF-1918749, as well as research awards from Adobe, Amazon, and Google. The views and conclusions contained in this publication are those of the authors and should not be interpreted as representing official policies or endorsements of U.S. government or funding agencies. We thank Samuel Dooley, Dr.\ Furong Huang, Naveen Raman, and Daniel Smolyak for helpful input and feedback.
\end{acks}

\bibliographystyle{ACM-Reference-Format}
\balance
\bibliography{main}

\newpage
\appendix
\onecolumn

\section{Notation}
\label{sec:notation}
In Table \ref{tab:notation}, we present an overview of the notation used in the paper. $[N]$ denotes the set $\{1, 2, \ldots, N\}$.
\begin{table}[!htb]
{\renewcommand{\arraystretch}{2}%
	\caption{Notation used in our model and notes on their interpretation.} \label{tab:notation}
	\resizebox{\textwidth}{!}{\begin{tabular}{|lll|}
		\hline 
		\multicolumn{3}{|l|}{\textbf{MDP Variables} Here, timestep \(t\in [T] = \{1, 2, \dots T\}\) (subscript) and arm index \(i\in [N] = \{1, 2, \dots N\}\) (superscript) are implied.} \\ \hline
		\multicolumn{1}{|l|}{State space} & \multicolumn{1}{l|}{\(s\in\mathcal{S}=\{0,1\}\)} & \(s=\begin{cases} 1 & \textrm{arm is in the `good' state.}\\ 0  &  \text{else
		}\end{cases}\) \\ \hline
		\multicolumn{1}{|l|}{Belief space} & \multicolumn{1}{l|}{\(\begin{aligned}
			b \in\mathcal{B} &= [0,1]\\
			b_{t+1} &= \begin{cases}
			s_{t+1} & \textrm{if known} \\
			b_{t}P_{1,1}^{0} + (1-b_{t})P_{0,1}^{0} & \textrm{else}
			\end{cases}
			\end{aligned}\)} & {\renewcommand{\arraystretch}{1.25}%
			\begin{tabular}[c]{@{}l@{}}If an arm's true state is unknown, \\ the recursively computed belief state approximates it. \end{tabular}} \\ \hline
		\multicolumn{1}{|l|}{Action space} & \multicolumn{1}{l|}{\(a\in\mathcal{A} = \{0,1\}\)} & \(a=\begin{cases} 1 & \textrm{pull arm (i.e., provide intervention)}\\ 0  &  \text{else, don't pull}\end{cases}\) \\ \hline
		\textbf{MDP Functions} &  &  \\ \hline
		\multicolumn{1}{|l|}{Transition function} & \multicolumn{1}{l|}{\(\begin{aligned}
			P \colon \mathcal{S}\times\mathcal{A}\times\mathcal{S} &\to [0,1]\\
			s_t, a_t, s_{t+1} &\mapsto \Pr(s_{t+1} \mid s_t, a_t)
			\end{aligned}\)} & {\renewcommand{\arraystretch}{1.25}%
			\begin{tabular}[c]{@{}l@{}}The probability of an arm \\ going from state \(s_t\) to \(s_{t+1}\), given action \(a_t\). \\ Equivalent (matrix) notation: \(P^{a_t}_{s_t,s_{t+1}}\). \end{tabular}} \\ \hline 
		\multicolumn{1}{|l|}{Reward function} & \multicolumn{1}{l|}{\(\begin{aligned}
			r \colon \mathcal{S}\text{ or }\mathcal{B} &\to \mathbb{R} 
			\end{aligned}\)} & {\renewcommand{\arraystretch}{1.25}%
			\begin{tabular}[c]{@{}l@{}}\(r(b)\) is used in computing the Whittle index. \end{tabular}} \\ \hline
		\multicolumn{1}{|l|}{Policy function} & \multicolumn{1}{l|}{\(\begin{aligned}
			\pi \colon \mathcal{S} &\to \mathcal{A}
			\end{aligned}\)} & {\renewcommand{\arraystretch}{1.25}%
			\begin{tabular}[c]{@{}l@{}}A policy for actions.\\ The set of optimal policies is \(\pi^* \in \Pi^*\). \end{tabular}} \\ \hline
		\textbf{RMAB Variables} &  &  \\ \hline
		\multicolumn{1}{|l|}{Timestep} & \multicolumn{1}{l|}{\(\{t\in \mathbb{N} \mid t\leq T\}\)} & This timestep is implicit in the MDP. \\ \hline
		\multicolumn{1}{|l|}{Arm index} & \multicolumn{1}{l|}{\(i\in\{1,2,\dots,N\}\)} & {\renewcommand{\arraystretch}{1.25}%
			\begin{tabular}[c]{@{}l@{}}Each arm can represent a patient.\\ \(k\) arms can be pulled at any timestep \(t\).\end{tabular}} \\ \hline
		\multicolumn{3}{|l|}{\textbf{Objective Functions} The objective is to find a policy \(\pi^* = \max_\pi \mathbb{E}_\pi [R(\cdot)]\).} \\ \hline
		\multicolumn{1}{|l|}{Discounted reward function} & \multicolumn{1}{l|}{\(\begin{aligned}
			R_\beta^\pi \colon \mathcal{S}^N &\to \mathbb{R}\\
			s_0^1,s_0^2,\dots,s_0^N &\mapsto  \sum_{i \in [N]} \sum_{t\in[T]} \beta^{t-1} r(s^i_t)
			\end{aligned}\)} & {\renewcommand{\arraystretch}{1.25}%
			\begin{tabular}[c]{@{}l@{}}\(\beta\in [0,1]\) is some \textit{discount parameter}.\end{tabular}} \\ \hline
	    \multicolumn{3}{|l|}{\textbf{Fairness-motivated Constraint Functions} 
	    } \\ \hline
		\multicolumn{1}{|l|}{Integer periodicity } & \multicolumn{1}{l|}{\(\begin{aligned}
            \bigwedge_{j=0}^{\lceil\frac{T}{\nu} \rceil} \sum_{t=j\nu+1}^{(j+1)\nu} a_t^i \geq 1
            \end{aligned}\)} & {\renewcommand{\arraystretch}{1.25}%
			\begin{tabular}[c]{@{}l@{}} A form of time-indexed fairness. \\ Guarantees arm \(i\) is pulled at least once \\ within each period of \(\nu\) timesteps.\end{tabular}} \\ \hline
		\multicolumn{1}{|l|}{Minimum selection fraction } & \multicolumn{1}{l|}{\(\begin{aligned}
    		\bigwedge_{i\in[N]} \frac{1}{T}\sum_{t=1}^{T} a_t^i \geq \psi 
    		\end{aligned}\)} & {\renewcommand{\arraystretch}{1.25}%
			\begin{tabular}[c]{@{}l@{}} A form of time-indexed fairness. \\ Arm \(i\) should be pulled at least some \\ minimum fraction \(\psi \in (0,1)\) of timesteps.\end{tabular}} \\ \hline
		\multicolumn{1}{|l|}{Probabilistic} & \multicolumn{1}{l|}{\(\begin{aligned}
    		\bigwedge_{i\in[N]} \bigwedge_{t\in[T]} \Pr(a_t^i =1 \mid i,t) \in [\ell, u]
    		\end{aligned}\)} & {\renewcommand{\arraystretch}{1.25}%
			\begin{tabular}[c]{@{}l@{}}Pull each arm with probability \(p_i \in [\ell, u]\), \\ where \(\ell \in \left(0, \frac{k}{N}\right]\) and \(u \in \left[\frac{k}{N}, 1\right]\).\end{tabular}} \\ \hline
	\end{tabular}} 
}
\end{table}
\pagebreak 

\section{Empirical Inequity in the Distribution of Actions under Whittle Index Policies}
\label{sec:AppWhittleUnFair}
Here, we present numerical results confirming \citet{prins2020incorporating}'s findings that \textsc{Threshold Whittle (TW)} tends to allocate pulls according to a bimodal distribution: a small subset of arms are pulled frequently, while others are largely ignored.

\textbf{Experimental Setup:} For each iteration, we generate \(N=2\) forward threshold-optimal arms and run \textsc{TW} for a \(T=365\) horizon simulation, where the budget constraint \(k=1\). We run \(1,000\) such iterations.

\textbf{Results:} In 515 out of 1,000 (51.5\%) simulations, the arms' Whittle indices never overlap, meaning that for \emph{any} combination of initial states, state transitions, and pulls, \textsc{TW} would pull one arm for all timesteps \(t\in T\) and completely ignore the second arm. We visualize one such case in Figure~\ref{fig:subsidy_vis}. Recall that \textsc{TW} precomputes the infimum subsidy \(m\) per arm and belief combination. Since belief is a function of last known state \(s\in \{0,1\}\) and time-since-seen \(u\in [T]\) (using the notation of~\citet{mate2020collapsing}), we plot the infimum subsidy of each arm-state combination with time-since-seen, \(u\), on the \(x\)-axis. There exists a horizontal line that divides the two arms, so arm \(i=1\) will be pulled for every timestep and arm \(i=2\) will never be pulled. 

\begin{figure}[!h]
  \centering
  \includegraphics[width=0.45\linewidth]{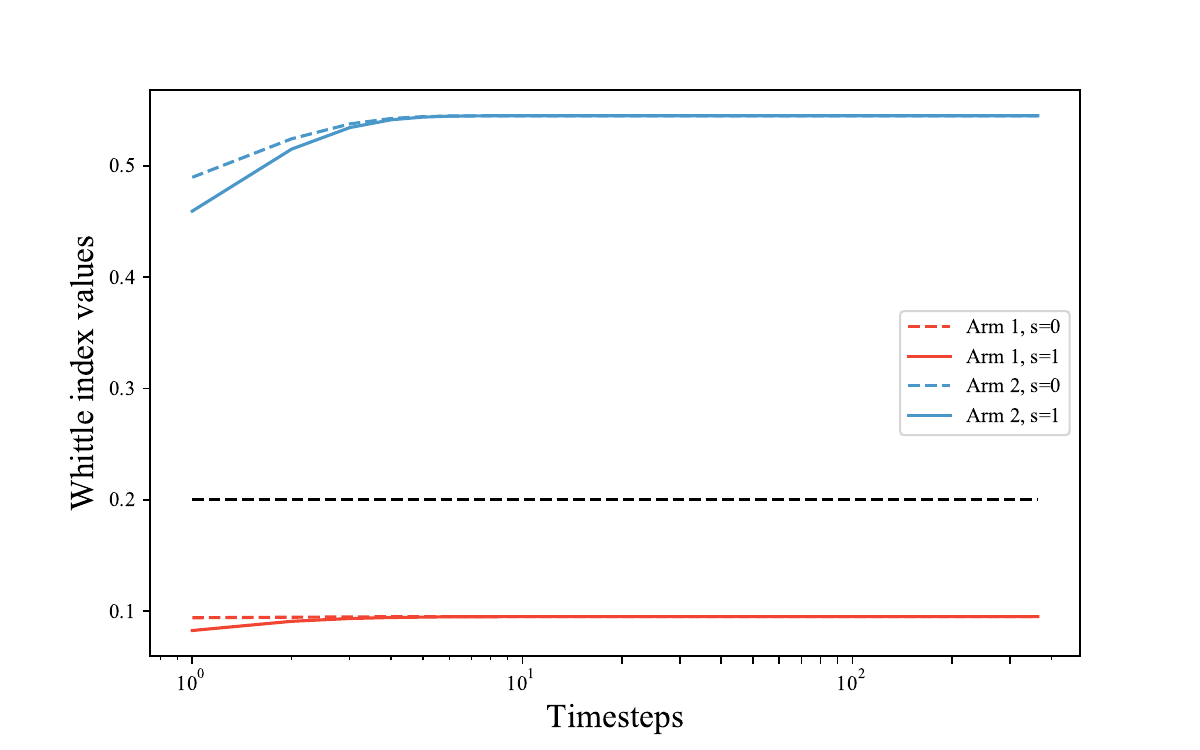}  
  \caption{The Whittle index values for Arm 1 and 2 can be separated by a horizontal line, meaning that (WLOG) Arm 1 will always be chosen over Arm 2 because its index value dominates.}
\label{fig:subsidy_vis}
\end{figure}
In order to modify the Whittle index to guarantee time-indexed fairness constraint satisfaction, one would need to ensure that no such horizontal line exists. Additionally, if we consider a specific form of time-indexed fairness known as an \emph{integer periodicity} constraint, which allows a decision-maker to guarantee that arm \(i\) is pulled at least once within each period of \(\nu\) days, the lines associated with the arms in Figure~\ref{fig:subsidy_vis} must cross before \(\nu\) timesteps elapse to guarantee fairness constraint satisfaction. 

Another perspective we can take is to ask: what's the smallest interval \(\nu_i\) for each arm \(i\) we could have specified such that \textsc{Threshold Whittle} would have satisfied the integer periodicity constraint? Note that this is retrospective, as there is no way to enforce this constraint at planning time.
\begin{figure}[!b]
  \centering
  \includegraphics[width=0.45\linewidth]{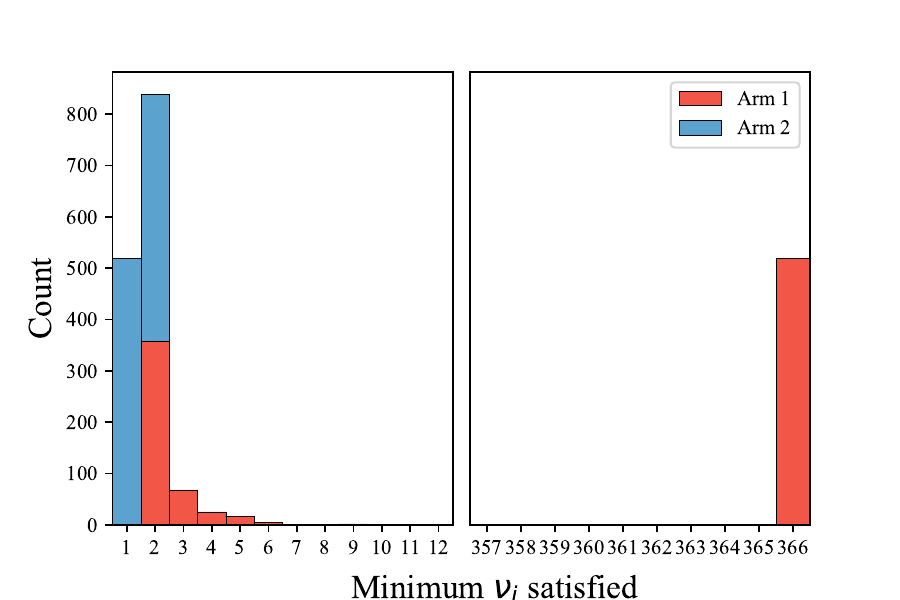}
  \caption{The smallest interval \(\nu_i\) such that \textsc{TW} satisfies an integer periodicity definition of time-indexed fairness, given \(N=2\) random arms. In over 50\% of iterations, no such fairness constraint satisfaction is possible (i.e., \(\exists i \text{ s.t. } \nu_i > T\)).}
\label{fig:nonindexability_min_nu_vis}
\end{figure}
We visualize the minimum such \(\nu_i\) in Figure~\ref{fig:nonindexability_min_nu_vis}. On the far right, we see the 515 cases where (WLOG) the second arm is never pulled---that is, the minimum \(\nu_i\) such that \textsc{Threshold Whittle} satisfies the hard integer periodicity constraint must be \emph{larger} than the horizon, \(T=365\). There is one case where arm \(i=2\) is pulled exactly once. In a majority of the remaining simulations, 
\textsc{TW} pulls each arm with approximately equal frequency.
\clearpage

\section{Motivating a focus on distributive fairness}
In Section~\ref{sec:motivation} and Appendix~\ref{sec:AppWhittleUnFair}, we discuss and empirically demonstrate how Whittle index-based policies produce bimodal distributive outcomes in which a small subset of arms receive all available interventions, and the rest of the arms are ignored. Here, we note that there are many application domains where such skewed resource allocation may be perceived as unfair or undesirable by beneficiaries and decision-makers, thus motivating efforts to incentivize or guarantee distributive fairness. 

In the \emph{healthcare} examples we present in the main paper (i.e., where interventions support medication or CPAP adherence), resource constraints and variation in transition dynamics interact. A practical consequence is that a majority of patients will \emph{never} receive the beneficial intervention(s). This, in turn, means that their clinical outcomes will be strictly worse in expectation than they would be under a policy that guaranteed a non-zero probability of receiving the intervention at each timestep.

These considerations are not restricted to healthcare contexts. In \emph{poaching prevention}, the planner must be cognizant of the effect of choosing \textit{against} patrolling a particular area (i.e., pulling an arm) will have on poacher strategies \citep{qian2016restless}. In \emph{wireless scheduling}, multiple processes compete to transmit packets over a shared wireless channel. When scheduling the transmissions, the agent must not only maximize reward but also ensure the performance of critical applications codified in Quality of Service (QoS) guarantees \citep{liu2003framework, hou2009theory, li2019combinatorial}.

\section{Intractability of Alternative Approaches}\label{sec:AppAltIntract}
In this section, we motivate the algorithmic design choices we have made when constructing \textsc{ProbFair} by discussing the feasibility of possible alternatives, including: (1) direct modification of the Whittle index to guarantee time-indexed fairness constraint satisfaction, and (2) a math programming-based approach. 

\subsection{Why not modify the Whittle index to guarantee time-indexed fairness constraint satisfaction?}
\label{sec:altModifyWhittle}
In Section~\ref{sec:fairnessIndexability}, we demonstrate that it is not possible to guarantee time-indexed fairness when arms are decoupled. If arms cannot be decoupled, the tractability of a Whittle index-based approach breaks down. Here, we discuss this topic in greater detail, and provide specific examples of possible Whittle index modifications. We also take this opportunity to emphasize that our focus is on \emph{guaranteeing} fairness rather than incentivizing it. \citet{mate2021risk-aware} provide an example of the latter, which we discuss and include as a comparison algorithm in Section~\ref{sec:pfVsFair}. 

To begin, recall that the efficiency of Whittle index-based policies stems from our ability to decouple arms when we are only concerned with maximizing total expected reward~\citep{whittle1988restless, weber1990index}. However, guaranteeing time-indexed fairness (as defined in Section~\ref{sec:rmabModel}) in the planning setting requires time-stamped record keeping. It is no longer sufficient to compute each arm's infimum subsidy in isolation and order the resulting set of values. Instead, for an optimal index policy to be efficiently computable, it must be possible to modify the value function (Equation~\ref{eqn:valueFunc}) so as to ensure that the infimum subsidy each arm would require in the absence of fairness constraints is minimally perturbed via augmentation or ``donation'', so as to maximize total expected reward while ensuring its own fairness constraint satisfaction or the constraint satisfaction of other arms, respectively, \emph{without} requiring input from other arms.

Plausible modifications include altering the conditions under which an arm receives the subsidy associated with passivity, $m$, or introducing a modified reward function, $r^{\prime}(b)$ that is capable of accounting for an arm's fairness constraint satisfaction status in addition to its state at time $t$. For example, we might use an indicator function to ``turn off'' the subsidy until arm $i$ has been pulled at least once within the interval in question, or increase reward as an arm's time-since-pulled value approaches the interval cut-off, so as to incentivize a constraint-satisfying pull. When these modifications are viewed from the perspective of a single arm, they \emph{appear} to have the desired effect: if no subsidy is received, it will be optimal to pull for all belief states; similarly, for a fixed $m$, as reward increases it will be optimal to pull for an increasingly large subset of the belief state space. 

Recall, however, that structural constraints ensure that when an arm is considered in isolation, the optimal action will \emph{always} be to pull. Whether or not arm $i$ is \emph{actually} pulled at time $t$ depends on how the infimum subsidy, $m$, it requires to accept passivity at time $t$ compares to the infimum subsidies required by other arms. Thus, any modification intended to \emph{guarantee} time-indexed fairness constraint satisfaction must be able to alter the ordering \emph{among} arms, such that any arm $i$ which would otherwise have a subsidy with rank $>k$ when sorted in descending order will now be in the top-$k$ arms. Even if we were able to construct such a modification for a single arm without requiring system state, if \emph{every} arm had this same capability, then a new challenge would arise: we would be unable to distinguish among arms, and arbitrary tie-breaking could again jeopardize fairness constraint satisfaction. 

If it is not possible to decouple arms, then we must consider them in tandem. \citet{papadimitriou1994complexity} prove that the RMAB problem is PSPACE-complete even when transition rules are action-dependent but deterministic, via reduction from the halting problem. 

\subsection{Why not use a math-programming approach?}
\label{sec:mathProg}
Our constrained maximization problem can be readily formulated as an integer program (IP) with a totally unimodular (TU) constraint matrix. However, this approach is intractable because the objective function coefficients of this IP cannot be efficiently enumerated. To support this intractability claim, we begin by presenting an integer program (IP) that maximizes total expected reward under both budget \emph{and} time-indexed fairness constraints, for problem instances with feasible hyperparameters. We then prove that any problem instance with feasible hyperparameters yields a totally unimodular (TU) constraint matrix, which ensures that the linear program (LP) relaxation of our IP will yield an integral solution. We proceed to demonstrate that tractability issues arise because we incur an exponential dependency on the time horizon, $T$, when we construct the IP's objective function coefficients. We conclude by comparing \textsc{ProbFair} to the IP for small values of $N$ and $T$. 

\subsubsection{Integer Program Formulation}
\label{sec:ip} 

To leverage a math programming approach for our constrained reward maximization task, we seek to construct an integer program (IP) whose solution is the policy \(\vec{x}\in \{0,1\}^{N\lvert\mathcal{A}\rvert T}\). We require this policy to be reward-maximizing, subject to the guaranteed satisfaction of both budget and time-indexed fairness constraints. To begin, let each decision variable $x_{i,a,t} \in \{0,1\}$ represent whether or not we take action $a \in \mathcal{A}=\{0,1\}$ for arm $i\in[N]$ at time $t\in[T]$. Then, let each objective function coefficient $c_{i,a,t}$ represent the expected reward associated with an arm-action-timestep combination.

To formalize the objective function, recall that the agent seeks to maximize total expected reward, \(\mathbb{E}_\pi[R(\cdot)]\). For clarity of exposition, we specifically consider the linear global reward function \(R(r(s)) = \sum_{i=1}^N \sum_{t=0}^T s_t^i \). Note that this implies the discount rate, $\beta = 1$; however, the approach outlined here can be extended in a straightforward manner for $\beta \in (0,1)$. In order to compute the expected reward associated with taking action $a$ for arm $i$ at time $t$, we must consider: (1) what state is the arm currently in (i.e., what is the realized value of $s_t^i \in \{0,1\}$)? (2) when the arm transitions from $s_t$ to $s_{t+1}$ by virtue of taking action $a$, what reward, $r(\cdot)$, should we expect to earn?

Because we define $r(s)=s$, (2) can be reframed as: what is the probability $p(s_{t+1} = 1 | s_t^i, a_t^i)$ that action $a$ causes a transition from $s_t$ to the adherent state? Because each arm's state at time $t$ is stochastic, depending not only on the sequence of actions taken in previous timesteps, but the associated set of stochastic transitions informed by the arm's underlying MDP, each coefficient of our objective function must be computed as the expectation over the possible values of $s_t \in \mathcal{S}$:
\begin{align}
    \vec{c} &= \mathbb{E}_s[p(s_{t+1} = 1 | x_{i,a,t},  s_t)] &\forall i,a,t \in [N] \times \mathcal{A} \times [T]\\
    & = \frac{1}{2^t}\sum_{s \in \mathcal{S}} p(s_t = s)\sum_{s^\prime \in S}p(s_{t+1} = s^\prime | x_{i,a,t}, s_t = s)r(s^\prime) &\forall i,a,t \in [N] \times \mathcal{A} \times [T]\\
    & = \frac{1}{2^t}\sum_{s \in \mathcal{S}} p(s_t = s)p(r(s_{t+1}) = 1 | x_{i,a,t}, s_t = s) & \forall i,a,t \in [N] \times \mathcal{A} \times [T]\\
    & = \frac{1}{2^t}\sum_{s \in \mathcal{S}} p(s_t = s)p(s_{t+1} = 1 | x_{i,a,t}, s_t = s) &\forall i,a,t \in [N] \times \mathcal{A} \times [T] \label{eqn:IPcoeffs}
\end{align}

Within the context of this IP, the time-indexed fairness constraint  we introduce in Section~\ref{sec:rmabModel} can be more specifically defined as either an \emph{integer periodicity} or \emph{minimum selection fraction} constraint. We formalize each of these below:

The \textbf{integer periodicity constraint} allows a decision-maker to guarantee that arm \(i\) is pulled at least once within each period of \(\nu\) days. We define this constraint as a function $g$, over the vector of actions, $\vec{a}^i$ associated with arm $i$, and user-defined interval length $\nu \in [1, T]$: 
 \begin{align}
    g(\vec{a}^i) = &\sum_{t=j\nu+1}^{(j+1)\nu} a_t^i \geq 1 \\ 
    & \forall j \in \left\{0, 1, 2, \dots \left\lceil \frac{T}{\nu}\right\rceil \right\} \text{; }\forall i \in \{1, 2, \dots N\} \nonumber
    \label{eqn:minSelFrac}
\end{align}   

The \textbf{minimum selection fraction constraint} introduced by~\citet{li2019combinatorial} forces the agent to pull arm \(i\) at least a minimum fraction, $\psi \in (0,1)$, of the total number of steps, but is agnostic to how these pulls are distributed over time. We define this constraint, $g^\prime$, as a function over the vector of actions, $\vec{a}^i$ associated with arm $i$ and user-defined $\psi$: 
\begin{align}
    g^\prime(\vec{a}^i) = \frac{1}{T}\sum_{t=1}^{T} a_t^i \geq \psi \ \forall i \in \{1,2, \dots N\}
\end{align}

The resulting integer program is given by: 
\begin{align}
\text{max} \quad &
c^T x \label{alg:IP}\\
\text{s.t.} \quad & \sum_{a=1}^{|A|} x_{i,a,t} = 1 \quad &\forall i \in [N], t \in [T] \quad & \begin{tabular}[t]{@{}l@{}}\textcolor{RoyalBlue}{(a) Select exactly one action per}\\ \textcolor{RoyalBlue}{\;\;\;\;  arm $i$ at each $t$}\end{tabular} \nonumber \\
\text{\textcolor{cadetBlue}{\textit{if int. per: }}}&\sum_{t \in I_j} x_{i,1,t} \geq 1 &\quad \forall j \in \left\{0,1,\dots \frac{T-\nu}{\nu}\right\} \quad & \begin{tabular}[t]{@{}l@{}}\textcolor{RoyalBlue}{(b.i) Pull each arm $i$ at least once}\\ \textcolor{RoyalBlue}{\;\;\;\;\;\; during each interval of length $\nu$}\end{tabular} \nonumber \\
\text{\textcolor{cadetBlue}{\textit{if min. sel: }}} &\frac{1}{T}\sum_{t=1}^{T} x_{i,1,t} \geq \psi \quad  &\psi \in (0,1), \forall i \in N \quad & \begin{tabular}[t]{@{}l@{}}\textcolor{RoyalBlue}{(b.ii) Pull each arm $i$ at least a}\\ \textcolor{RoyalBlue}{\;\;\;\;\;\;\; minimum fraction $\psi$ of $T$ rounds}\end{tabular} \nonumber \\
&\sum_{i=1}^{N} x_{i,1,t} = k \quad &\forall t \in [T] \quad & \text{\textcolor{RoyalBlue}{(c) Pull exactly $k$ arms at each $t$}} \nonumber \\
&x_{i,a,t} \in \{0,1\} \quad &\forall i\in [N], a \in \mathcal{A}, t \in [T] \quad & \begin{tabular}[t]{@{}l@{}}\textcolor{RoyalBlue}{(d) Each arm-action-timestep choice}\\ \textcolor{RoyalBlue}{\;\;\;\; is a binary decision variable}\end{tabular}\nonumber
\end{align} 

\subsubsection{LP Relaxation and Integrality of Solution}
\label{sec:iptu}
We now prove that the IP we have formulated in Section~\ref{sec:ip} has an attractive property: namely, any feasible problem instance will produce a totally unimodular constraint matrix. Our proof leverages a theorem introduced by \citeauthor{ghouila1962caracterisation} and restated below for convenience, which can be used to determine whether a matrix, $A \in \mathbb{R}^{m \times n}$ is totally unimodular: 

\begin{lemma}(\citet{ghouila1962caracterisation})
A matrix $A \in \mathbb{Z}^{m \times n}$ is totally unimodular (TU) if and only if for every subset of the rows $R \subseteq [m]$, there is a partition $R = R_1 \cup R_2$ such that for every $j \in [n]$, 
\begin{equation}
    \sum_{i \in R_1} A_{ij} - \sum_{i \in R_2} A_{ij} \in \{-1,0,1\}
\end{equation}
\label{lemma:gh}
\end{lemma}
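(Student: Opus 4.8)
The plan is to prove the stated equivalence in both directions, after first recasting the partition condition in a more convenient form: choosing a partition $R = R_1 \cup R_2$ is the same as choosing a sign vector $\chi \in \{-1,0,1\}^m$ with $\chi_i = +1$ for $i \in R_1$, $\chi_i = -1$ for $i \in R_2$, and $\chi_i = 0$ for $i \notin R$; the requirement is then exactly that $(\chi^\top A)_j \in \{-1,0,1\}$ for every column $j$. So the lemma asserts that $A$ is TU iff for each set $R$ of rows there is a $\pm 1$ labelling of those rows under which every column of the labelled submatrix sums to something in $\{-1,0,1\}$. I would establish necessity ($A$ TU $\Rightarrow$ the partition property) by a polyhedral rounding argument, and sufficiency (the partition property $\Rightarrow$ $A$ TU) by induction on the order of a square submatrix.

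For \emph{necessity}, fix $R$, let $A_R$ be the row-submatrix of $A$ on $R$, and let $\sigma := A_R^\top \mathbf{1}$ be its vector of column sums. It is enough to find an integral $y \in \{0,1\}^R$ with $\lfloor \tfrac12 \sigma \rfloor \le A_R^\top y \le \lceil \tfrac12 \sigma \rceil$ componentwise: taking $R_1 := \{ i : y_i = 1\}$ and $R_2 := R \setminus R_1$ gives $\sum_{R_1} A_{ij} - \sum_{R_2} A_{ij} = 2(A_R^\top y)_j - \sigma_j$, which the floor--ceiling bracket forces into $\{-1,0,1\}$. Such a $y$ is exactly an integral point of the polytope $P := \{\, y : 0 \le y \le \mathbf{1},\ \lfloor \tfrac12 \sigma \rfloor \le A_R^\top y \le \lceil \tfrac12 \sigma \rceil \,\}$. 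Its constraint matrix is built from $A_R^\top$ by transposing, taking a submatrix, and stacking copies of $\pm I$ — each operation preserves total unimodularity — so the constraint matrix is TU and the right-hand side integral; hence $P$ is an integral polyhedron, and being bounded and nonempty (it contains $\tfrac12 \mathbf{1}$) it has an integral vertex, which is the $y$ we want.

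For \emph{sufficiency}, I would show every square submatrix $B$ of $A$, of order $t$, has $\det B \in \{0,\pm 1\}$, by induction on $t$. The case $t = 1$ follows by applying the property to single-row sets, which pins every entry of $A$ to $\{-1,0,1\}$. For $t \ge 2$, take a nonsingular $B$ with $d := \det B$ and suppose for contradiction $|d| \ge 2$. By the inductive hypothesis every entry of the adjugate $\mathrm{adj}(B)$ lies in $\{-1,0,1\}$, so its first \emph{row} $c^*$ is a nonzero vector in $\{-1,0,1\}^t$ with $c^{*\top} B = d\, e_1^\top$. Applying the partition property to the set of rows of $A$ indexed by the support of $c^*$ yields a sign vector $\chi \in \{-1,0,1\}^t$ supported on that same set with $\chi^\top B = \delta^\top$ for some $\delta \in \{-1,0,1\}^t$. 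Now set $u := \tfrac12(c^* + \chi)$; on the common support $c^*_i$ and $\chi_i$ are each $\pm 1$, so $u$ is an \emph{integer} vector in $\{-1,0,1\}^t$. But $u^\top B = \tfrac12(d\, e_1^\top + \delta^\top)$, and integrality forces $\delta_j$ even (hence $0$) for $j \ge 2$ and $\delta_1 \equiv d \pmod 2$. If $\delta = 0$ then $\chi^\top B = 0$ with $\chi \ne 0$, contradicting nonsingularity of $B$. Otherwise $d$ is odd with $|d| \ge 3$ and $\delta = \pm e_1$; after swapping $R_1$ and $R_2$ if necessary, $\delta = e_1$, so $u^\top B = \tfrac{d+1}{2} e_1^\top$, and since $B$ is invertible this determines $u = \tfrac{d+1}{2d}\, c^*$. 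As $\tfrac{d+1}{2d} \in (0,1)$ whenever $|d| \ge 2$, the entries of $u$ on the (nonempty) support of $c^*$ are $\pm \tfrac{d+1}{2d} \notin \{-1,0,1\}$, a contradiction. Hence $|d| = 1$ and the induction closes.

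The main obstacle is the sufficiency direction, and within it two points need care: the adjugate vector must be taken as a \emph{row} (so its support is a genuine set of rows of $A$ and the row-version of the partition property is the one that applies), and the parity-plus-uniqueness step — deducing $\delta$ from the integrality of $u^\top B$ and then ruling out the surviving case $\delta = \pm e_1$ via nonsingularity of $B$ — is where the argument actually bites. Necessity is comparatively routine once one invokes the standard facts that submatrices, transposes, and $\pm I$-augmentations of TU matrices are TU, and that a bounded polyhedron with TU constraint matrix and integral right-hand side has only integral vertices.
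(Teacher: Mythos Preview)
The paper does not prove this lemma: it is stated as a classical characterization due to Ghouila-Houri (1962) and immediately invoked as a black box to verify total unimodularity of the IP's constraint matrix in the theorem that follows. There is therefore no ``paper's own proof'' to compare against.

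That said, your proof is correct and is essentially the standard textbook argument (cf.\ Schrijver, \emph{Theory of Linear and Integer Programming}). The necessity direction via an integral-polyhedron rounding of $\tfrac{1}{2}\mathbf{1}$ is the usual route, and your sufficiency direction is the classical cofactor/adjugate argument. One small point of polish: in the final contradiction you write that the entries $\pm\tfrac{d+1}{2d}$ fail to lie in $\{-1,0,1\}$; it is cleaner (and what you actually need) to say they fail to be \emph{integers}, since $u$ was shown to be an integer vector. The verification that $\tfrac{d+1}{2d}\in(0,1)$ for all $|d|\ge 2$ already covers this, so the argument stands as written.
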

\begin{theorem}
Within the context of the integer program outlined in Appendix \ref{sec:ip}, any feasible problem instance will produce a constraint matrix that is totally unimodular (TU).
\end{theorem}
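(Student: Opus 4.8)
The plan is to recognize a bipartite incidence matrix buried inside the constraint matrix and then rebuild the full matrix from it by borderings that manifestly preserve total unimodularity. Index the columns by triples $(i,a,t)$ and group the rows into three families: the ``exactly one action'' rows (a), the time-indexed fairness rows (b) (either the integer-periodicity rows, or the single minimum-selection row per arm, the latter cleared of its $1/T$ factor so all entries are integral), and the ``exactly $k$ pulls per timestep'' rows (c). Applying Ghouila-Houri (Lemma~\ref{lemma:gh}) directly to the whole matrix is awkward: a column $(i,1,t)$ can carry three $1$'s, one from each family, and no coloring of the rows that is constant on families balances every such column. So instead I would proceed in stages.

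Stage one: the fairness and budget rows involve only action-$1$ variables, and restricted to the columns $\{(i,1,t)\}$ each such column gets a $1$ from exactly one (c)-row (timestep $t$) and from at most one (b)-row (arm $i$'s minimum-selection row, or the unique interval $I_j$ containing $t$), every nonzero entry being $1$. Hence this submatrix $M_0$ is the vertex-edge incidence matrix of a bipartite graph with vertex classes ``the (b)-rows'' and ``the (c)-rows''; for any row subset $R$ the partition $R_1 = R\cap\{\text{(b)-rows}\}$, $R_2 = R\cap\{\text{(c)-rows}\}$ has at most one $1$ of each color in every column, so Ghouila-Houri's condition holds and $M_0$ is totally unimodular.

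Stage two: append the (a)-rows one at a time. Since all previously added rows involve only action-$1$ variables, and the only action-$1$ variable in $x_{i,0,t}+x_{i,1,t}=1$ is $x_{i,1,t}$, the (a)-row for $(i,t)$ is a unit row at the moment it is added; bordering a TU matrix by a row (or column) whose unique nonzero is $\pm 1$ preserves total unimodularity (expand the determinant of any square submatrix along it). Stage three: now that all rows are present, append the action-$0$ columns $(i,0,t)$, each of which has its unique nonzero at the (a)-row for $(i,t)$, again a unit column, again preserving TU. The resulting matrix is precisely the constraint matrix of the integer program of Appendix~\ref{sec:ip}.

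The one subtlety to watch is the order of stages two and three: the (a)-rows must be bordered on while only the action-$1$ columns are present (afterwards they have two nonzeros and cease to be unit rows), and the action-$0$ columns must be added afterwards so that they are not identically zero. Minor degeneracies---a timestep in no interval $I_j$ when $\nu\nmid T$, which only reduces a column's support, and the harmless rescaling of the minimum-selection row---should be noted in passing; feasibility of the instance itself is never used, since the matrix structure is the same regardless. Beyond spotting the bipartite incidence structure the argument is routine, resting only on the total unimodularity of bipartite incidence matrices and on stability of TU under bordering by unit rows and columns.
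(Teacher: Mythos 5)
Your proof is correct, and it takes a genuinely different route from the paper's. The paper applies Ghouila--Houri (Lemma~\ref{lemma:gh}) directly to the full constraint matrix, running a case analysis over which of the three constraint families $\{a,b,c\}$ appear in the chosen row subset $\mathbf{R}$ and, in the hardest case, invoking an explicit greedy partitioning procedure (Algorithm~\ref{alg:partition}) that places the $(a)$-rows in $\mathbf{R}_1$, the $(c)$-rows in $\mathbf{R}_2$, and routes each $(b)$-row to whichever side avoids a column sum exceeding $\pm 1$. You instead peel the matrix apart structurally: the $(b)$- and $(c)$-rows restricted to the active-action columns form a bipartite incidence matrix (each such column meets exactly one budget row and at most one fairness row), which is totally unimodular by the trivial two-coloring $\mathbf{R}_1 = \mathbf{R}\cap\{(b)\text{-rows}\}$, $\mathbf{R}_2 = \mathbf{R}\cap\{(c)\text{-rows}\}$; the $(a)$-rows and the passive-action columns are then restored by bordering with unit rows and unit columns, which preserves TU by cofactor expansion. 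Your argument is shorter, dispenses with the case analysis and the partition algorithm entirely, and exposes the combinatorial reason the result holds (a timestep/interval bipartite structure on the active variables), so it generalizes immediately to any additional constraint family that touches only active-action variables with at most one nonzero per column; the paper's argument, by contrast, is fully constructive in the Ghouila--Houri sense, exhibiting an explicit balanced 2-coloring for every row subset. You were right to flag the ordering of your bordering stages and the two degeneracies: the paper handles the minimum-selection constraint by folding it into integer periodicity with $\nu = T$ (and an integral right-hand side $\lceil T\psi\rceil$) rather than by rescaling the row as you do, but since rescaling affects only the right-hand side's integrality and not the $0/1$ coefficient matrix, your treatment is equivalent for the TU claim.
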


\begin{proof}
To begin, we establish the dimensions of any such constraint matrix $\mathbf{A}$ and note the maximum possible column-wise sum that each of its component submatrices may contribute. Note that the minimum selection fraction constraint (b.ii in Equation~\ref{alg:IP}), which requires the agent to pull each arm $i$ at least a minimum fraction, $\psi \in (0,1)$, of $T$ rounds, can be thought of as a special case of the integer periodicity constraint, (b.i in Equation~\ref{alg:IP}),
where $\nu = T$ and each arm must be pulled at least \(\lceil T\psi\rceil\) times. As such, we assume that at most one of the time-indexed fairness constraints can be selected, and focus on the more general of the two, which is the integer periodicity constraint. For notational convenience, we refer to constraints by their alphabetic identifiers. Let (b) represent the integer periodicity constraint, and define a function $\upvarphi: r \in \mathbf{R} \subseteq \mathbf{A} \mapsto e \in \{a, b, c\}$ that maps each row to its corresponding constraint type. 

First, recall that each $x_{i,a,t}$ represents a single binary decision variable, and corresponds to a column in $\mathbf{A}$. There are  $N \times |\mathcal{A}| \times T$ such columns. Next, note that constraint (a) enforces the requirement that we select \emph{exactly} one action per arm per timestep. Formally, $\forall i, t \in N \times T$, $\exists! a \in \mathcal{A}$ s.t. $x_{i,a,t} = 1$. Correspondingly, $\forall a^{\prime} \in \mathcal{A} \setminus a, x_{i, a^{\prime}, t} = 0$. The column vectors of the associated sub-matrix, $\mathbf{A}_a \in \mathbb{Z}^{NT \times N|\mathcal{A}|T}$, are indexed by disjoint $(i, a, t) \in N \times |\mathcal{A}| \times T$; thus, each column vector contains a single non-zero entry and for $\mathbf{R}_a \subseteq \mathbf{A}_a$, taking the column-wise sum will yield a vector $\vec{v} \in \mathbb{Z}^{N|\mathcal{A}|T}$ with every entry equal to $1$.

In a similar vein, equity constraint (b) enforces the requirement that we must pull each arm, $i$ at least once during each interval $I_j$ of length $\nu_i$. Within the associated sub-matrix, $\mathbb{A}_{b} \in \mathbb{Z}^{ N\lceil\frac{T}{\nu_i}\rceil \times N|\mathcal{A}|T}$, each column that corresponds to a passive action (e.g., $x_{i, a=0, t}$) will have \emph{only} zero-valued entries, since passive action decision variables are not impacted by constraint (b). Conversely, each column that corresponds to an active action (e.g., $x_{i, a=1, t}$) will have a single non-zero entry. Each active action column corresponding to a specific arm-timestep can be mapped to exactly one interval. Thus, for $\mathbf{R}_b \subseteq \mathbf{A}_b$, taking the column-wise sum will yield a vector $\vec{v} \in \mathbb{Z}^{N|\mathcal{A}|T}$ with every entry taking a value $\in \{0, 1\}$. 

The budget constraint (c) enforces the requirement that we must pull exactly $k$ of the $N$ arms at each timestep. Much like equity constraint (b), only columns corresponding to active actions are impacted. Thus, within the associated sub-matrix, $\mathbf{A}_{c} \in \mathbb{Z}^{ T \times N|\mathcal{A}|T}$, each column that corresponds to a passive action (e.g., $x_{i, a=0, t}$) will have \emph{only} zero-valued entries, while each column that corresponds to an active action (e.g., $x_{i, a=1, t}$) can be mapped to a single timestep, and will have a single non-zero entry. Thus, for $\mathbf{R}_c \subseteq \mathbf{A}_c$, taking the column-wise sum also yield a vector $\vec{v} \in \mathbb{Z}^{N|\mathcal{A}|T}$ with every entry taking a value $\in \{0, 1\}$. 

The complete constraint matrix $\mathbf{A}$ thus contains $NT + N\lceil\frac{T}{\nu_i}\rceil + T$ rows. Three possible cases arise when we consider every subset of these rows: (1) $\mathbf{R} \subsetneq \mathbf{A} = \emptyset$; (2) $\mathbf{R} \subsetneq \mathbf{A}; \mathbf{R} \cap \mathbf{A} \neq \emptyset$; (3) $\mathbf{R} \subseteq \mathbf{A}$. 

\begin{case} 
$\mathbf{R} \subsetneq \mathbf{A} = \emptyset$. To satisfy Lemma \ref{lemma:gh}, partition $\mathbf{R}$ such that $\mathbf{R} = \mathbf{R}_1 \cup \mathbf{R}_2 = \emptyset \cup \emptyset$. Then, for every $j \in [n]$, $$\sum_{i \in \mathbf{R}_1} \mathbf{A}_{ij} - \sum_{i \in \mathbf{R}_2} \mathbf{A}_{ij} = 0 - 0; 0 \in \{-1,0,1\} \text{ \qedsymbol}$$
\end{case}

\begin{case}
\label{case:R2}
$\mathbf{R} \subsetneq \mathbf{A}; \mathbf{R} \cap \mathbf{A} \neq \emptyset$. If we consider $\cup_{r \in \mathbf{R}} \upvarphi(r)$, there are $\sum_{k=1}^{3} {3 \choose k}$ possible sets of observed constraint types: $\{a\} \lor \{b\} \lor \{c\} \lor \{a,b\} \lor \{a,c\} \lor \{b,c\} \lor \{a,b,c\}$. 

\begin{enumerate}
    \label{case:R21}
    \item If $|\cup_{r \in \mathbf{R}} \upvarphi(r)| = 1$, then any partition of $\mathbf{R}$ will satisfy Lemma \ref{lemma:gh}. Without loss of generality, let each row $r \in \mathbf{R}$ belong to $\mathbf{R}_1$ and $\mathbf{R}_2 = \emptyset$. 
    \begin{enumerate}[label=\roman*.]
        \item If $\cup_{r \in \mathbf{R}} \upvarphi(r) = \{a\}$, taking the column-wise sum of $\mathbf{R}_1$ will yield a vector $\vec{v} \in \mathbb{Z}^{N|\mathcal{A}|T}$ with every entry $\in \{1\} \text{ if } \mathbf{R} \subseteq \mathbf{A}_a, \text{ and } \in \{0, 1\} \text{ otherwise}$. Thus, $\forall j \in [N|\mathcal{A}|T]$, $$\sum_{i \in \mathbf{R}_1} \mathbf{A}_{ij} - \sum_{i \in \mathbf{R}_2} \mathbf{A}_{ij} = 0 - 0 \lor 1-0; \{0,1\} \subsetneq \{-1,0,1\} \text{ \qedsymbol}$$
        \item If $\cup_{r \in \mathbf{R}} \upvarphi(r) = \{b\} \lor \{c\}$, taking the column-wise sum of $\mathbf{R}_1$ will yield a vector $\vec{v} \in \mathbb{Z}^{N|\mathcal{A}|T}$ with every entry corresponding to a passive action column $\in \{0\}$ and every entry corresponding to an active action column $\in \{1\} \text{ if } \mathbf{R} \subseteq \mathbf{A}_{b \lor c}, \text{ and } \in \{0, 1\} \text{ otherwise}$. Thus, $\forall j \in [N|\mathcal{A}|T]$, $$\sum_{i \in \mathbf{R}_1} \mathbf{A}_{ij} - \sum_{i \in \mathbf{R}_2} \mathbf{A}_{ij} = 0 - 0 \lor 1-0; \{0,1\} \subsetneq \{-1,0,1\} \text{ \qedsymbol}$$
    \end{enumerate}
    \item If $|\cup_{r \in \mathbf{R}} \upvarphi(r)| = 2$, without loss of generality, partition as follows: sort the elements $\in \cup_{r \in \mathbf{R}} \upvarphi(r)$ lexicographically, and let $\mathbf{R}_1 = \{r | \upvarphi(r) = \min \cup_{r \in \mathbf{R}} \upvarphi(r)\}$ and $\mathbf{R}_2 = \mathbf{R} \setminus \mathbf{R}_1$. Per Case \hyperref[case:R2]{2.1 (i) and (ii)}, taking the column-wise sums of $\mathbf{R}_1$ and $\mathbf{R}_2$ will yield two vectors, $\vec{v_1}, \vec{v_2} \in \mathbb{Z}^{N|\mathcal{A}|T}$, each of which will contain only entries $\in \{0, 1\}$. Thus, $\forall j \in [N|\mathcal{A}|T]$, $$\sum_{i \in \mathbf{R}_1} \mathbf{A}_{ij} - \sum_{i \in \mathbf{R}_2} \mathbf{A}_{ij} = 0 - 0 \lor 0-1 \lor 1-0 \lor 1-1; \{-1,0,1\} \subseteq \{-1,0,1\} \text{ \qedsymbol}$$
    
    \item If $|\cup_{r \in \mathbf{R}} \upvarphi(r)| = 3$, partition according to Algorithm \ref{alg:partition}:

    \begin{algorithm}[H]
    \caption{Partition for Case 2.3}
    \label{alg:partition}
    \begin{algorithmic}[1] 
    \Procedure{Partition}{$N$, $\mathcal{A}$, $T$, $\mathbf{R}$}
    \State{$C_{R_1}; C_{R_2} \gets \{0\}^{N|\mathcal{A}|T}$} \Comment{\textcolor{RoyalBlue}{Initialize two sets of counters}}
    \State{$\mathbf{R}_1; \mathbf{R}_2 \gets \emptyset$}
    
    \For{$\text{element } e \in \{a,b,c\}$}
        \State{$\mathbf{R}_{e} \gets \{r|\upvarphi(r) = e\}$} \Comment{\textcolor{RoyalBlue}{Use $\upvarphi$ to partition the rows of $\mathbf{R}$ by constraint type}}
    \EndFor

    \For{$r \in \mathbf{R}_a$}
        \State{$\mathbf{R}_1 \gets \mathbf{R}_1 \cup r$}\Comment{\textcolor{RoyalBlue}{Let $r \in \mathbf{R}_1$}}
        \For{$i \in 0:N|\mathcal{A}|T$}
            \If{$r[i] > 0$} \Comment{\textcolor{RoyalBlue}{For each non-zero entry $\in r$}}
                \State{$C_{R_1}[i] \gets C_{R_1}[i] + 1$}~\Comment{\textcolor{RoyalBlue}{Increment corresponding $C_{R_1}$ counter}}
            \EndIf
        \EndFor
    \EndFor 
    
    \For{$r \in \mathbf{R}_b$}
        \State{\texttt{flag} $\gets $ \texttt{false}}
            \For{$i \in 0:N|\mathcal{A}|T$}
                \If{$r[i] > 0 \land C_{R_1}[i] > 0$}\Comment{\textcolor{RoyalBlue}{If any non-zero element $r_i$ has $C_{R_1} >0$}}
                    \State{\texttt{flag} $\gets $ \texttt{true}}\Comment{\textcolor{RoyalBlue}{Set flag to \texttt{true}}}
                \EndIf
            \EndFor 
        
        \If{\texttt{flag}}
            \State{$\mathbf{R}_2 \gets \mathbf{R}_2 \cup r$}\Comment{\textcolor{RoyalBlue}{Let $r \in \mathbf{R}_2$}}
            \For{$i \in 0:N|\mathcal{A}|T$}
                \If{$r[i] > 0$}\Comment{\textcolor{RoyalBlue}{For each non-zero entry $\in r$}}
                \State{$C_{R_2}[i] \gets C_{R_2}[i] + 1$}~\Comment{\textcolor{RoyalBlue}{Increment corresponding $C_{R_2}$ counter}}
                \EndIf
            \EndFor 
        \Else 
            \State{$\mathbf{R}_1 \gets \mathbf{R}_1 \cup r$}\Comment{\textcolor{RoyalBlue}{Let $r \in \mathbf{R}_1$}}
            \For{$i \in 0:N|\mathcal{A}|T$}
                \If{$r[i] > 0$}\Comment{\textcolor{RoyalBlue}{For each non-zero entry $\in r$}}
                \State{$C_{R_1}[i] \gets C_{R_1}[i] + 1$}~\Comment{\textcolor{RoyalBlue}{Increment corresponding $C_{R_1}$ counter}}
                \EndIf
            \EndFor 
        \EndIf 
    \EndFor

    \For{$r \in \mathbf{R}_c$}
    \State{$\mathbf{R}_2 \gets \mathbf{R}_2 \cup r$}\Comment{\textcolor{RoyalBlue}{Let $r \in \mathbf{R}_2$}}
        \For{$i \in 0:N|\mathcal{A}|T$}
            \If{$r[i] > 0$}\Comment{\textcolor{RoyalBlue}{For each non-zero entry $\in r$}}
                \State{$C_{R_2}[i] \gets C_{R_2}[i] + 1$}\Comment{\textcolor{RoyalBlue}{Increment corresponding $C_{R_2}$ counter}}
            \EndIf
        \EndFor
    \EndFor

    \Return{$\mathbf{R}_1; \mathbf{R}_2$}
    \EndProcedure
    \end{algorithmic}
    \end{algorithm}
    \setlength{\textfloatsep}{0.1cm}
    \setlength{\floatsep}{0.1cm}
\end{enumerate}

Taking the column-wise sums of the resulting $\mathbf{R}_1$ and $\mathbf{R}_2$ will yield two vectors $\vec{v_1}, \vec{v_2} \in \mathbb{Z}^{N|\mathcal{A}|T}$, which can contain entries $\in \{0,1\}$ and $\{0,1,2\}$, respectively. Note that since $\vec{v_2}$ is constructed by taking only rows with constraint types $\in \{b, c\}$, only entries corresponding to active action columns can take values $>1$. Moreover,  $\forall j \in [N|\mathcal{A}|T], \sum_{i \in \mathbf{R}_2} \mathbf{A}_{ij} = 2 \rightarrow \sum_{i \in \mathbf{R}_1} \mathbf{A}_{ij} = 1$. Thus, $\forall j \in [N\lvert\mathcal{A}\rvert T]$, $$\sum_{i \in \mathbf{R}_1} \mathbf{A}_{ij} - \sum_{i \in \mathbf{R}_2} \mathbf{A}_{ij} = 0 - 0 \lor 0-1 \lor 1-0 \lor 1-1 \lor 1-2; \{-1,0,1\} \subseteq \{-1,0,1\} \text{ \qedsymbol}$$
\end{case}

\begin{case} $R \subseteq A$. Since $|\cup_{r \in \mathbf{R}} \upvarphi(r)| = 3$, proceed as outlined in Case 2.3. Only a slight modification is required: since $\mathbf{R}$ is now equal to $\mathbf{A}$, taking the column-wise sums of the resulting $\mathbf{R}_1$ and $\mathbf{R}_2$ will yield two vectors $\vec{v_1}, \vec{v_2} \in \mathbb{Z}^{N\lvert\mathcal{A}\rvert T}$, which can contain entries $\in \{1\}$ and $\{0,2\}$, respectively. Thus, $\forall j \in [N\lvert\mathcal{A}\rvert T]$, $$\sum_{i \in \mathbf{R}_1} \mathbf{A}_{ij} - \sum_{i \in \mathbf{R}_2} \mathbf{A}_{ij} = 1-0 \lor 1-2; \{-1,1\} \subsetneq \{-1,0,1\}$$ 
\end{case}
\vspace{-2em}
\end{proof}
\vspace{-1em}

\subsubsection{Enumeration of Objective Function Coefficients}
\label{sec:ipObjFunc}
The key challenge we encounter when we seek to enumerate the IP outlined in Section~\ref{sec:ip} is that exact computation of the objective function coefficients, \(\vec{c}\in \mathbb{R}^{N\lvert \mathcal{A}\rvert T}\) is intractable. Each arm contributes $|\mathcal{A}|\times T$ coefficients, and while calculation is trivially parallelizable over arms, we must consider a probability tree like the one in Figure~\ref{tikz:IPcoeffs} for each arm. 
\tikzstyle{level 1}=[level distance=5cm, sibling distance=5.8cm]
\tikzstyle{level 2}=[level distance=5cm, sibling distance=1.5cm]
\tikzstyle{level 3}=[level distance=2cm, sibling distance=.25cm]
\tikzstyle{bag} = [text width=4em, text centered]
\tikzstyle{iatnode} = [text centered, circle, thick, draw=RoyalBlue, radius=1pt, fill=white]
\tikzstyle{end} = [circle, minimum width=3pt,fill, inner sep=0pt]
\tikzstyle{endnotnecessary} = [circle, minimum width=3pt, inner sep=0pt]
\tikzstyle{start} = [circle, minimum width=3pt, inner sep=0pt]
\tikzstyle{lookahead} = [RoyalBlue,thick,draw]
\tikzstyle{notnecessary} = [dashed,draw]
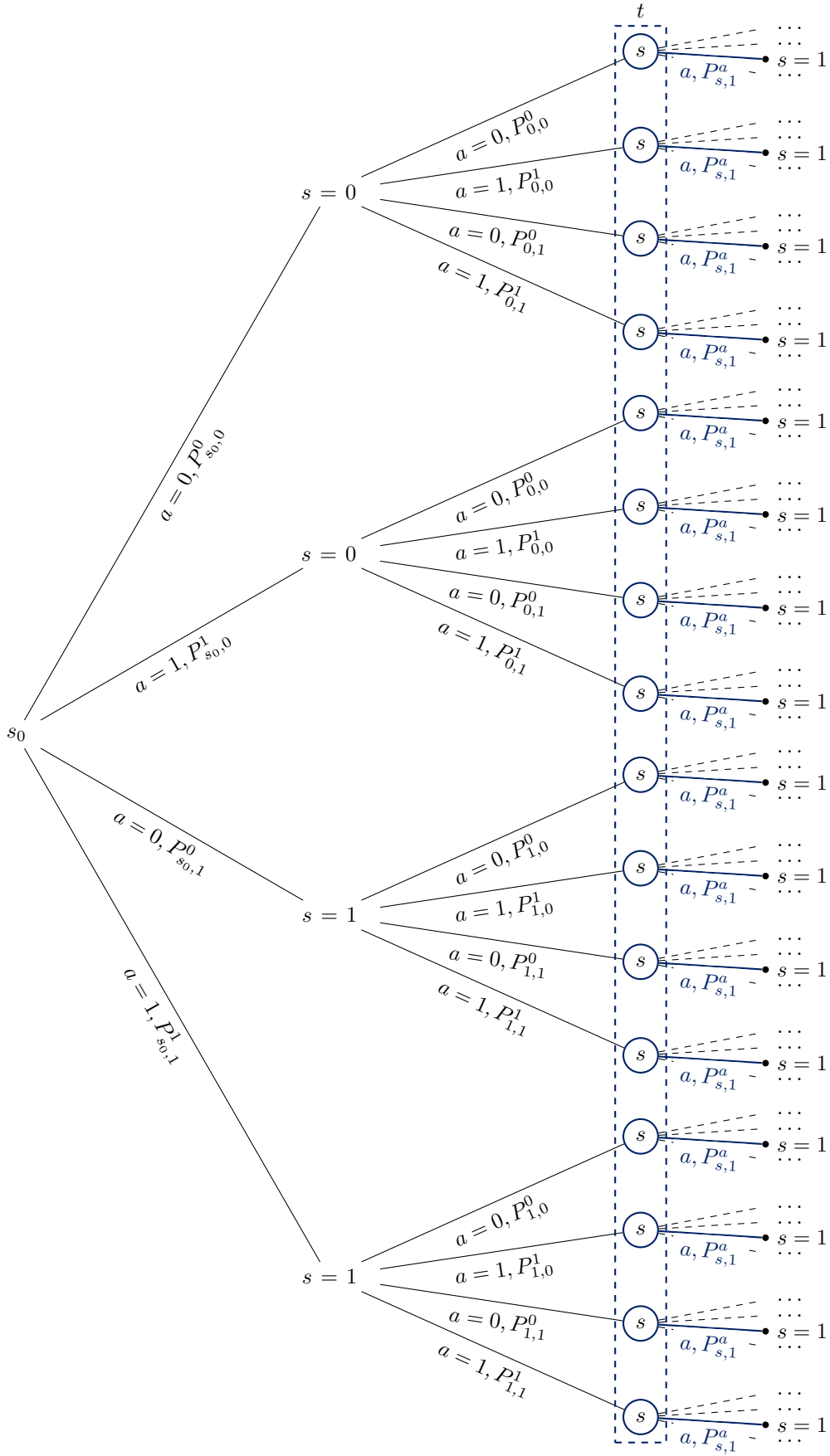
\begin{figure}[p]
\caption{Illustration of the probability tree for finding the coefficient corresponding to \(x_{i,a=0,t=2}\).}
\label{tikz:IPcoeffs}
\centering
\begin{tikzpicture}[grow=right, sloped]
\node(root)[bag] {$s_0$}
    child {
        node[bag] {$s=1$}        
            child {
                node[iatnode] {$s$} 
                    child {
                        node[endnotnecessary, label=right:
                            {$\dots$}] {}
                        edge from parent[notnecessary]
                        node[above] { }
                        node[below] { }
                    }
                    child {
                        node[end, label=right:
                            {$s=1$}] {}
                        edge from parent[lookahead]
                        node[above,fill=white] { }
                        node[below,fill=white] {$a, P^a_{s,1}$}
                    }
                    child {
                        node[endnotnecessary, label=right:
                            {$\dots$}] {}
                        edge from parent[notnecessary]
                        node[above] { }
                        node[below] { }
                    }
                    child {
                        node[endnotnecessary, label=right:
                            {$\dots$}] {}
                        edge from parent[notnecessary]
                        node[above] { }
                        node[below] { }
                    }
                edge from parent
                node[above] { }
                node[below]  {$a=1, P^1_{1,1}$}
            }
            child {
                node[iatnode] {$s$} 
                    child {
                        node[endnotnecessary, label=right:
                            {$\dots$}] {}
                        edge from parent[notnecessary]
                        node[above] { }
                        node[below] { }
                    }
                    child {
                        node[end, label=right:
                            {$s=1$}] {}
                        edge from parent[lookahead]
                        node[above, fill=white] { }
                        node[below, fill=white] {$a, P^a_{s,1}$}
                    }
                    child {
                        node[endnotnecessary, label=right:
                            {$\dots$}] {}
                        edge from parent[notnecessary]
                        node[above] { }
                        node[below] { }
                    }
                    child {
                        node[endnotnecessary, label=right:
                            {$\dots$}] {}
                        edge from parent[notnecessary]
                        node[above] { }
                        node[below] { }
                    }
                edge from parent
                node[above] { }
                node[below]  {$a=0, P^0_{1,1}$}
            }
            child {
                node[iatnode] {$s$} 
                    child {
                        node[endnotnecessary, label=right:
                            {$\dots$}] {}
                        edge from parent[notnecessary]
                        node[above] { }
                        node[below] { }
                    }
                    child {
                        node[end, label=right:
                            {$s=1$}] {}
                        edge from parent[lookahead]
                        node[above, fill=white] { }
                        node[below, fill=white] {$a, P^a_{s,1}$}
                    }
                    child {
                        node[endnotnecessary, label=right:
                            {$\dots$}] {}
                        edge from parent[notnecessary]
                        node[above] { }
                        node[below] { }
                    }
                    child {
                        node[endnotnecessary, label=right:
                            {$\dots$}] {}
                        edge from parent[notnecessary]
                        node[above] { }
                        node[below] { }
                    }
                edge from parent
                node[above] { }
                node[below]  {$a=1, P^1_{1,0}$}
            }
            child {
                node[iatnode] {$s$} 
                    child {
                        node[endnotnecessary, label=right:
                            {$\dots$}] {}
                        edge from parent[notnecessary]
                        node[above] { }
                        node[below] { }
                    }
                    child {
                        node[end, label=right:
                            {$s=1$}] {}
                        edge from parent[lookahead]
                        node[above, fill=white] { }
                        node[below, fill=white] {$a, P^a_{s,1}$}
                    }
                    child {
                        node[endnotnecessary, label=right:
                            {$\dots$}] {}
                        edge from parent[notnecessary]
                        node[above] { }
                        node[below] { }
                    }
                    child {
                        node[endnotnecessary, label=right:
                            {$\dots$}] {}
                        edge from parent[notnecessary]
                        node[above] { }
                        node[below] { }
                    }
                edge from parent
                node[above] { }
                node[below]  {$a=0, P^0_{1,0}$}
            }
            edge from parent 
            node[above] { }
            node[below]  {$a=1, P^1_{s_0, 1}$}
    }
    child {
        node[bag] {$s=1$}        
            child {
                node[iatnode] {$s$} 
                    child {
                        node[endnotnecessary, label=right:
                            {$\dots$}] {}
                        edge from parent[notnecessary]
                        node[above] { }
                        node[below] { }
                    }
                    child {
                        node[end, label=right:
                            {$s=1$}] {}
                        edge from parent[lookahead]
                        node[above,fill=white] { }
                        node[below,fill=white] {$a, P^a_{s,1}$}
                    }
                    child {
                        node[endnotnecessary, label=right:
                            {$\dots$}] {}
                        edge from parent[notnecessary]
                        node[above] { }
                        node[below] { }
                    }
                    child {
                        node[endnotnecessary, label=right:
                            {$\dots$}] {}
                        edge from parent[notnecessary]
                        node[above] { }
                        node[below] { }
                    }
                edge from parent
                node[above] { }
                node[below]  {$a=1, P^1_{1,1}$}
            }
            child {
                node[iatnode] {$s$} 
                    child {
                        node[endnotnecessary, label=right:
                            {$\dots$}] {}
                        edge from parent[notnecessary]
                        node[above] { }
                        node[below] { }
                    }
                    child {
                        node[end, label=right:
                            {$s=1$}] {}
                        edge from parent[lookahead]
                        node[above, fill=white] { }
                        node[below, fill=white] {$a, P^a_{s,1}$}
                    }
                    child {
                        node[endnotnecessary, label=right:
                            {$\dots$}] {}
                        edge from parent[notnecessary]
                        node[above] { }
                        node[below] { }
                    }
                    child {
                        node[endnotnecessary, label=right:
                            {$\dots$}] {}
                        edge from parent[notnecessary]
                        node[above] { }
                        node[below] { }
                    }
                edge from parent
                node[above] { }
                node[below]  {$a=0, P^0_{1,1}$}
            }
            child {
                node[iatnode] {$s$} 
                    child {
                        node[endnotnecessary, label=right:
                            {$\dots$}] {}
                        edge from parent[notnecessary]
                        node[above] { }
                        node[below] { }
                    }
                    child {
                        node[end, label=right:
                            {$s=1$}] {}
                        edge from parent[lookahead]
                        node[above, fill=white] { }
                        node[below, fill=white] {$a, P^a_{s,1}$}
                    }
                    child {
                        node[endnotnecessary, label=right:
                            {$\dots$}] {}
                        edge from parent[notnecessary]
                        node[above] { }
                        node[below] { }
                    }
                    child {
                        node[endnotnecessary, label=right:
                            {$\dots$}] {}
                        edge from parent[notnecessary]
                        node[above] { }
                        node[below] { }
                    }
                edge from parent
                node[above] { }
                node[below]  {$a=1, P^1_{1,0}$}
            }
            child {
                node[iatnode] {$s$} 
                    child {
                        node[endnotnecessary, label=right:
                            {$\dots$}] {}
                        edge from parent[notnecessary]
                        node[above] { }
                        node[below] { }
                    }
                    child {
                        node[end, label=right:
                            {$s=1$}] {}
                        edge from parent[lookahead]
                        node[above, fill=white] { }
                        node[below, fill=white] {$a, P^a_{s,1}$}
                    }
                    child {
                        node[endnotnecessary, label=right:
                            {$\dots$}] {}
                        edge from parent[notnecessary]
                        node[above] { }
                        node[below] { }
                    }
                    child {
                        node[endnotnecessary, label=right:
                            {$\dots$}] {}
                        edge from parent[notnecessary]
                        node[above] { }
                        node[below] { }
                    }
                edge from parent
                node[above] { }
                node[below]  {$a=0, P^0_{1,0}$}
            }
            edge from parent 
            node[above] { }
            node[below]  {$a=0, P^0_{s_0, 1}$}
    }
    child {
        node[bag] {$s=0$}        
            child {
                node[iatnode] {$s$} 
                    child {
                        node[endnotnecessary, label=right:
                            {$\dots$}] {}
                        edge from parent[notnecessary]
                        node[above] { }
                        node[below] { }
                    }
                    child {
                        node[end, label=right:
                            {$s=1$}] {}
                        edge from parent[lookahead]
                        node[above,fill=white] { }
                        node[below,fill=white] {$a, P^a_{s,1}$}
                    }
                    child {
                        node[endnotnecessary, label=right:
                            {$\dots$}] {}
                        edge from parent[notnecessary]
                        node[above] { }
                        node[below] { }
                    }
                    child {
                        node[endnotnecessary, label=right:
                            {$\dots$}] {}
                        edge from parent[notnecessary]
                        node[above] { }
                        node[below] { }
                    }
                edge from parent
                node[above] { }
                node[below]  {$a=1, P^1_{0,1}$}
            }
            child {
                node[iatnode] {$s$} 
                    child {
                        node[endnotnecessary, label=right:
                            {$\dots$}] {}
                        edge from parent[notnecessary]
                        node[above] { }
                        node[below] { }
                    }
                    child {
                        node[end, label=right:
                            {$s=1$}] {}
                        edge from parent[lookahead]
                        node[above, fill=white] { }
                        node[below, fill=white] {$a, P^a_{s,1}$}
                    }
                    child {
                        node[endnotnecessary, label=right:
                            {$\dots$}] {}
                        edge from parent[notnecessary]
                        node[above] { }
                        node[below] { }
                    }
                    child {
                        node[endnotnecessary, label=right:
                            {$\dots$}] {}
                        edge from parent[notnecessary]
                        node[above] { }
                        node[below] { }
                    }
                edge from parent
                node[above] { }
                node[below]  {$a=0, P^0_{0,1}$}
            }
            child {
                node[iatnode] {$s$} 
                    child {
                        node[endnotnecessary, label=right:
                            {$\dots$}] {}
                        edge from parent[notnecessary]
                        node[above] { }
                        node[below] { }
                    }
                    child {
                        node[end, label=right:
                            {$s=1$}] {}
                        edge from parent[lookahead]
                        node[above, fill=white] { }
                        node[below, fill=white] {$a, P^a_{s,1}$}
                    }
                    child {
                        node[endnotnecessary, label=right:
                            {$\dots$}] {}
                        edge from parent[notnecessary]
                        node[above] { }
                        node[below] { }
                    }
                    child {
                        node[endnotnecessary, label=right:
                            {$\dots$}] {}
                        edge from parent[notnecessary]
                        node[above] { }
                        node[below] { }
                    }
                edge from parent
                node[above] { }
                node[below]  {$a=1, P^1_{0,0}$}
            }
            child {
                node[iatnode] {$s$} 
                    child {
                        node[endnotnecessary, label=right:
                            {$\dots$}] {}
                        edge from parent[notnecessary]
                        node[above] { }
                        node[below] { }
                    }
                    child {
                        node[end, label=right:
                            {$s=1$}] {}
                        edge from parent[lookahead]
                        node[above, fill=white] { }
                        node[below, fill=white] {$a, P^a_{s,1}$}
                    }
                    child {
                        node[endnotnecessary, label=right:
                            {$\dots$}] {}
                        edge from parent[notnecessary]
                        node[above] { }
                        node[below] { }
                    }
                    child {
                        node[endnotnecessary, label=right:
                            {$\dots$}] {}
                        edge from parent[notnecessary]
                        node[above] { }
                        node[below] { }
                    }
                edge from parent
                node[above] { }
                node[below]  {$a=0, P^0_{0,0}$}
            }
            edge from parent 
            node[above] { }
            node[below]  {$a=1, P^1_{s_0, 0}$}
    }
    child {
        node[bag] {$s=0$}        
            child {
                node[iatnode] {$s$} 
                    child {
                        node[endnotnecessary, label=right:
                            {$\dots$}] {}
                        edge from parent[notnecessary]
                        node[above] { }
                        node[below] { }
                    }
                    child {
                        node[end, label=right:
                            {$s=1$}] {}
                        edge from parent[lookahead]
                        node[above,fill=white] { }
                        node[below,fill=white] {$a, P^a_{s,1}$}
                    }
                    child {
                        node[endnotnecessary, label=right:
                            {$\dots$}] {}
                        edge from parent[notnecessary]
                        node[above] { }
                        node[below] { }
                    }
                    child {
                        node[endnotnecessary, label=right:
                            {$\dots$}] {}
                        edge from parent[notnecessary]
                        node[above] { }
                        node[below] { }
                    }
                edge from parent
                node[above] { }
                node[below]  {$a=1, P^1_{0,1}$}
            }
            child {
                node[iatnode] {$s$} 
                    child {
                        node[endnotnecessary, label=right:
                            {$\dots$}] {}
                        edge from parent[notnecessary]
                        node[above] { }
                        node[below] { }
                    }
                    child {
                        node[end, label=right:
                            {$s=1$}] {}
                        edge from parent[lookahead]
                        node[above, fill=white] { }
                        node[below, fill=white] {$a, P^a_{s,1}$}
                    }
                    child {
                        node[endnotnecessary, label=right:
                            {$\dots$}] {}
                        edge from parent[notnecessary]
                        node[above] { }
                        node[below] { }
                    }
                    child {
                        node[endnotnecessary, label=right:
                            {$\dots$}] {}
                        edge from parent[notnecessary]
                        node[above] { }
                        node[below] { }
                    }
                edge from parent
                node[above] { }
                node[below]  {$a=0, P^0_{0,1}$}
            }
            child {
                node[iatnode] {$s$} 
                    child {
                        node[endnotnecessary, label=right:
                            {$\dots$}] {}
                        edge from parent[notnecessary]
                        node[above] { }
                        node[below] { }
                    }
                    child {
                        node[end, label=right:
                            {$s=1$}] {}
                        edge from parent[lookahead]
                        node[above, fill=white] { }
                        node[below, fill=white] {$a, P^a_{s,1}$}
                    }
                    child {
                        node[endnotnecessary, label=right:
                            {$\dots$}] {}
                        edge from parent[notnecessary]
                        node[above] { }
                        node[below] { }
                    }
                    child {
                        node[endnotnecessary, label=right:
                            {$\dots$}] {}
                        edge from parent[notnecessary]
                        node[above] { }
                        node[below] { }
                    }
                edge from parent
                node[above] { }
                node[below]  {$a=1, P^1_{0,0}$}
            }
            child {
                node[iatnode] {$s$} 
                    child {
                        node[endnotnecessary, label=right:
                            {$\dots$}] {}
                        edge from parent[notnecessary]
                        node[above] { }
                        node[below] { }
                    }
                    child {
                        node[end, label=right:
                            {$s=1$}] {}
                        edge from parent[lookahead]
                        node[above, fill=white] { }
                        node[below, fill=white] {$a, P^a_{s,1}$}
                    }
                    child {
                        node[endnotnecessary, label=right:
                            {$\dots$}] {}
                        edge from parent[notnecessary]
                        node[above] { }
                        node[below] { }
                    }
                    child {
                        node[endnotnecessary, label=right:
                            {$\dots$}] {}
                        edge from parent[notnecessary]
                        node[above] { }
                        node[below] { }
                    }
                edge from parent
                node[above] { }
                node[below]  {$a=0, P^0_{0,0}$}
            }
            edge from parent 
            node[above] { }
            node[below]  {$a=0, P^0_{s_0, 0}$}
    };
\node[draw=RoyalBlue, dashed, thick,fit={(root-1-1) (root-4-4)},label=above:{$t$}] { }; 
\end{tikzpicture}
\end{figure}

The number of decision variables required to enumerate each arm's game tree is of order $O(|\mathcal{A}||\mathcal{S}|^T)$ and there are $N$ such trees, so even a linear program (LP) relaxation is not tractable for larger values of $T$ and $N$, which motivates us to propose \textsc{ProbFair} (Section \ref{sec:stationary-policy}) as an efficient alternative.  

\begin{example}
Suppose we wish to find the coefficient \(c'\) corresponding to \(x_{i,a=0,t=2}\). From Equation~\ref{eqn:IPcoeffs}, we have \(c' = \frac{1}{2^2}\sum_{s \in S} p(s_2 = s)p(s_{3} = 1 | x_{i,a=0,t=2}, s_2 = s)\). Equivalently, we sum the weight of each path from the root node to the highlighted end nodes in Figure~\ref{tikz:IPcoeffs} and normalize by \(\frac{1}{2^2}\):
\begin{align}
    c' &= \frac{1}{4}\left(P^0_{s_0,0}P^0_{0,0}P^0_{0,1}+ P^0_{s_0,0}P^1_{0,0}P^0_{0,1} + P^0_{s_0,0}P^0_{0,1}P^0_{1,1} + P^0_{s_0,0}P^1_{0,1}P^0_{1,1} \right.\\
    &+ P^1_{s_0,0}P^0_{0,0}P^0_{0,1}+ P^1_{s_0,0}P^1_{0,0}P^0_{0,1} + P^1_{s_0,0}P^0_{0,1}P^0_{1,1} + P^1_{s_0,0}P^1_{0,1}P^0_{1,1} \nonumber \\
    &+ \left. \dots\right) \quad \textrm{For each of the \(\left(\lvert \mathcal{A}\rvert \lvert\mathcal{S}\rvert\right)^t = 16\) paths to a blue node in Figure~\ref{tikz:IPcoeffs}.} \nonumber
\end{align}
\end{example}

\subsubsection{Comparison of \textsc{ProbFair} with the True Optimal Policy}
\label{sec:AppAddlExpIP}
In Section~\ref{sec:experimentalEvaluation}, we normalize intervention benefit with \textsc{Threshold Whittle}, which is asymptotically optimal for forward threshold-optimal transition matrices under a budget constraint \(k\)~\citep{mate2020collapsing}. However, with the integer program (IP) we formulate in Section~\ref{sec:ip}, we can find the \textit{optimal} policy for any set of transition matrices under budget \emph{and} fairness constraints as long as \(N\) and \(T\) are small. 

\begin{figure}[!h]
  \centering
  \includegraphics[width=0.9\linewidth]{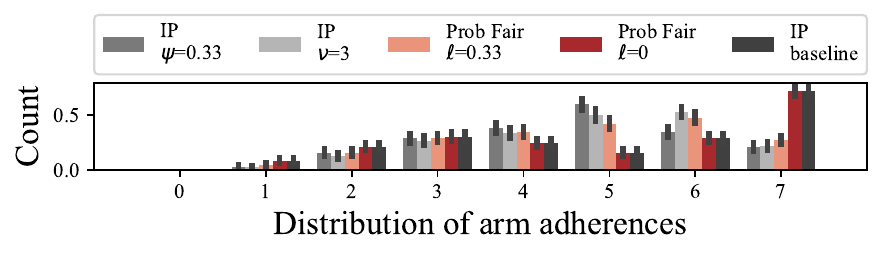}
  \caption{Adherences of \textsc{ProbFair}, compared to the IP formulation}
\label{fig:stationaryvsIP_adherences}
\end{figure}

We generate \(N=2\) random arms such that the structural constraints outlined in Section~\ref{sec:rmabModel} are satisfied. We set \(k=1\) and \(T=6\). Though the variance in reward is large due to the small \(T\), Figure~\ref{fig:stationaryvsIP_adherences} shows that \textsc{ProbFair} obtains 100\% of the intervention benefit when no fairness constraints are applied. Similarly, \textsc{ProbFair} with \(\ell = 0.33\) obtains the same adherence behavior as the IP policy with under hard fairness constraint \(\nu=3\) or minimum selection fraction constraint \(\psi = 0.33\). (within 95\% confidence interval shown).  All results shown are bootstrapped over 500 iterations.

\paragraph{Minimum Selection Fraction Constraints.}
\label{sec:AppMinSelFrac}
As we discuss in Appendix~\ref{sec:AppWhittleUnFair}, the optimal policy is often to pull the same \(k\) arms at every timestep and ignore all other arms. Under minimum selection fraction constraints (Equation~\ref{eqn:minSelFrac}), each arm must be pulled at least a minimum fraction \(\psi\) of \(T\) rounds, with no conditions on when these pulls should take place. We confirm with the optimal IP implementation our intuition that these additional pulls are allocated at the beginning or end of the simulation. That is, the optimal policy under minimum selection fraction constraints is to take advantage of the finite time horizon, which is not suitable for the applications we consider.

\clearpage

\section{\textsc{ProbFair}: a Probablistically Fair Policy}
\label{app:appProbFair}
Our main contribution is the novel probabilistic policy \textsc{ProbFair} (see Section~\ref{sec:algApproach}). Here, in Section~\ref{sec:appProbFairProofs},
we present complete proofs to Theorems \ref{thm:fcurvature}-\ref{thm:solveP2}. Then, in Section \ref{sec:AppProbFairSampling}, we provide additional details about how we sample from our probabilistic policy to select discrete actions at each timestep.

\subsection{Proofs}\label{sec:appProbFairProofs}
In this section, we provide proofs for the theorems introduced in Section~\ref{sec:algApproach}. When relevant, we begin by restating the theorem for convenience.

\fcurvature*

\begin{proof} For notational convenience, let: 
\begin{align*}
    c_1 &= P_{0,1}^0; \\
    c_2 &= P_{0,1}^1 - P_{0,1}^0; \\
    c_3 &= 1 - P_{1,1}^0 + P_{0,1}^0; \\
    c_4 &= P_{1,1}^0 - P_{1,1}^1 - P_{0,1}^0 + P_{0,1}^1.  
\end{align*}
Then, $f_i(p_i) = \frac{c_1 + c_2p_i}{c_3 + c_4p_i}$. We observe that $\forall i \in [N], f_i(p_i)$ is a valid probability since the term $1-(1-p_i)P_{1,1}^0-p_iP_{1,1}^1$ in the denominator is at least $1-(1-p_i) -p_i = 0$ for all $p_i \in [0,1]$. Then, there are three cases which describe the possible shapes of $f_i(p_i)$:\\
\begin{case}
 $c_4$ = 0. Here, $f_i(p_i)$ is \textbf{linear} and hence, \textbf{concave}. 
\end{case}

\begin{case}
$c_4 \neq 0; c_2 = 0$. Here, $f_i^{\dprime}(p_i) = \frac{2c_1c_4^2}{(c_3 + c_4p_i)^3} \geq 0$, so $f_i(p_i)$ is linear (hence \textbf{concave}) if $c_1 = 0$ or \textbf{strictly convex} (if $c_1 > 0$) in the domain $p_i \in [0,1]$. 
\end{case}

\begin{case}
$c_4 \neq 0; c_2 \neq 0$. Here, 
\begingroup
\setlength\abovedisplayskip{3pt}
\setlength\belowdisplayskip{3pt}
\begin{equation}
f_i(p_i) = \frac{\frac{c_2}{c_4}\left(\frac{c_1c_4}{c_2} + c_4p_i\right)}{c_3 + c_4p_i} 
        =  \frac{c_2}{c_4} + \frac{\left(c_1 - \frac{c_2c_3}{c_4}\right)}{c_3 + c_4p_i}.
\end{equation}
\endgroup
Thus, 
\begin{equation}
    f_i^{\dprime}(p_i) = \frac{2c_4^2 \left(c_1 - \frac{c_2c_3}{c_4}\right)}{(c_3 + c_4p_i)^3}
\end{equation}
The sign of \(f_i^{\dprime}(p_i)\) is the same as the sign of $d = c_1 - \frac{c_2c_3}{c_4}$. It follows that $f_i(p_i)$ is \textbf{strictly convex} if $c_1 >  \frac{c_2c_3}{c_4}$, and \textbf{concave} otherwise for $p_i \in [0,1]$.
\end{case}
\end{proof}

\fnondecr*

\begin{proof} For notational convenience, let: 
\begin{align*}
    c_1 &= P_{0,1}^0; \\
    c_2 &= P_{0,1}^1 - P_{0,1}^0; \\
    c_3 &= 1 - P_{1,1}^0 + P_{0,1}^0; \\
    c_4 &= P_{1,1}^0 - P_{1,1}^1 - P_{0,1}^0 + P_{0,1}^1.  
\end{align*}

Then \(f_i(p_i) = \frac{c_1+c_2p_i}{c_3+c_4p_i}\) and \(f_i^{\prime}(p_i) = \frac{c_2c_3 - c_1c_4}{(c_3 + c_4p_i)^2}\).\\

Observe \(c_2c_3 - c_1c_4  \geq 0\) implies \(f_i^{\prime}(p_i) \geq 0\).
\begin{align*}
c_2c_3 - c_1c_4 &\geq 0 \\
c_2c_3 &\geq c_1c_4\\
(P_{0,1}^1 - P_{0,1}^0)(1 - P_{1,1}^0 + P_{0,1}^0) &\geq P_{0,1}^0(P_{1,1}^0 - P_{1,1}^1 - P_{0,1}^0 + P_{0,1}^1)\\
(1-P_{1,1}^0)P_{0,1}^1 &\geq (1-P_{1,1}^1)P_{0,1}^0
\end{align*}
Per our structural constraints, $P_{1,1}^0 < P_{1,1}^1$ and $P_{0,1}^1 > P_{0,1}^0$.
\end{proof}

\Yoptonboundary*

\begin{proof}
Note by compactness that \textbf{P2} has not just a supremum, but an actual maximum solution. Suppose for contradiction there is some optimal solution $\vec{p}$ with distinct indices $i, j \in \mathcal{Y}$ such that $p_i, p_j \in (\ell, u)$. Now, suppose we perturb by an infinitesimal $\epsilon$ (of arbitrary sign but tiny positive absolute value) such that $p_i := p_i + \epsilon$ and $p_j := p_j - \epsilon$. This satisfies all our constraints for small-enough $|\epsilon|$. The change in the objective $\sum_{i \in \mathcal{Y}} f_i(p_i)$ is now $\epsilon \cdot (f_i'(p_i) - f_j'(p_j)) + O(\epsilon^2)$; hence, if $f_i'(p_i) - f_j'(p_j)$ is nonzero, then we can take a tiny $\epsilon$ of the appropriate sign to increase the objective, a contradiction. Therefore, $f_i'(p_i) - f_j'(p_j) = 0$, and so, we now focus on lower-order terms: the change in the objective $\sum_{i \in \mathcal{Y}} f_i(p_i)$ is now $(\epsilon^2/2) \cdot (f_i''(p_i) + f_j''(p_j)) + O(\epsilon^3)$. However, since $f_i$ and $f_j$ are strictly convex, we have that $f_i''(p_i) + f_j''(p_j) > 0$, and hence the objective increases regardless of the sign of (the tiny) $\epsilon$, again a contradiction. Thus we have our structural result. 
\end{proof}

\solveconvex*

We begin by introducing Lemma~\ref{lem:P2_p_prime_unique}, which we use in our proof of Theorem~\ref{thm:solveP2}:
\begin{lemma}\label{lem:P2_p_prime_unique}
For a given $\gamma$, $p^\prime \in (\ell, \uuu]$. 
\end{lemma}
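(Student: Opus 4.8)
The plan is to substitute the closed forms for $\gamma$ and $p'$ and reduce the claim to the trivial fact that a real number minus its floor lies in $[0,1)$. Write $n = |\mathcal{Y}|$ and $B = k - z$ for the budget left to P2 (recall that by Lemma~\ref{lem:P2_opt_on_boundary} the sought optimum of P2 has the three-tier form: $\gamma$ arms at $\ell$, one arm at the interior value $p'$, and $n-\gamma-1$ arms at $u$). Set $q = \frac{nu - B}{u - \ell}$, so that by definition $\gamma = \lfloor q \rfloor$ and hence $0 \le q - \gamma < 1$. Feasibility of P2 requires $n\ell \le B \le nu$; together with $u > \ell$ this forces $0 \le q \le n$, so in particular $\gamma \ge 0$, and in the generic case $B > n\ell$ (where $q < n$) we also get $\gamma \le n-1$, i.e.\ $n - \gamma - 1 \ge 0$, so the partition $(\mathcal{Y}_1,\mathcal{Y}_2,\mathcal{Y}_3)$ with cardinalities $\gamma, 1, n-\gamma-1$ is well defined.

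The second step is a short computation. Starting from $p' = B - \gamma\ell - (n - \gamma - 1)u$, group the $u$- and $\ell$-terms to get $p' = B - (n-1)u + \gamma(u-\ell)$, and then eliminate $B$ using $B = nu - q(u-\ell)$ (just the definition of $q$ rearranged). Everything collapses to the identity
\[
p' \;=\; u - (q - \gamma)(u - \ell).
\]
Since $u - \ell > 0$ and $0 \le q - \gamma < 1$, the subtracted quantity $(q-\gamma)(u-\ell)$ lies in $[0,\, u-\ell)$, whence $p' \le u$ (with equality exactly when $q \in \mathbb{Z}$) and $p' > u - (u-\ell) = \ell$. Thus $p' \in (\ell, u]$, as claimed.

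I do not expect any genuine obstacle; the only point that warrants a sentence is the degenerate boundary $B = n\ell$, where $q = n$, $\gamma = n$, and the nominal cardinality $n-\gamma-1 = -1$ is meaningless. There, however, $\sum_{i\in\mathcal{Y}} p_i = n\ell$ with each $p_i \ge \ell$ forces every $p_i = \ell$, so no arm takes an interior value and the statement is vacuous (reading the formula formally it still returns $p' = u \in (\ell,u]$, so nothing breaks). As a consistency check at the opposite extreme, $B = nu$ gives $q = 0$, $\gamma = 0$, and the identity returns $p' = u$, matching the all-$u$ assignment.
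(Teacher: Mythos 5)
Your proof is correct and follows essentially the same route as the paper's: both arguments rest on the defining property of the floor, $q-1 < \gamma \le q$ with $q = \frac{|\mathcal{Y}|u-(k-z)}{u-\ell}$, combined with the budget identity $k-z = \gamma\ell + (|\mathcal{Y}|-1-\gamma)u + p'$. The only difference is presentational — you collapse the two separate inequality chains in the paper into the single identity $p' = u - (q-\gamma)(u-\ell)$, which is a slightly cleaner packaging of the same computation (and your handling of the degenerate case $k-z = |\mathcal{Y}|\ell$ matches the edge case the paper treats separately in the proof of Theorem~\ref{thm:solveP2}).
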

\begin{proof}
To begin, observe that $z-k = \gamma \ell + (|\mathcal{Y}| - 1 - \gamma)\uuu + p^\prime$. Then, to prove the lower bound, observe that:
\begin{align*}
\gamma &= \left\lfloor \frac{|\mathcal{Y}|\uuu - (k-z)}{\uuu - \ell} \right\rfloor\\
\gamma &> \frac{|\mathcal{Y}|\uuu - (k-z)}{\uuu - \ell} -1 \\
\rightarrow \gamma &> \frac{|\mathcal{Y}|\uuu - (\gamma \ell + (|\mathcal{Y}| - 1 - \gamma)\uuu + p^\prime)}{\uuu - \ell} - 1\\
\gamma &> \frac{|\mathcal{Y}|\uuu - \gamma \ell - |\mathcal{Y}|\uuu + \uuu + \gamma \uuu -p^\prime - \uuu + \ell}{\uuu - \ell}\\
0 &> \ell - p^\prime \implies p^\prime > \ell 
\end{align*}
To prove the upper bound, observe that:
\begin{align*}
\gamma &= \left\lfloor \frac{|\mathcal{Y}|\uuu - (k-z)}{\uuu - \ell} \right\rfloor\\
\gamma &\leq \frac{|\mathcal{Y}|\uuu - (k-z)}{\uuu - \ell}\\
\rightarrow \gamma &\leq \frac{|\mathcal{Y}|\uuu - (\gamma \ell + (|\mathcal{Y}| - 1 - \gamma)\uuu + p^\prime)}{\uuu - \ell}\\
\gamma &\leq \frac{\gamma(\uuu - \ell) + \uuu - p^\prime}{\uuu-\ell}\\
0 &\leq \uuu - p^\prime \implies p^\prime \leq \uuu\\
\end{align*}
Thus, $\ell < p^\prime \leq \uuu$.
\end{proof}

Now, we proceed with our proof of \textbf{Theorem~\ref{thm:solveP2}}.

\begin{proof}
Recall \textbf{P2}: maximize $\sum_{i \in \mathcal{Y}} f_i(p_i)$ such that $p_i \in [\ell, \uuu]$ for all $i \in \mathcal{Y}$ and $\sum_{i \in \mathcal{Y}} p_i = k - z$. By \textbf{Lemma \ref{lem:P2_opt_on_boundary}}, there exists \emph{at most one arm} with optimal value \(p_i^*\in (\ell, u)\). 

First, we discuss an edge case. If \(k-z = \lvert \mathcal{Y}\rvert \ell\), Line 2 of Algorithm~\ref{alg:solveP2} assigns \(\gamma = \lvert\mathcal{Y}\rvert\), so Line 11 assigns 
\begin{align}
    \pi_{\mathcal{Y}} \coloneqq&~ i \mapsto \left.
  \begin{cases}
    \ell, & \text{for } i \in \mathcal{Y}_1 = \{\mathcal{Y}\} \\ 
    p^\prime, & \text{for } i \in \mathcal{Y}_2 = \emptyset \\
    \uuu, & \text{for } i \in \mathcal{Y}_3 = \emptyset\\
  \end{cases}
  \right\}
\end{align}
Thus Algorithm~\ref{alg:solveP2} returns the only valid solution to \textbf{P2} in this case, which is to set \(p_i = \ell\) for all arms \(i \in \mathcal{Y}\). 

For all other cases \(k-z > \lvert \mathcal{Y}\rvert \ell\), we introduce the following notation: let \(\mathcal{Y}_1\) be the set of arms for which \(p_i = \ell\), \(\mathcal{Y}_2\) be a set containing exactly one arm (WLOG \(j\)) where \(p_j = p'\in (\ell, u]\), and \(\mathcal{Y}_3\) be the remaining set of arms for which \(p_i = \uuu\), with $\bigcap_{x=1}^{3} \mathcal{Y}_{x} = \emptyset$. Then by \textbf{Lemma \ref{lem:P2_p_prime_unique}}, \(\gamma = \lvert \mathcal{Y}_1\rvert=\left\lfloor \frac{|\mathcal{Y}|\uuu - (k-z)}{\uuu - \ell} \right\rfloor\) and \(p' = k - z - \gamma\ell - (|\mathcal{Y}|-1-\gamma)\uuu \in (\ell, \uuu]\).

\textbf{P2} is then equivalent to finding a partition \(\mathcal{Y} \to \mathcal{Y}_1 \cup \mathcal{Y}_2 \cup \mathcal{Y}_3\) which maximizes the following: 
\begin{align}
    \argmax_{\left\{\mathcal{Y}_1, \mathcal{Y}_2, \mathcal{Y}_3\right\}} ~ &\sum_{i\in \mathcal{Y}_1} f_i(\ell) + f_{j}(p')+ \sum_{i'' \in \mathcal{Y}_3} f_{i''}(u) \nonumber \\
    \textrm{s.t.}~&\lvert \mathcal{Y}_1 \rvert = \gamma,~\mathcal{Y}_2 = \{j\}, \nonumber \\
    &\bigcap_{x=1}^{3} \mathcal{Y}_{x} = \emptyset  ,~\textrm{and}~\bigcup_{x=1}^3 \mathcal{Y}_{x} = \mathcal{Y} \label{eqn:p2_reorg}
\end{align}

Subtracting the constant \(\sum_{i\in \mathcal{Y}}f_i(\ell)\) and simplifying yields:
\begin{align}
    \argmax_{\left\{\mathcal{Y}_1, \mathcal{Y}_2, \mathcal{Y}_3\right\}} ~& f_{j}(p')-f_j(\ell)+ \sum_{i'' \in \mathcal{Y}_3} f_{i''}(u)-f_{i''}(\ell) \nonumber \\
    \textrm{s.t.}~&\lvert \mathcal{Y}_1 \rvert = \gamma,~\mathcal{Y}_2 = \{j\}, \nonumber \\
    &\bigcap_{x=1}^{3} \mathcal{Y}_{x} = \emptyset  ,~\textrm{and}~\bigcup_{x=1}^3 \mathcal{Y}_{x} = \mathcal{Y}
\end{align}

Suppose we sort arms in ascending order by \(f_i(u) - f_i(\ell)\). Let us create the set \(\mathcal{Y}_3'\) from the last \(\lvert \mathcal{Y}\rvert -\gamma -1\) arms. 

By monotonicity, for all \(i \in \mathcal{Y}_3'\) and \(j \not \in \mathcal{Y}_3'\),
\begin{align}
    f_j(p') - f_j(\ell) \leq f_i(u) - f_i(\ell)
\end{align}

Thus, setting \(\mathcal{Y}_3^* = \mathcal{Y}_3'\) reduces the optimization problem in Eq.~\ref{eqn:p2_reorg} to finding a partition over the remaining sets \(\mathcal{Y}_1\) and \(\mathcal{Y}_2\).
\begin{align}
    \argmax_{\left\{\mathcal{Y}_1, \mathcal{Y}_2\right\}} ~& f_{j}(p')-f_j(\ell) \nonumber \\
    \textrm{s.t.}~&\lvert \mathcal{Y}_1 \rvert = \gamma,~\mathcal{Y}_2 = \{j\}, \nonumber \\
    &\bigcap_{x=1}^{2} \mathcal{Y}_{x} = \emptyset  ,~\textrm{and}~\bigcup_{x=1}^2 \mathcal{Y}_{x} = \mathcal{Y}\setminus \mathcal{Y}_3^* \label{eqn:p2_finalstage}
\end{align}

Finally, we solve Equation~\ref{eqn:p2_finalstage} by finding the arm \(j\) with maximal value \(f_j(p') - f_j(\ell)\). Then \(\mathcal{Y}_2^* = {j}\), \(\mathcal{Y}_1^* = \mathcal{Y}\setminus \left(\mathcal{Y}_2 \bigcup\mathcal{Y}_3^*\right)\), and we are done.
\end{proof}

\complexityconvex*
At worst, Algorithm \ref{alg:solveP2} requires two sorts: once on Line 5, and a second time on Line 8, for a total computational cost of $O(2|\mathcal{Y}| \log |\mathcal{Y}|)$. In total, the computational cost of Algorithm \ref{alg:probfair} is at worst \(O\!\left(\frac{kN}{\epsilon^3}\right)\) when all \(N\) arms are in \(\mathcal{X}\). 

\subsection{Dependent Rounding Sampling Approach}
\label{sec:AppProbFairSampling}
Here we provide pseudocode for the sampling algorithm introduced in Section \ref{sec:sampling}, along with its associated \textsc{Simplify} subroutine~\citep{srinivasan2001distributions}.

\setlength{\textfloatsep}{0.1cm}
\setlength{\floatsep}{0.1cm}
\begin{algorithm}[H]
\caption{Sampling Subroutine (adapted from ~\citet{srinivasan2001distributions})}
\label{alg:simplify}
\begin{algorithmic}[1] 
\Procedure{Simplify}{$\alpha \in [0,1], \beta \in [0,1]$}\label{fig:alg2}
\If{$\alpha = \beta = 0$}
    \State{$p_i,  p_j \gets [0,0]$}
\ElsIf{$\alpha = \beta = 1$} 
    \State{$p_i,  p_j \gets [1,1]$}
\ElsIf{$\alpha + \beta = 1$} 
    \State{\texttt{flag} $\gets X \sim B(n=1, p=\alpha)$}
    \State{$p_i, p_j \gets [1,0]$ if \texttt{flag} else $[0,1]$}
\ElsIf{$0 < \alpha + \beta < 1$}
    \State{\texttt{flag} $\gets X \sim B\!\left(n=1, p=\frac{\alpha}{\alpha + \beta}\right)$}
    \State{$p_i, p_j \gets [\alpha + \beta,0]$ if \texttt{flag} else $[0,\alpha + \beta]$}
\ElsIf{$1 < \alpha + \beta < 2$}
     \State{\texttt{flag} $\gets X \sim B\!\left(n=1, p=\frac{1-\beta}{2- \alpha - \beta}\right)$}
     \State{$p_i, p_j \gets [1, \alpha + \beta -1]$ if \texttt{flag} else $[\alpha + \beta -1, 1]$}
\EndIf{}
\Return{$p_i, p_j$}
\EndProcedure
\end{algorithmic}
\end{algorithm}
\setlength{\textfloatsep}{0.1cm}
\setlength{\floatsep}{0.1cm}

\setlength{\textfloatsep}{0.1cm}
\setlength{\floatsep}{0.1cm}
\begin{algorithm}[H]
\caption{Sampling Algorithm (adapted from~\citet{srinivasan2001distributions})}
\begin{algorithmic}[1] 
\Procedure{Sample}{$G = (V,E)$}\label{fig:alg3}
\State{$H \gets G \setminus \{v | \exists e \in G \text{ s.t. } e_{\text{dst}} = v\}$} \Comment{\textcolor{RoyalBlue}{Select subgraph containing nodes without a parent}}
\If{$|H| = 1$}
\Return{$G$}\Comment{\textcolor{RoyalBlue}{$z_v \in \{0,1\} \forall v \in G; \sum_v z_v = k$}}
\ElsIf{$|H| \geq 2$}
\State{$A \subsetneq G \in {H \choose \lfloor{\frac{|H|}{2}\rfloor}}$}
\State{$B \gets G \setminus A$}
\State{\texttt{pairs} $\gets \{(a_i,b_i) \in A \times B | i \in \mathcal{I}\}$}
\State{$H^{\prime} \gets (V =\emptyset, E = \emptyset)$}
\For{$(v_i, v_j) \in \texttt{pairs}$}
\State{$H^{\prime} \gets H^{\prime} \cup \{v_i, v_j\}$}
\State{$H^{\prime} \gets H^{\prime} \cup \{v^\prime; e_{v^\prime,v_\alpha} | \alpha \in \{i,j\} \}$}
\State{$X_i, X_j \gets \Call{SIMPLIFY}{p_{v_i}, p_{v_j}}$} \Comment{\textcolor{RoyalBlue}{Defined in Algorithm~\ref{alg:simplify}}}
\State{$z_{v_i} \gets X_i$} \Comment{\textcolor{RoyalBlue}{If $X_i$ was fixed, $z_{v_i} \in \{0,1\}$}}
\State{$z_{v_j} \gets X_j$}\Comment{\textcolor{RoyalBlue}{If $X_j$ was fixed, $z_{v_j} \in \{0,1\}$}}
\If{$z_{v_i} \in \{0,1\}$}
\State{$p_{v^\prime} \gets X_j$} 
\Else{$p_{v^\prime} \gets X_i$}
\EndIf 
\EndFor
\State{$F \gets G \cup H^\prime$}\Comment{\textcolor{RoyalBlue}{$\forall v \in G \cap H^\prime$, update attribute values per $H^\prime$}}
\State\Return{\Call{SAMPLE}{F}}
\EndIf 
\EndProcedure
\end{algorithmic}
\end{algorithm}
\setlength{\textfloatsep}{0.1cm}
\setlength{\floatsep}{0.1cm}
\clearpage

\section{Additional Experimental Details}
In this section, we discuss additional details of our empirical study in Section~\ref{sec:experimentalEvaluation}. We provide a description and pseudocode of the heuristic policies (\ref{sec:AppHeuristics}), discuss our choice of fairness metric (\ref{sec:AppExpMetrics}), and provide additional details of our Synthetic dataset (\ref{sec:appSynthdata}).

Code and instructions needed to reproduce these experimental results are included at: \url{https://github.com/crherlihy/prob_fair_rmab}.
All results presented in this paper are bootstrapped over 100 simulation iterations, with a time horizon $T=180$, cohort size $N=100$, and budget $k=20$, unless otherwise noted. We utilize seeds to ensure reproducible variation for each randomized parameter, including actualized transitions in each simulation.
We have run simulations on an Intel(R) Core i7 CPU with 16Gb of RAM. Simulations are configurable via configuration files; runs are trivially parallelizable via these configuration files. 

\subsection{Heuristic Algorithms}
\label{sec:AppHeuristics}
In Section~\ref{sec:experimentalEvaluation}, three heuristics based on the \textsc{Threshold Whittle} algorithm are introduced: \textsc{H}$_{\textsc{First}}$ , \textsc{H}$_{\textsc{Last}}$ , and \textsc{H}$_{\textsc{Rand}}$  Here, we go into more detail and provide pseudocode.

\begin{definition}
\label{def:constPull}
{Within the context of Algorithm \ref{alg:heuristics}, we define a \emph{constrained pull} to be one that is executed to satisfy an integer periodicity constraint. Only arms that have not yet been pulled the required number of times within the \(\nu\)-length interval are available; other arms are excluded from consideration, unless \emph{all} arms have already satisfied their constraints. In this case, all arms are available to be pulled.} 
\end{definition}

If a pull is not constrained, we say it is \emph{unconstrained} or \emph{residual}. 

The \textsc{H}$_{\textsc{First}}$  heuristic requires that all constrained pulls must occur at the start of the interval. This implies that the first $N/k$ timesteps in each interval are dedicated to pulling all $N$ arms.

The \textsc{H}$_{\textsc{Last}}$  heuristic requires that all constrained pulls must occur at the end of the interval. Unlike the \textsc{H}$_{\textsc{First}}$ heuristic, not all arms will necessarily be pulled in the last $N/k$ timesteps, as some arms will have already satisfied their constraint earlier in the interval 
via unconstrained pull(s). These leftover constrained pulls function as unconstrained pulls, per Definition~\ref{def:constPull}.

The \textsc{H}$_{\textsc{Rand}}$  heuristic chooses random positions within the interval for constrained pulls to occur. Similarly to the \textsc{H}$_{\textsc{Last}}$  heuristic, some of the later constrained pulls may become unconstrained pulls if all arms have already satisfied their constraint earlier in the interval.

\begin{algorithm}[H]
\caption{Periodicity Constraint-Enforcing Heuristic Based on \textsc{Threshold Whittle}}
\label{alg:heuristics}
\begin{algorithmic}[1]
\Procedure{Simulation}{$A$, $T$, $\nu$, $k$}
\For{interval $\in [0, T]$ with step size $\nu$}
    \State $C_{\text{interval}} \gets \emptyset$ \Comment{\textcolor{RoyalBlue}{$C_{\text{interval}} \coloneqq$ arm(s) with constraint satisfied during the interval}}
\EndFor\\
\For{$a \in A$}
    \State $a.\text{last observed state} \gets 1$
    \State $a.\text{time since pull} \gets 1$
\EndFor \\
\For{$t \in T$}
    \State{$i \gets$ \texttt{GetInterval}($t$)}\\
    \If{$t$ is a constrained pull $\land \ C_i \subsetneq A$}
    
    \State $A^{\prime} \gets \{a | a \in A \setminus C_{i}\}$ \Comment{\textcolor{RoyalBlue}{Consider arms with constraint not yet satisfied in interval}}
\ElsIf{$t$ is a residual pull $\lor \ C_i = A$}
    \State $A^{\prime} \gets A$ \Comment{\textcolor{RoyalBlue}{Consider all arms}}
\EndIf\\
\State $A^{\prime}_k \gets$ \texttt{SelectTopK}($A^{\prime}$, $k$, $t$) \Comment{\textcolor{RoyalBlue}{Select $k$ arms with highest Whittle index}}
\State $C_i \gets C_i \cup A^{\prime}_k$\\
\For{$a \in A$}
\State $s_{t+1}(a) \gets$ \texttt{UpdateState}($a$) \Comment{\textcolor{RoyalBlue}{Update each arm's state using belief}}
\EndFor
\Return 
\EndFor 
\EndProcedure
\label{fig:alg1}
\end{algorithmic}
\end{algorithm}

\subsection{Fairness Metric Choices}
\label{sec:AppExpMetrics}
It is not immediately obvious which evaluation metric(s) best indicate whether we have improved distributive fairness. While constraint satisfaction itself is a logical candidate, it is Boolean-valued at the arm level, and thus does not reflect \textit{to what extent} a policy fairly allocates pulls. Even if we were to report population-level constraint satisfaction (i.e., by noting the proportion of arms for which a given fairness constraint is satisfied, either over the course of a single simulation, or in expectation over a set of simulation iterations), this would be tautologically biased in favor of \textsc{ProbFair} and the \textsc{Threshold Whittle}-based heuristics, which explicitly encode constraint satisfaction. This observation motivates us to consider proxy metrics, including the price of fairness (PoF), the Herfindahl–Hirschman Index (HHI), and the earth mover's distance (EMD).

\paragraph{Price of Fairness.} Consider \textit{price of fairness}, defined formally as:
\begin{equation}\label{eq:pof}
    \textsc{PoF}_{\text{TW}}(\text{ALG}) \coloneqq \frac{\mathbb{E}_\text{TW}[R(\cdot)] - \mathbb{E}_\text{ALG}[R(\cdot)]}{\mathbb{E}_\text{TW}[R(\cdot)]}
\end{equation}
Price of fairness is the relative loss in total expected reward associated with following a distributive fairness-enforcing policy, as compared to \textsc{Threshold Whittle}~\citep{bertsimas2011price}. A small loss (\(\sim0\%\)) indicates that fairness has a small impact on total expected reward; conversely, a large loss means total expected reward is sacrificed in order to satisfy the fairness constraints.

\setcounter{thm}{8}
\begin{lemma}\label{thm:IBcorrPoF}

Price of fairness is inversely proportional to intervention benefit.
\end{lemma}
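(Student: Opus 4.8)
The plan is to read the claim as an identity between two affine functions of the single quantity $\mathbb{E}_{\text{ALG}}[R(\cdot)]$, and then to eliminate that quantity algebraically. First I would fix notation by writing $Z \coloneqq \mathbb{E}_{\text{NoAct}}[R(\cdot)]$, $W \coloneqq \mathbb{E}_{\text{TW}}[R(\cdot)]$, and $A \coloneqq \mathbb{E}_{\text{ALG}}[R(\cdot)]$. The key observation is that $Z$ and $W$ are fixed reference quantities that do not depend on the policy ALG, whereas both $\textsc{IB}_{\text{NoAct},\text{TW}}(\text{ALG})$ and $\textsc{PoF}_{\text{TW}}(\text{ALG})$ depend on ALG \emph{only} through $A$. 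Under the standing assumptions (non-decreasing $r$ together with the structural constraints making a pull helpful), we have $W > Z > 0$, so the denominators in the definitions \eqref{eq:ib} and \eqref{eq:pof} are strictly positive and both metrics are well defined.

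Next I would solve the definition of intervention benefit for $A$: from $\textsc{IB}_{\text{NoAct},\text{TW}}(\text{ALG}) = (A - Z)/(W - Z)$ we obtain $A = Z + \textsc{IB}_{\text{NoAct},\text{TW}}(\text{ALG})\,(W - Z)$. Substituting this expression into $\textsc{PoF}_{\text{TW}}(\text{ALG}) = (W - A)/W = 1 - A/W$ and collecting terms gives
\[
\textsc{PoF}_{\text{TW}}(\text{ALG}) \;=\; 1 - \frac{Z + \textsc{IB}_{\text{NoAct},\text{TW}}(\text{ALG})\,(W - Z)}{W} \;=\; \frac{W - Z}{W}\Bigl(1 - \textsc{IB}_{\text{NoAct},\text{TW}}(\text{ALG})\Bigr),
\]
or equivalently $1 - \textsc{IB}_{\text{NoAct},\text{TW}}(\text{ALG}) = \tfrac{W}{W - Z}\,\textsc{PoF}_{\text{TW}}(\text{ALG})$. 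Thus the \emph{complement} of intervention benefit is directly proportional to price of fairness, with a positive proportionality constant $W/(W-Z)$ that is independent of ALG; equivalently, price of fairness is a strictly decreasing affine function of intervention benefit. I would close with the two consistency checks: taking $\text{ALG} = \text{TW}$ gives $\textsc{IB} = 1$ and $\textsc{PoF} = 0$, while $\text{ALG} = \text{NoAct}$ gives $\textsc{IB} = 0$ and $\textsc{PoF} = (W - Z)/W$, its largest attainable value.

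There is no genuinely hard step here — once the shared variable $A$ is isolated, the result is a one-line substitution. The only point requiring care is interpretive rather than technical: ``inversely proportional'' must be understood as ``$1 - \textsc{IB}$ is proportional to $\textsc{PoF}$'' (a strictly decreasing affine relationship), not as $\textsc{PoF} \propto 1/\textsc{IB}$. I would state this reading explicitly so the lemma is not over-claimed, and I would spell out the hypotheses $W > Z > 0$ so that every division above is justified.
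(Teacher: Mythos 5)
Your proposal is correct and follows essentially the same route as the paper: both eliminate the shared quantity $\mathbb{E}_{\text{ALG}}[R(\cdot)]$ and exhibit a policy-independent constant relating the two metrics, yielding $1-\textsc{IB} \propto \textsc{PoF}$ (the paper picks $\alpha = -z/(y-z)$, which is exactly your $W/(W-Z)$ up to sign conventions). Your explicit caveat that ``inversely proportional'' must be read as a decreasing affine relationship rather than $\textsc{PoF}\propto 1/\textsc{IB}$ is a welcome clarification the paper's own proof glosses over.
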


\begin{proof}
The statement in Lemma \ref{thm:IBcorrPoF} is equivalent to the statement ``Given \(y,z>0\), there exists \(\alpha \in \mathbb{R}\) such that \(\frac{x-z}{y-z} = \alpha \frac{z-x}{z}\) for all \(x>0\)". Here \(x=\mathbb{E}_\text{ALG}[R(\cdot)]\), \(y=\mathbb{E}_\text{NoAct}[R(\cdot)]\), and \(z=\mathbb{E}_\text{TW}[R(\cdot)]\). Consider \(\alpha = \frac{-z}{y-z}\). Then \(\alpha \frac{z-x}{z} = \frac{x-z}{y-z}\). Thus, for any algorithm ALG, \(\textsc{PoF}_{\text{TW}}(\text{ALG}) \propto \textsc{IB}_{\text{NoAct}, \text{TW}}(\text{ALG})^{-1}\).
\end{proof}

\paragraph{Herfindahl–Hirschman Index (HHI).}
\label{sec:AppMetricsHHI}
The Herfindahl–Hirschman Index (HHI)~\citep{rhoades1993herfindahl}, is a statistical measure of concentration useful for measuring the extent to which a small set of arms receive a large proportion of attention due to an unequal distribution of scarce pulls~\citep{hirschman1980national}. It is defined as:
\begin{equation}\label{eq:hhi}
    \textsc{HHI}(\text{ALG}) \coloneqq \sum_{i=1}^N \left(\frac{1}{kT} \sum_{t=1}^T a_t^i\right)^2
\end{equation}
HHI ranges from \(1/N\) to 1; higher values indicate that pulls are concentrated on a small subset of arms. However, HHI is an imperfect evaluation metric for addressing our prioritarian concern for arms that would be \textit{deprived} of algorithmic attention (i.e., fail to receive any pulls) under \textsc{Threshold Whittle} (see Appendix~\ref{sec:AppWhittleUnFair}). Since entries are squared, reducing $\uuu$ offers a more direct path to lowering HHI than increasing $\ell$. However, reducing $\uuu$ will not accomplish our stated goal of guaranteeing each arm a strictly positive lower bound on the probability that it will receive a pull at any given timestep. 

\paragraph{Earth Mover's Distance}
The earth mover's distance (EMD), or Wasserstein metric, is a measure of distance between two distributions. Specifically, we measure the distance of an algorithm's distribution of cumulative pull allocations to a fair reference distribution, \textsc{Round-Robin}. Though differences in distances are meaningful, EMD does not directly map to our fairness desiderata. That is, a given level of fairness enforcement (e.g., as characterized by the hyperparameters \(\ell\) or \(\nu\)) is not associated with a specific range of EMD values. Hence, our discussion of (normalized) earth mover's distances in Section~\ref{sec:experimentalEvaluation} focuses on relative differences between policies.

\subsection{Synthetic Dataset}
\label{sec:appSynthdata}

\begin{conjecture}
The set of forward (reverse) threshold-optimal arms are a subset of the set of concave (strictly convex) arms for the local reward function we consider, \(r(s) = s\).
\end{conjecture}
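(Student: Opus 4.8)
The plan is to collapse both dichotomies to the sign of a single scalar built from the arm's transition matrix, and then to connect the two. Working in the notation of the proofs of Theorem~\ref{thm:fcurvature} and Theorem~\ref{thm:fnondecr}, write
\[
c_4 \;=\; P_{1,1}^0 - P_{1,1}^1 - P_{0,1}^0 + P_{0,1}^1 \;=\; (P_{0,1}^1 - P_{0,1}^0) - (P_{1,1}^1 - P_{1,1}^0),
\]
so $c_4$ is the marginal benefit of a pull from the bad state minus the marginal benefit of a pull from the good state. Those proofs give $f_i'(p_i) = (c_2 c_3 - c_1 c_4)/(c_3 + c_4 p_i)^2$ with $c_2 c_3 - c_1 c_4 > 0$ under the structural and positivity assumptions, and $f_i''(p_i) = 2 c_4^2 (c_1 - c_2 c_3/c_4)/(c_3 + c_4 p_i)^3$ with positive denominator, whence $\operatorname{sign}(f_i'') = \operatorname{sign}\!\big((c_1 c_4 - c_2 c_3)/c_4\big) = -\operatorname{sign}(c_4)$. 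Therefore $f_i$ is \emph{concave} iff $c_4 \ge 0$ and \emph{strictly convex} iff $c_4 < 0$, and the conjecture is equivalent to the two implications ``forward threshold-optimal $\Rightarrow c_4 \ge 0$'' and ``reverse threshold-optimal $\Rightarrow c_4 < 0$''.

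Next I would pass to the decoupled subsidised arm MDP (Eq.~\eqref{eqn:valueFunc}) with $r(b) = b$ and study the \emph{advantage of passivity}
\[
D_m(b) \;:=\; Q_m^{\mathrm{pass}}(b) - Q_m^{\mathrm{act}}(b) \;=\; m + \beta V_m(\tau(b)) - \beta\big(b\, V_m(P_{1,1}^1) + (1-b)\, V_m(P_{0,1}^1)\big),
\]
where $\tau(b) = P_{0,1}^0 + b\,(P_{1,1}^0 - P_{0,1}^0)$ is the passive belief update, whose unique fixed point $\omega = P_{0,1}^0/(1 - P_{1,1}^0 + P_{0,1}^0)$ lies in $(P_{0,1}^0, P_{1,1}^0)$. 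An arm is forward (reverse) threshold-optimal exactly when, for every relevant subsidy $m$, the pull-optimal set $\{b : D_m(b) \le 0\}$ is a lower (upper) sub-interval of the reachable belief set --- in particular this holds when $D_m$ is non-decreasing (non-increasing). Since $V_m$ is the value function of a two-state POMDP with reward linear in the belief, $V_m$ is convex and non-decreasing; writing $V_m(P_{1,1}^1) - V_m(P_{0,1}^1) = (P_{1,1}^1 - P_{0,1}^1)\, V_m'(\xi_m)$ for some $\xi_m \in (P_{0,1}^1, P_{1,1}^1)$ by the mean value theorem gives $D_m'(b) = \beta\big((P_{1,1}^0 - P_{0,1}^0)\, V_m'(\tau(b)) - (P_{1,1}^1 - P_{0,1}^1)\, V_m'(\xi_m)\big)$. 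The target is then: if $c_4 < 0$ (so $P_{1,1}^1 - P_{0,1}^1 > P_{1,1}^0 - P_{0,1}^0$) show $D_m' \le 0$ on the reachable set, ruling out a forward threshold; dually rule out a strict reverse threshold when $c_4 \ge 0$.

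The main obstacle is this last step. Although $V_m'$ is monotone by convexity of $V_m$, $D_m'(b)$ compares $V_m'$ at the contracted belief $\tau(b)$ with $V_m'$ at $\xi_m \in (P_{0,1}^1, P_{1,1}^1)$, and these two arguments are not easily ordered: $\tau(\cdot)$ pulls every reachable belief toward $\omega$, whereas $\xi_m$ floats between the post-pull success probabilities and shifts with $m$ through $V_m$ itself. I therefore see two realistic routes rather than a one-line argument: (i) invoke the sufficient conditions for forward/reverse threshold optimality already established by \citet{mate2020collapsing} (and by \citet{mate2021risk-aware} for general non-decreasing $r$) and verify that each reduces to, or entails, the matching sign of $c_4 = (P_{1,1}^0 + P_{0,1}^1) - (P_{1,1}^1 + P_{0,1}^0)$, turning the conjecture into an algebraic check; or (ii) redo the belief-chain / value-iteration argument directly, propagating the inequality $c_4 < 0$ through the Bellman update to show that the advantage of \emph{action}, $-D_m$, is non-decreasing on the reachable set, so a reverse threshold is forced. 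Finally, I would stress that equality is not expected and the subset phrasing is the right one: an arm with $c_4 \ge 0$ need not itself be forward threshold-optimal (its optimal decoupled policy may be non-threshold), and any complete proof must treat threshold optimality over the reachable belief set rather than over all of $[0,1]$.
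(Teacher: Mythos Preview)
The paper does not prove this statement --- it is explicitly left as a \emph{conjecture}. The surrounding text only juxtaposes the threshold-optimality inequalities quoted from Mate et al.\ against the concavity/strict-convexity conditions and writes ``We conjecture that these conditions necessarily imply the conditions for concavity.''

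Your first paragraph is correct and, under the paper's framing, already completes the argument. You show that $f_i$ is concave iff $c_4 \ge 0$ and strictly convex iff $c_4 < 0$; this is right, using $c_2c_3 - c_1c_4 > 0$ strictly under the structural constraints. The paper then records Mate et al.'s conditions as $P_{1,1}^0 - P_{0,1}^0 \ge P_{1,1}^1 - P_{0,1}^1$ (forward) and the opposite inequality (reverse), which are literally $c_4 \ge 0$ and $c_4 \le 0$. So your route~(i) is a one-line check, not a program: taking the paper at its word that these inequalities ``determine'' threshold optimality, one has forward $\Leftrightarrow c_4 \ge 0 \Leftrightarrow$ concave, and reverse $\Leftrightarrow c_4 \le 0$, which coincides with strictly convex except at the boundary $c_4 = 0$, where the arm is linear and satisfies both Mate inequalities. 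That boundary case is the only place the reverse inclusion fails as literally stated.

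Your second and third paragraphs overshoot what the paper attempts. The paper asks only whether Mate's inequalities entail the curvature conditions; you are instead trying to re-derive \emph{necessity} of those inequalities for threshold optimality directly from the subsidised Bellman equations. That is a harder and logically separate question, and you correctly flag the obstruction --- the two arguments of $V_m'$ appearing in $D_m'(b)$ are not obviously ordered. If the goal is to address the conjecture as the paper poses it, drop the value-function machinery and simply execute route~(i); the MDP analysis would be new content beyond anything the paper claims or proves.
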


\citet{mate2021risk-aware} provide conditions for threshold optimality. First, the arm must satisfy the structural constraints (Section~\ref{sec:rmabModel}) and the \textit{indexability} condition \(P_{1,1}^0 - P_{0,1}^0 + P_{1,1}^1 - P_{0,1}^1 \leq 1\).
Then, the following inequalities determine forward (reverse) threshold optimality:
\begin{equation}
    \begin{dcases}
     P_{1,1}^0 - P_{0,1}^0 \geq  P_{1,1}^1 - P_{0,1}^1 & \textrm{\textit{forward threshold-optimal}} \\
     P_{1,1}^0 - P_{0,1}^0 \leq  P_{1,1}^1 - P_{0,1}^1 & \textrm{\textit{reverse threshold-optimal}} \\
    \end{dcases}
\end{equation}

We conjecture that these conditions necessarily imply the conditions for concavity, repeated here for convenience:
\begin{equation}
    \begin{dcases}
    P_{0,1}^0 \leq \frac{\left(P_{0,1}^1 - P_{0,1}^0\right)\left(1 - P_{1,1}^0 + P_{0,1}^0\right)}{P_{1,1}^0 - P_{1,1}^1 - P_{0,1}^0 + P_{0,1}^1} &\textrm{\textit{concave}} \\
    P_{0,1}^0 > \frac{\left(P_{0,1}^1 - P_{0,1}^0\right)\left(1 - P_{1,1}^0 + P_{0,1}^0\right)}{P_{1,1}^0 - P_{1,1}^1 - P_{0,1}^0 + P_{0,1}^1} &\textrm{\textit{strictly convex}}
    \end{dcases}
\end{equation}

\end{document}